\newcommand{\C}[0]{\mathbb{C}}
\newcommand{\E}[0]{\mathbb{E}}
\newcommand{\sfF}[0]{\mathsf{F}}
\newcommand{\N}[0]{\mathbb{N}}
\newcommand{\Pj}[0]{\mathbb{P}}
\newcommand{\R}[0]{\mathbb{R}}
\newcommand{\bS}[0]{\mathbb{S}}
\newcommand{\sT}[0]{\mathsf{T}}
\newcommand{\T}[0]{\mathbb{T}}
\newcommand{\Z}[0]{\mathbb{Z}}
\newcommand{\one}[0]{\mathbbm{1}}
\newcommand{\de}[0]{\delta}
\newcommand{\De}[0]{\Delta}
\newcommand{\ep}[0]{\varepsilon}
\newcommand{\la}[0]{\lambda}
\newcommand{\rh}[0]{\rho}
\newcommand{\Te}[0]{\Theta}
\newcommand{\om}[0]{\omega}
\newcommand{\Om}[0]{\Omega}
\newcommand{\si}[0]{\sigma}
\newcommand{\Si}[0]{\Sigma}
\newcommand{\ot}[0]{\otimes}
\newcommand{\subeq}[0]{\subseteq}
\newcommand{\iy}[0]{\infty}
\newcommand{\rc}[1]{\frac{1}{#1}}
\newcommand{\prc}[1]{\pa{\rc{#1}}}
\newcommand{\fc}[2]{\frac{#1}{#2}}
\newcommand{\sfc}[2]{\sqrt{\frac{#1}{#2}}}
\newcommand{\pf}[2]{\pa{\frac{#1}{#2}}}%Shortcut for fraction with parentheses
\newcommand{\ab}[1]{\left| {#1} \right|}
\newcommand{\an}[1]{\left\langle {#1}\right\rangle}
\newcommand{\ba}[1]{\left[ {#1} \right]}
\newcommand{\bc}[1]{\left\{ {#1} \right\}}
\newcommand{\ce}[1]{\left\lceil {#1}\right\rceil}
\newcommand{\pa}[1]{\left( {#1} \right)}
\newcommand{\ve}[1]{\left\Vert {#1}\right\Vert}
\newcommand{\ved}[0]{\ve{\cdot}}
\newcommand{\set}[2]{\left\{{#1}:{#2}\right\}}
\newcommand{\ub}[2]{\underbrace{#1}_{#2}}
\newcommand{\wt}[1]{\widetilde{#1}}
\newcommand{\wh}[1]{\widehat{#1}}
\newcommand{\amin}{\operatorname{argmin}}
\newcommand{\Hankel}{\operatorname{Hankel}}
\newcommand{\poly}{\operatorname{poly}}
\newcommand{\rank}{\operatorname{rank}}
\newcommand{\Supp}{\operatorname{Supp}}
\newcommand{\Toep}{\operatorname{Toep}}
\providecommand{\cal}[1]{\mathcal{#1}}
\renewcommand{\cal}[1]{\mathcal{#1}}
\newcommand{\pull}[9]{
#1\ar@/_/[ddr]_{#2} \ar@{.>}[rd]^{#3} \ar@/^/[rrd]^{#4} & &\\
& #5\ar[r]^{#6}\ar[d]^{#8} &#7\ar[d]^{#9} \\}
\newcommand{\cmp}[9]{
\xymatrix{
#1 \ar[r]^{#4}{#5} \ar@/_2pc/[rr]^{#8}_{#9} & #2 \ar[r]^{#6}_{#7} & #3
}
}
\newcommand{\ha}[1]{\ar@{^(->}[#1]}
\newcommand{\ls}[1]{\ar@{-}[#1]}
\newcommand{\sj}[1]{\ar@{->>}[#1]}
\newcommand{\aq}[1]{\ar@{=}[#1]}
\newcommand{\acir}[1]{\ar@{}[#1]|-{\textstyle{\circlearrowright}}}
\newcommand{\acil}[1]{\ar@{}[#1]|-{\textstyle{\circlearrowleft}}}
\newcommand{\ard}[1]{\ar@{.>}[#1]}
\newcommand{\mt}[1]{\ar@{|->}[#1]}
\newcommand{\inm}[1]{\ar@{}[#1]|-{\in}}
\newcommand{\inr}{\ar@{}[d]|-{\rotatebox[origin=c]{-90}{$\in$}}}
\newcommand{\inl}{\ar@{}[u]|-{\rotatebox[origin=c]{90}{$\in$}}}
\newcommand{\sumr}[2]{\sum_{\scriptsize \begin{array}{c}{#1}\\{#2}\end{array}}}%sum with 2 rows
\newcommand{\sumo}[2]{\sum_{#1=1}^{#2}}
\newcommand{\sumz}[2]{\sum_{#1=0}^{#2}}
\newcommand{\coltwo}[2]{
\begin{pmatrix}
{#1}\\
{#2}
\end{pmatrix}}
\newcommand{\colthree}[3]{
\begin{pmatrix}
{#1}\\
{#2}\\
{#3}
\end{pmatrix}
}
\newcommand{\matt}[4]{
\begin{pmatrix}
{#1}&{#2}\\
{#3}&{#4}
\end{pmatrix}
}
\newcommand{\beq}[1]{\begin{equation}\llabel{#1}}
\newcommand{\eeq}[0]{\end{equation}}
\newcommand{\bal}[0]{\begin{align*}}
\newcommand{\eal}[0]{\end{align*}}%this doesn't work; i don't know why
\newcommand{\ban}[0]{\begin{align}}
\newcommand{\ean}[0]{\end{align}}
\newcommand{\redd}[1]{{\color{red}#1}}
\newcommand{\llabel}[1]{\label{#1}\text{\fixme{\tiny#1}}}
\newcommand{\arxiv}[1]{\url{http://www.arxiv.org/abs/#1}}
\newcommand{\vocab}[1]{\textbf{#1}} %also index automatically.
\DeclareFontFamily{U}{wncy}{}
    \DeclareFontShape{U}{wncy}{m}{n}{<->wncyr10}{}
    \DeclareSymbolFont{mcy}{U}{wncy}{m}{n}
    \DeclareMathSymbol{\Sh}{\mathord}{mcy}{"58} 
\algnewcommand\algorithmicinput{\textbf{Input: }}
\algnewcommand\INPUT{\State\algorithmicinput}
\algnewcommand\algorithmicinitialize{\textbf{Initialize: }}
\algnewcommand\INIT{\State\algorithmicinitialize}
\algnewcommand\algorithmicrun{\textbf{Run: }}
\algnewcommand\RUN{\State\algorithmicrun}
\algnewcommand\algorithmicupdate{\textbf{Update: }}
\algnewcommand\UPDATE{\State\algorithmicupdate}
\algnewcommand\algorithmicset{\textbf{Set: }}
\algnewcommand\SET{\State\algorithmicset}
\algnewcommand\algorithmicquery{\textbf{Query: }}
\algnewcommand\QUERY{\State\algorithmicquery}
\algnewcommand\algorithmicoutput{\textbf{Output: }}
\algnewcommand\OUTPUT{\State\algorithmicoutput}
\newtheorem{thm}{Theorem}[section]
\newtheorem*{thm*}{Theorem}
\newtheorem{prb}[thm]{Problem}
\newtheorem*{prb*}{Problem}
\newtheorem*{ax*}{Axiom}
\newtheorem*{clm*}{Claim}
\newtheorem*{conj*}{Conjecture}
\newtheorem{cor}[thm]{Corollary}
\newtheorem{df}[thm]{Definition}
\newtheorem*{df*}{Definition}
\newtheorem*{ex*}{Example}
\newtheorem{lem}[thm]{Lemma}
\newtheorem*{lem*}{Lemma}
\newtheorem*{pos*}{Postulate}
\newtheorem*{pr*}{Proposition}
\newtheorem*{qu*}{Question}
\newtheorem{rem}{Remark}
\newtheorem*{rem*}{Remark}
\newcommand{\citep}[1]{\cite{#1}}
\newcommand{\KF}[0]{\textup{KF}}
\renewcommand{\llabel}[1]{\text{\tiny\redd{#1}}\label{#1}}
\newcommand{\const}[0]{K}
\newcommand{\Tmin}[0]{\const_1Ld_u\log \pf{Ld_u}\de}
\newcommand{\Ksg}[0]{K_{u}}
\begin{document}

\title{Improved rates for prediction and identification of\\ partially observed linear dynamical systems}

%\author[1]{Holden Lee}
%\affil[1]{Duke University, Mathematics Department} %\authorcr
%  \tt holee@math.duke.edu}
\author{Holden Lee\footnote{Duke University, Mathematics Department. \texttt{holden.lee@duke.edu}}}

\date{\today}
\maketitle

%\tableofcontents

\begin{abstract}
Identification of a linear time-invariant dynamical system from partial observations is a fundamental problem in control theory. Particularly challenging are systems exhibiting long-term memory. A natural question is how learn such systems with non-asymptotic statistical rates depending on the inherent dimensionality (order) $d$ of the system, rather than on the possibly much larger memory length. We propose an algorithm that given a single trajectory of length $T$ with gaussian observation noise, learns the system with a near-optimal rate of $\widetilde O\left(\sqrt\frac{d}{T}\right)$ in $\mathcal{H}_2$ error, with only logarithmic, rather than polynomial dependence on memory length. We also give bounds under process noise and improved bounds for learning a realization of the system. Our algorithm is based on multi-scale low-rank approximation: SVD applied to Hankel matrices of geometrically increasing sizes. Our analysis relies on careful application of concentration bounds on the Fourier domain---we give sharper concentration bounds for sample covariance of correlated inputs and for $\mathcal H_\infty$ norm estimation, which may be of independent interest.
\end{abstract}
%Our algorithm is based on
% We give an algorithm based on low-rank approximation of Hankel matrices of geometrically increasing sizes, and prove guarantees in $\mathcal{H}_2$ error for the learned system in the setting of gaussian observation and process noise. The statistical rates are optimal up to logarithmic factors when process noise is absent. In contrast to previous approaches, our rates depend only on the system order, rather than in the sufficient rollout length or on $\frac1{1-\rho(A)}$, where $\rho(A)$ is the spectral radius of the dynamics matrix.

\section{Introduction}

We consider the problem of prediction and identification of an \emph{unknown} partially-observed linear time-invariant (LTI) dynamical system with stochastic noise,
\begin{align}
\label{e:lds1}
x(t)&= A x(t-1) + B u(t-1)+ \xi(t) \\
\label{e:lds2}
y(t) & = C x(t) + D u(t) + \eta(t) ,
\end{align}
with a single trajectory of length $T$, given access only to input and output data. 
Here, $u(t)\in \R^{d_u}$ are inputs, $x(t)\in \R^d$ are the hidden states, $y(t)\in \R^{d_y}$ are observations (or outputs), %$\xi(t),\eta(t)$ are independent mean-zero noise terms, 
$\xi(t)\sim N(0,\Si_x)$ and $\eta(t) \sim N(0,\Si_y)$ are iid gaussian noise,
and $A\in \R^{d\times d}, B\in \R^{d\times d_u} , C\in \R^{d_y\times d}, D\in \R^{d_y\times d_u}$ are matrices. %We will consider the case where $\xi(t)\sim N(0,\Si_x)$ and $\eta(t) \sim N(0,\Si_y)$ are iid gaussian.
Partial observability refers to the fact that we do not observe the state $x(t)$, but rather a noisy linear observation $y(t)$.

%\fixme{Fix paragraph.}
As a simple and tractable family of dynamical systems, LTI systems are a central object of study for control theory and time series analysis. The problem of prediction and filtering for a known system dates back to~\cite{kalman1960new}.
However, in many machine learning applications, the system is \emph{unknown} and must be learned from input and output data. Identification of an unknown system is often a necessary first step for robust control~\citep{dean2019sample,boczar2018finite}. In a long line of recent work, the interplay between machine learning and control theory has borne fruit 
in an improved understanding of the statistical and online learning guarantees for prediction, identification, and control for unknown systems.
%in a long line of recent work on identification and control for unknown systems has led to an improved understanding of the statistical and online learning guarantees of these problems. 
In machine learning, LTI systems also serve as a simple model problem 
%Beyond control theory, LTI systems give simple model 
for learning from correlated data in stateful environments, %and serve as a model problem for understanding the successes of reinforcement learning and recurrent neural networks. 
and can give insight into understanding the successes of reinforcement learning~\citep{recht2019tour,tu2019gap} and recurrent neural networks~\citep{hardt2018gradient}.

%Partial observability refers to the fact that we do not observe the state $x(t)$, but rather a noisy linear observation $y(t)$. This 
Partial observability poses a significant challenge to system identification: In the fully observed setting, given access to $x(t)$, there is no obstacle to learning the matrices directly through linear regression. However, in the partially observed setting, the most natural form of the optimization problem is non-convex.

Systems exhibiting \emph{long-term memory} are particularly challenging to learn. Restricting to marginally stable systems, this occurs when the spectral radius of $A$, $\rh(A)$, is close to 1, and it implies that the output at a particular time cannot be accurately estimated without taking into account inputs over many previous time-steps---on the order of $O\prc{1-\rh(A)}$ times steps. Such systems often arise in practice. A particular class of such systems are those exhibiting \emph{multiscale} behavior, with different state variables that change on vastly different timescales~\citep{chatterjee2010dbns}. For example, the body's pH level is affected both by long-term changes on a timescale of days or weeks, as well as breathing rate which changes over a timescale of seconds. For such systems, it makes sense to discretize at the scale of the fastest changing variable, which leads to a long memory for the slowest-changing variable. With few exceptions, existing guarantees for learning partially observed LTI systems degrade as the memory length increases. However, counting the number of parameters in the model~\eqref{e:lds1}--\eqref{e:lds2} suggests that the right measure of statistical complexity is the intrinsic dimensionality of the system, not the memory length. This leads to the following natural question.
%For a continuous system with time discretization $\Delta$ going to 0, the system order stays constant, while the memory increases as $1/\Delta$. Existing results for $\cal H_2$ estimation of the system would be suboptimal by an arbitrarily large factor

\textbf{Question:} How can we learn partially observed LTI systems with (non-asymptotic) statistical rates that depend on the \emph{intrinsic dimensionality} of the system, rather than the memory length?
%of the system's memory?

Despite the simplicity of the question, little in the way of theoretical results are known. 
We focus on the particular problem of learning the \emph{impulse response (IR) function} of the system---which fully determines its input-output behavior---in $\cal H_2$ norm. This is a natural norm for prediction problems as it measures the expected prediction error under random input. Known guarantees for learning the IR depend on the memory length. One particularly undesirable consequence is that for a continuous system with time discretization $\Delta$ going to 0, the memory scales as $1/\Delta$ (while the system order stays constant), leading to suboptimal estimation by an arbitrarily large factor.

%the system order stays constant, while the memory increases as $1/\Delta$. Existing results for $\cal H_2$ estimation of the system would be suboptimal by an arbitrarily large factor, whereas the current work gives optimal rates up to log factors.

%In this work, we consider the problem of learning the \emph{impulse response function} of the system---which fully determines its input-output behavior---as well as the related problem of \emph{identifying the system matrices}.
%predicting the output $y(t)$ \emph{as well as if we knew the system parameters}, and the related problem of \emph{identifying the system parameters}. 
%For the first problem, 
Our key contribution is an algorithm and analysis that gives statistical rates that are optimal up to logarithmic factors, in the absence of process noise. Unlike previous works, our rates depend on the system order $d$---the natural dimensionality of the problem---and only \emph{logarithmically} on the memory length of the system. 
%This length is roughly defined as the amount of time it takes for an input to decay, which is roughly given by $\rc{1-\rh(A)}$, where $\rh(A)$ denotes the spectral radius of $A$. \fixme{Having this improved dependence is essential because many real systems have long-term memory. State variables may change on vastly different timescales~\cite{chatterjee2010dbns}. For a continuous system with time discretization $\Delta$ going to 0, the system order stays constant, while the memory increases as $1/\Delta$. Existing results for $\cal H_2$ estimation of the system would be suboptimal by an arbitrarily large factor, whereas the current work gives optimal rates up to log factors.}
%Having this improved dependence is important because in natural cases, such as that of finely discretized continuous systems, the rollout length can be arbitrarily large compared to the system order. \fixme{Unlike previous works...}
Our algorithm is based on taking a low-rank approximation (SVD) of the Hankel matrix, which is a widely used technique in system identification. We consider a \emph{multiscale} version of this algorithm, where we repeat this process for a geometric sequence of sizes of the Hankel matrix. This is essential for obtaining a stronger theoretical guarantee.
%repeating this process at multiple scales to obtain a theoretical guarantee.
%Given an observation of a low-rank matrix plus noise, we show that in many cases, taking a low-rank SVD can recover the low-rank matrix with reduced noise.
%Using this, we give improved bounds for system identification, specifically for the Ho-Kalman algorithm, improving the non-asymptotic analysis given in~\cite{oymak2018non}.
In the setting of zero process noise, we prove that our algorithm achieves near-optimal $\wt O\pa{\sfc{d(d_u+d_y)}{T}}$ rates in $\cal H_2$ error for the learned system. 
%\fixme{The $\cal H_2$ norm is a natural error metric as it measures the expected prediction error under random input.}
%This eliminates the dependence on rollout length (except for logarithmic factors) and significantly improves the non-asymptotic analysis given in~\cite{oymak2018non}.

Our analysis relies on careful application of concentration bounds on the Fourier domain to give sharper concentration bounds for sample covariance and $\cal H_\iy$ norm estimation, which may be of independent interest. While we consider our algorithm in a simple setting, we hope that this is a first step to understanding and improving more complex subspace identification algorithms.
Indeed, SVD and related spectral methods are a standard step used in subspace identification algorithms such as N4SID; our analysis suggests that SVD has an important ``de-noising effect''.

We also give improved bounds for system identification, that is, learning the matrices $A,B,C,D$ 
using the Ho-Kalman algorithm~\citep{ho1966effective}, with $\wt O\pa{\sfc{Ld(d_u+d_y)}{T}}$ rates.
%. Our bounds give improved rates for learning a realization of the system using the Ho-Kalman algorithm~\cite{ho1966effective}, with $\wt O\pa{\sfc{Ld(d_u+d_y)}{T}}$ rates. %\fixme{as opposed to...}
%, specifically for the Ho-Kalman algorithm, improving the non-asymptotic analysis given in~\cite{oymak2018non}.

%\cite{sun2020finite}

\subsection{Notation}

\paragraph{Norms.} %We define the $\ell^p$ norm for $1\le p<\iy$ by $\ve{f}_{\ell^p(S)} = \pa{\sum_{t\in S\cap \Z} |f(t)|^p}^{1/p}$ and the $\ell^\iy$ norm by $\ve{f}_{\ell^\iy(S)} = \sup_{t\in S\cap \Z}|f(t)|$. Norms are 2-norms if not specified: 
%We let $\ve{f}_S = \ve{f}_{\ell^2(S)}$ for short.
We use $\ved$ to denote the 2-norm of a vector. 
For a matrix $A$, let $\ve{A}=\ve{A}_2$ denote its operator norm, $\rh(A)$ denote its spectral radius (maximum absolute value of eigenvalue), and $\ve{A}_\sfF$ denote its Frobenius norm. For a matrix-valued function $M(t)\in \C^{d_1\times d_2}$, $\ve{M}_{\sfF}:= \sqrt{\sum_t \ve{M(t)}_\sfF^2}$. %Note that $\ve{M}_F = \ve{M}_{\cal H_2}$, so we will use these interchangeably.
Let $\si_r(A)$ denote the $r$th singular value of $A$.

\paragraph{Fourier transform.}
%Let $\wh f$ denote the (discrete-time) Fourier transform of $f$, given by 
Given a matrix-valued function $F:\Z\to\C^{m\times n}$, define the  (discrete-time) Fourier transform as the function $\wh F:\R/\Z \to \C^{m\times n}$ given by
\iftoggle{neurips}{$\wh F(\om) = 
    \sum_{t=-\iy}^{\iy}F(t) e^{-2\pi i \om t}$.}{
\begin{align*}
%    \wh f(\om) &= 
%    \sum_{t=-\iy}^{\iy}f(t) e^{-2\pi i \om t}.
    \wh F(\om) &= 
    \sum_{t=-\iy}^{\iy}F(t) e^{-2\pi i \om t}.
\end{align*}}

% Given $L$, let $\cal Ff$ denote the normalized discrete Fourier transform (DFT) of $f$, considered as a function $f:[1,L]\to \C$. It is defined by $$\cal Ff\pf{r}L = \rc{\sqrt L} \sumo tL f(t) e^{-\fc{2\pi\bi rt}L}.$$ 
% Note that many texts use the normalization $\rc{L}$ instead. We use this normalization so that $\cal F$ is an orthogonal transformation with the same inner product on both spaces. 
% Note that under our normalization, if $f$ is supported on $[1,L]$, that $\cal F f = \rc{\sqrt L} \wh f$.

\paragraph{Matrices.}
Given a sequence $(F(t))_{t=1}^{a+b-1}$ where each $F(t)\in \C^{m\times n}$,
%If $F(t)$ is a $m\times n$ matrix, 
define $\Hankel_{a\times b}(F)$ as the $am\times bn$ block matrix such that the $(i,j)$th block is $[\Hankel_{a\times b}(F)]_{ij} = F(i+j-1)$. 
Given a sequence $(F(t))_{t=0}^{a-1}$ where each $F(t)\in \C^{m\times n}$, define the Toeplitz matrix as the block matrix such that the $(i,j)$th block is $[\Toep_{a\times b}(F)]_{ij} = F(i-j)\one_{i\ge j}$. 
For a matrix $A$, let $A^\top, A^H, A^\dagger$ denote its transpose, Hermitian (conjugate transpose), and pseudoinverse, respectively. For a vector-valued function $v:\{a,\ldots, b\}\to \R^n$, let $v_{a:b}\in \R^{(|a-b|+1)n}$ denote the the vertical concatenation of $v(a),\ldots, v(b)$. Let $(A_1;\ldots;A_n)$ denote the vertical concantenation of matrices $A_1,\ldots, A_n$. %$\colthree{A_1}{\vdots}{A_n}$.
Let $*$ denote convolution; we define convolutions between matrix and vector-valued functions by matrix-vector multiplication: $(F*u)(t) = \sum_{s\in \Z} F(s)u(t-s)$.

\paragraph{Control theory.}
For a matrix $A\in \C^{d\times d}$, define its resolvent as $\Phi_A(z) = (zI-A)^{-1}$.
For a linear dynamical system $\cal D$ given by~\eqref{e:lds1}--\eqref{e:lds2}, let $\Phi_{\cal D}=\Phi_{u\to y}$ denote the transfer function from $u$ to $y$ (response to input). Then
 %and $\Phi_{x\to y}$ denote the transfer function from $x$ to $y$ (response to process noise). Then
 \iftoggle{neurips}{ $\Phi_{\cal D} = \Phi_{u\to y}= C\Phi_A B + D = C(zI-A)^{-1}B + D$. }{
\begin{align*}
    \Phi_{\cal D}(z) = \Phi_{u\to y}(z) &= C\Phi_A(z) B + D = C(zI-A)^{-1}B + D %\\
%    \Phi_{x\to y} &= C\Phi_A = C(zI-A)^{-1}.
\end{align*}}
Let $\T :=\set{z\in \C}{|z|=1}$ be the unit circle in the complex plane. For a matrix-valued function $F:\T\to \C^{d_1\times d_2}$, define the $\cal H_2$ and $\cal H_\iy$ norms by
\begin{align*}
    \ve{\Phi}_{\cal H_2} &=\sqrt{\rc{2\pi} \int_{\T} \ve{\Phi(z)}_F^2\,dz} &
    \ve{\Phi}_{\cal H_\iy} &= \sup_{z\in \T} \ve{\Phi(z)} .
\end{align*}
For a function $F:\N_0\to \C^{d_1\times d_2}$, define its Z-transform to be $\cal Z[F](z) =\sumz n\iy F(n) z^{-n}$. Considered as a function $\T\to \C$, we can take its $\cal H_2$ and $\cal H_\iy$ norms. Overloading notation, we will let $\ve{F}_{\cal H_p}:= \ve{\cal Z F}_{\cal H_p}$ for $p=2,\iy$.
The $\cal H_2$ and $\cal H_\iy$ norms can be interpreted as the Frobenius and operator norms of the linear operator from input to output, i.e., they measure the average power of the output signal under random or worst-case input, respectively. (Note however that there an implicit factor of $d$ scaling between $\mathcal H_2$ and $\mathcal H_\infty$.)
For background on control theory, see e.g.,~\cite{zhou1996robust}.

%For background on control theory, including a more detailed discussion of the $\cal H_2$ and $\cal H_\iy$ norms, see Appendix~\ref{a:background}.
%the definition of the $\cal H_2$ norm, see Appendix~\ref{a:background}.

%\paragraph{Other.}
%Let $\log_+(x) = \max\{1,\log x\}$. Let $B_r=\set{z\in \C}{|z|\le r}$ denote the closed ball of radius $r$.

\paragraph{Constants.} In proofs, $\const$ may represent different constants from line to line.

\section{Main results}
\label{s:results}
We consider the problem of prediction and identification for an unknown linear dynamical system~\eqref{e:lds1}--\eqref{e:lds2}. %\fixme{...}
%, in the special case where there is no process noise ($\xi(t)=0$) and the observation noise $\eta(t)$ is iid Gaussian. 
Our main goal is to obtain error guarantees in $\cal H_2$ norm, which determines prediction error under random input~\cite[Lemma 3.3]{oymak2018non}.
%which measures the average power of the output signal under random input. 
%More precisely, the $\cal H_2$ norm of a LTI system with transfer function $\Phi_{\cal D}$ is defined as $\pa{\rc{2\pi}\int_0^{2\pi} \ve{\Phi_{\cal D}(e^{i\xi})}_{\sfF}\,d\xi}^{1/2}$. For background on control theory, including a discussion of the definition of the $\cal H_2$ norm, see Appendix~\ref{a:background}.

%For simplicity, we consider the case that $D=O$, but our results easily extended to the case where $D$ is nonzero.

%We consider the following problem.
%Recall that the $\cal H_2$ norm of a LTI system with transfer function $\Phi_{\cal D}$ is defined as $\pa{\rc{2\pi}\int_0^{2\pi} \ve{\Phi_{\cal D}(e^{i\xi})}_{\sfF}\,d\xi}^{1/2}$, and can be interpreted as the average power of the output signal under random input.
%For background on control theory, including a discussion of the definition of the $\cal H_2$ norm, see Appendix~\ref{a:background}. \textcolor{blue}{Xue: Maybe it is good to define $\mathcal{H}_2$ here? Can you also remind me the notation $\Phi_D$?}

\begin{prb}\label{p:main}
Consider the partially-observed LTI system $\cal D$~\eqref{e:lds1}--\eqref{e:lds2} with gaussian inputs $u(t) \sim N(0,I_{d_u})$ for $0\le t<T$.
%Consider a partially observed LDS $\cal D$ given by 
%\begin{align*}
%    x(t) &= Ax(t-1) + Bu(t-1) + \xi(t)\\
%    y(t) &= Cx(t) + Du(t) + \eta(t),
%\end{align*}
%where $u(t)\in \R^{d_u}$, $x(t)\in \R^{d}$, $y(t)\in \R^{d_y}$, and  $u(t)\sim N(0,I_{d_u})$, $\xi(t)\sim N(0,\Si_x)$, $\eta(t)\sim N(0,\Si_y)$. 
Suppose that the system is stable, that is, $\rh(A)<1$, and that we observe a single trajectory of length $T$ started with $x(0)=0$, that is, we observe $u(t)\sim N(0,I_{d_u})$ % , $0\le t <T$ 
and $y(t)$ for $0\le t<T$. Suppose also that $D$ has rank at most $d$.

%Let $\cal D$ denote the system without noise $\eta(t)$. 
The goal is to learn a LTI system $\wt{\cal D}$ with the aim of minimizing $\ve{\Phi_{\wt{\cal D}} - \Phi_{\cal D}}_{\cal H_2}$.
%$\ve{\wt{\cal D}-\cal D}_{\cal H_2}=\ve{\Phi_{\wt{\cal D}}-\Phi_{\cal D}}_2$.
%In particular, for zero process noise ($\xi(t)=0$), we wish to obtain rates that are $\wt O\pf{\poly(d,d_u,d_y)}{\sqrt T}$, with no dependence on $\rc{1-\rh(A)}$.
%, where the dependence on $\rc{1-\rh(A)}$ is at most poly-logarithmic.
Equivalently, letting 
\begin{align*}
    F^*(t) &= \begin{cases}
        D, & t=0\\
        CA^{t-1}B, & t \ge 1
    \end{cases}
\end{align*}
denote the impulse response function (also called the Markov parameters) of the system, the goal is to learn an impulse response $\wt F$ minimizing $\ve{F^* - \wt F}_{\cal H_2} = 
\ve{F^* - \wt F}_{\sfF}$.
% \le \wt O\pf{\poly(d,d_u,d_y)}{\sqrt T}$.
\end{prb}
Note that learning $F^*$ is sufficient to fully understand the input-output behavior of the system, but we may also ask to recover the system parameters $A,B,C,D$ up to similarity transformation (see Theorem~\ref{t:param}). Note that we require $\rank (D)$ to be at most the system order so that it does not take more samples to learn than $A,B,C$.

Previous results~\citep{oymak2018non,sarkar2019finite} roughly depend polynomially on the ``memory'' $\rc{1-\rh(A)}$, which blows up as the spectral norm of $A$ approaches 1. 
In the setting of zero process noise, our goal is to obtain rates that are $\wt O\pf{\poly(d,d_u,d_y)}{\sqrt T}$, with only poly-logarithmic dependence on $\rc{1-\rh(A)}$. 
See Figure~\ref{fig} for a comparison. %Our focus is on learning an impulse response that predicts as well the actual impulse response, rather than identifying the system matrices $A,B,C$. However, we also give improved bounds for identifying the system matrices (up to similarity transformation) in Appendix~\ref{s:sys-id}.

We assume that $\rh(A)<1$ because if $\cal D$ is not stable, it is in general impossible to learn $\wt{\cal D}$ with  finite $\cal H_2$ error, as a system with infinite response can have arbitrarily small response on any finite time interval. 
However, %we expect that our algorithms can be adapted to 
it may still be possible to learn the response up to time $L\ll T$ in this case~\citep{simchowitz2019learning}, or achieve other reasonable guarantees. The marginally stable case ($\rh(A)=1$) is an important case we leave to future work.

\begin{figure}[h!]
    \centering
\iftoggle{neurips}{
    \begin{tabular}{p{4cm}p{2cm}p{3cm}p{2cm}}
\hline 
Method & Rollout type & Min \# samples %complexity 
& IR error\tabularnewline
\hline 
\hline 
Least squares (IR) \cite{tu2017non} & Multi & $L$ & $\si\sfc{L}{T}$ \tabularnewline
\hline 
Least squares (IR) \cite{oymak2018non}
& Single & $L$ & $\si\sfc{L}{T}$ \tabularnewline
\hline 
Nuclear norm minimization 
& Multi & $\min\{d^2,L\}$ & $\si\sfc{L}{T}$ \tabularnewline
\cline{2-4}
\cite{sun2020finite} & Multi & $d$ & $\si\sfc{dL}{T}$ \tabularnewline
\hline 
% Sparse Fourier 
% (Theorem~\ref{t:main},~\ref{t:main-wc}) & Single & $L$ & $\fc{\si d^2}{\sqrt T}$ \tabularnewline
% \cline{2-4}
% well-conditioned, Jordan rank $\le r$ & Single & $L$ & $\si\sfc{dr}{T}$
% \tabularnewline
% \hline 
rank-$d$ SVD (Theorem~\ref{t:svd-main}) & Single & $L$ & $\si\sfc{d}{T}$
\tabularnewline
\hline 
\end{tabular}
}{
    \begin{tabular}{|p{6cm}|p{3cm}|p{3cm}|p{2cm}|} %{|p{6cm}|p{3cm}|p{3cm}|p{2cm}|}
\hline 
Method & Rollout type & Min \# samples %complexity 
& IR error\tabularnewline
\hline 
\hline 
Least squares (IR) \cite{tu2017non} & Multi & $L$ & $\si\sfc{L}{T}$ \tabularnewline
\hline 
Least squares (IR) \cite{oymak2018non}
& Single & $L$ & $\si\sfc{L}{T}$ \tabularnewline
\hline 
Nuclear norm minimization 
& Multi & $\min\{d^2,L\}$ & $\si\sfc{L}{T}$ \tabularnewline
\cline{2-4}
\cite{sun2020finite} & Multi & $d$ & $\si\sfc{dL}{T}$ \tabularnewline
\hline 
% Sparse Fourier 
% (Theorem~\ref{t:main},~\ref{t:main-wc}) & Single & $L$ & $\fc{\si d^2}{\sqrt T}$ \tabularnewline
% \cline{2-4}
% well-conditioned, Jordan rank $\le r$ & Single & $L$ & $\si\sfc{dr}{T}$
% \tabularnewline
% \hline 
rank-$d$ SVD (Theorem~\ref{t:svd-main}) & Single & $L$ & $\si\sfc{d}{T}$
\tabularnewline
\hline 
\end{tabular}
}

    \caption{
    Here, $L$ is the memory length for the system, which is $\wt O\prc{1-\rh(A)}$ for well-conditioned systems.
    \emph{Rollout type} refers to whether we have access to a single trajectory or multiple trajectories. \emph{Min \# samples} refers to the minimum number of samples (up to log factors) before the bounds are operational. \emph{IR error} refers to the error in the impulse response in Frobenius/$\cal H_2$ norm.
    Logarithmic factors are omitted.}
    %(?) \cite{hardt2016gradient} gets $\sfc{d^5+\si^2d^3}T$ for generalization error, not directly comparable.}
    \label{fig}
\end{figure}

%\subsection{Algorithm based on low-rank Hankel SVD}

In our Algorithm~\ref{a:svd}, we first use linear regression to obtain a noisy estimate $F$ of the impulse response. Next, following standard system identification procedures, we form the Hankel matrix $\Hankel_{L\times L}(F)$ with the entries of $F$ on its diagonals. Because the true Hankel matrix 
\begin{align*}
\Hankel_{L\times L} (F^*) = \begin{pmatrix}CB & CAB & \cdots & CA^{L-1}B\\
CAB & CA^{2}B &  & \vdots\\
\vdots &  & \ddots & \vdots\\
CA^{L-1}B & \cdots & \cdots & CA^{2L-1}B
\end{pmatrix}
\end{align*}
has rank $d$, we take a low-rank SVD $R_L$ of the Hankel matrix to ``de-noise" the impulse response. %\fixme{cite?}
We can then read off the estimated impulse response by averaging over the corresponding diagonal of $R_L$.  
%If $F(t)$ is a $m\times n$ matrix, define $\Hankel_{a\times b}(F)$ as the $am\times bn$ block matrix such that the $(i,j)$th block is $[\Hankel_{a\times b}(F)]_{ij} = F(i+j-1)$. 
For technical reasons, we need to repeat this process for a geometric sequence of sizes of the Hankel matrix: $L\times L$, $L/2\times L/2$, $L/4\times L/4$, and so forth. This is because the low-rank approximation objective for a $\ell \times \ell$ Hankel matrix $H$ encourages the skew-diagonals that are $\Te(\ell)$ 
(consisting of entries $H_{ij}$ with $\const_1\ell \le i+j \le c_2\ell$) to be close---as those are the diagonals with the most entries---and hence estimates $F^*(t)$ well when $t=\Te(\ell)$. In other words, low-rank estimation for $\Hankel_{\ell\times \ell}(F)$ is only sensitive to the portion of the signal that is at timescale $\ell$. Repeating this process ensures that we cover all scales.
 \iftoggle{alt}{\begin{algorithm2e}[h!]
 \DontPrintSemicolon
\caption{Learning impulse response through multi-scale low-rank Hankel SVD}
\KwIn{Length $L$ (power of 2), time $T$.}
Part 1: Linear regression to recover noisy impulse response\;
Let $u(t)\sim N(0,I_{d_u})$ for $0\le t<T$, and observe the outputs $y(t)\in \R^{d_y}$, $0\le t< T$.\;
%Let $f(1),\ldots, f\pf{5L}4$, and 
Solve the least squares problem
\begin{align}\label{e:ls}
\min_{F:\Supp(F)\subeq [0,2L-1]} \sumz t{T-1} \ve{y(t)-(F*u)(t)}^2.
\end{align}
to obtain the noisy impulse response $F:[0,2L-1]\cap \Z \to \R^{d_y\times d_u}$.\;
Part 2: Low-rank Hankel SVD to de-noise impulse response\;
%\INPUT Noisy signal $f:[1,L]\cap \Z \to \C$ ($L$ a power of 2), filter parameter $C_{\text{filter}}$, sparsity parameter $d$, cutoff parameter $c$.
Let $\wt F(0) $ be the rank-$d$ SVD of $F(0)$\; %= F(0)$.\;
%\For{$k=0$ to $\log_2 L-1$}
\For{$k=0$ to $\log_2 L$}{
    Let $\ell = 2^k$. \;
    Let $R_{\ell}$ be the rank-$d$ SVD of $\Hankel_{\ell\times \ell}(F)$
    (i.e., $\amin_{\rank(R) \le d} \ve{R - \Hankel_{\ell\times \ell}(F)}$).\;
    For $\fc\ell2 <t\le \ell$, let $\wt F(t)$ be the $d_y\times d_u$ matrix given by $\wt F(t) = \rc t \sum_{i+j=t} (R_\ell)_{ij}$, where $(\cdot)_{ij}$ denotes the $(i,j)$th block of the matrix.\;
}
\KwOut{Estimate of impulse response $\wt F$.}
%\end{algorithmic}
\label{a:svd}
\end{algorithm2e}}{}
\iftoggle{arxiv}{
\begin{algorithm}[h!]
\caption{Learning impulse response through multi-scale low-rank Hankel SVD}
\begin{algorithmic}[1]
\INPUT Length $L$ (power of 2), time $T$.
\State Part 1: Linear regression to recover noisy impulse response
\State Let $u(t)\sim N(0,I_{d_u})$ for $0\le t<T$, and observe the outputs $y(t)\in \R^{d_y}$, $0\le t< T$.
\State %Let $f(1),\ldots, f\pf{5L}4$, and 
Solve the least squares problem
\begin{align}\label{e:ls}
\min_{F:\Supp(F)\subeq [0,2L-1]} \sumz t{T-1} \ve{y(t)-(F*u)(t)}^2.
\end{align}
to obtain the noisy impulse response $F:[0,2L-1]\cap \Z \to \R^{d_y\times d_u}$. 
\State Part 2: Low-rank Hankel SVD to de-noise impulse response
%\INPUT Noisy signal $f:[1,L]\cap \Z \to \C$ ($L$ a power of 2), filter parameter $C_{\text{filter}}$, sparsity parameter $d$, cutoff parameter $c$.
\State Let $\wt F(0) $ be the rank-$d$ SVD of $F(0)$.
%\For{$k=0$ to $\log_2 L-1$}
\For{$k=0$ to $\log_2 L$}
    \State Let $\ell = 2^k$. 
    \State Let $R_{\ell}$ be the rank-$d$ SVD of $\Hankel_{\ell\times \ell}(F)$
    (i.e., $\amin_{\rank(R) \le d} \ve{R - \Hankel_{\ell\times \ell}(F)}$).
    \State For $\fc\ell2 <t\le \ell$, let $\wt F(t)$ be the $d_y\times d_u$ matrix given by $\wt F(t) = \rc t \sum_{i+j=t} (R_\ell)_{ij}$, where $(\cdot)_{ij}$ denotes the $(i,j)$th block of the matrix.
\EndFor 
\OUTPUT Estimate of impulse response $\wt F$.
\end{algorithmic}
\label{a:svd}
\end{algorithm}}{}

Our main theorem is the following.
\begin{thm} %[Main]
\label{t:svd-main}\label{t:main}
There is a constant $\const_1$ such that following holds. 
In the setting of Problem~\ref{p:main}, suppose that $F^*$ is the impulse response function, 
$G^*$ is the impulse response for the process noise ($G^*(t) = CA^{t}$, 
$T$ is such that 
$T\ge \Tmin$, 
$\ep_{\mathrm{trunc}}:=\ve{F^*\one_{[2L,\iy)}}_{\cal H_\iy}\sqrt{d_u} + \ve{G^*\one_{[2L,\iy)}}_{\cal H_\iy}\ve{\Si_x^{1/2}}_{\sfF} $, and
$M_{x\to y}= (O, C, CA, \ldots, CA^{L-1})^\top \in \R^{(L+1)d\times d_y}$.
%$\si=\sqrt{\ve{\Si_y} + \ve{\Si_x}\ve{\Phi_{x\to y}}_{\cal H_2}^2}$. 
Let $0<\de\le \rc 2$ and $\si = \sqrt{\ve{\Si_y} + \ve{\Si_x} L\log\pf{Ld_u}{\de}\ve{M_{x\to y}}^2}$. 
Then with probability at least $1-\de$, Algorithm~\ref{a:svd} 
learns an impulse response function $\wt F$ such that 
\begin{align*}
    \ve{\wt F - F^*}_{\sfF}
    &= 
    %O
    %\pa{\si \sfc{d(d_y+d_u)\log L\log\pf L\de}{T} +  \ep_{\mathrm{trunc}}\sqrt d
    %+ \ve{F^*\one_{(L,\iy)}}_{\sfF}
    %}
    O\Bigg(\si \sfc{d\pa{d_y+d_u+\log\pf L\de}\log L}{T}+ \ep_{\mathrm{trunc}}\sqrt d+ \ve{F^*\one_{(L,\iy)}}_{\sfF}
    \Bigg).
\end{align*}
\end{thm}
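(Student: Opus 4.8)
The plan is to split $\ve{\wt F - F^*}_\sfF$ into the part the algorithm never attempts to estimate (the tail $F^*\one_{(L,\iy)}$) and the estimation error on $[0,L]$, and then to reduce the latter, scale by scale, to a single quantity: the $\cal H_\iy$ norm of the Part 1 regression error $E := F - F^*$.

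First I would analyze Part 2 (the SVD de-noising) assuming a bound on $\ve{E}_{\cal H_\iy}$. Fix a scale $\ell = 2^k$ and write $H = \Hankel_{\ell\times\ell}(F)$, $H^* = \Hankel_{\ell\times\ell}(F^*)$, so $H - H^* = \Hankel_{\ell\times\ell}(E)$. Since $H^*$ has rank $d$ and $R_\ell$ is the best rank-$d$ approximation of $H$, Eckart--Young gives $\ve{R_\ell - H}= \si_{d+1}(H)\le \ve{H-H^*}$, hence $\ve{R_\ell - H^*}\le 2\ve{H-H^*}$; and because $R_\ell - H^*$ has rank at most $2d$, $\ve{R_\ell - H^*}_\sfF \le \sqrt{2d}\,\ve{R_\ell - H^*}$. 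The key point is the classical inequality $\ve{\Hankel_{\ell\times\ell}(E)}\le \ve{E}_{\cal H_\iy}$, which I would prove by writing the bilinear form $v^H\Hankel_{\ell\times\ell}(E)w$ as a Fourier integral $\int_\T P_v(\om)^H\wh E(\om)Q_w(\om)\,d\om$ against two trigonometric polynomials and applying Cauchy--Schwarz with Parseval; crucially, this bound is independent of $\ell$. Reading off $\wt F(t)$ for $t\in(\ell/2,\ell]$ averages over a skew-diagonal of length $\Te(\ell)$, so by Cauchy--Schwarz $\sum_{t\in(\ell/2,\ell]}\ve{\wt F(t)-F^*(t)}_\sfF^2 \lesssim \frac1\ell\ve{R_\ell - H^*}_\sfF^2 \lesssim \frac{d}{\ell}\ve{E}_{\cal H_\iy}^2$. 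Summing the geometric series over $k=0,\dots,\log_2 L$ collapses the $\frac1\ell$ factors to an absolute constant, yielding $\ve{\wt F\one_{[0,L]}-F^*\one_{[0,L]}}_\sfF\lesssim \sqrt d\,\ve{E}_{\cal H_\iy}$. This is exactly where the multiscale structure is essential: restricting scale $\ell$ to the long skew-diagonals $t=\Te(\ell)$ is what makes the averaging gain $\frac1{\sqrt\ell}$ available at every $t$.

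It then remains to bound $\ve{E}_{\cal H_\iy}$. Writing $y = F^**u + G^**\xi + \eta$ with $x(0)=0$, I would decompose $E$ into a truncation (bias) part and a stochastic (noise) part. The bias comes from the untruncated tails $F^*\one_{[2L,\iy)}$ and $G^*\one_{[2L,\iy)}$, which I would bound in $\cal H_\iy$ by exactly $\ep_{\mathrm{trunc}}$ (the $\sqrt{d_u}$ and $\ve{\Si_x^{1/2}}_\sfF$ factors coming from the input and process-noise covariances). For the stochastic part I would pass to the Fourier domain, where least-squares deconvolution at frequency $\om$ reads $\wh E(\om) \approx (\text{sample covariance at }\om)^{-1}\cdot \frac1T\sum_t (\text{noise})(t)\,\ol{\wh u}(\om)$. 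Two ingredients are needed: (i) a lower bound $U^\top U\succeq cT\cdot I$ --- a concentration bound for the sample covariance of the correlated (shifted) gaussian inputs, valid once $T\gtrsim Ld_u\log(Ld_u/\de)$, matching the hypothesis $T\ge\Tmin$; and (ii) a uniform-over-frequency bound on the noise numerator, i.e. an $\cal H_\iy$ concentration estimate of order $\si\sqrt{(d_y+d_u+\log(L/\de))\log L / T}$, where $\si$ is precisely the effective noise level combining $\Si_y$ with the process noise propagated through $M_{x\to y}$.

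The main obstacle is ingredient (ii): controlling the supremum over the continuum $\T$ of a matrix-valued process that is quadratic in Gaussians. I would handle it by an $\ep$-net over the circle together with a Lipschitz bound on $\om\mapsto\wh E(\om)$ (the net cardinality and Lipschitz constant producing the $\log L$ factor), combined at each fixed frequency with a matrix Bernstein / Hanson--Wright estimate supplying the $d_y+d_u+\log(1/\de)$ dimensional factors. The correlation introduced by the process noise $G^**\xi$, which makes the effective observation noise non-white and temporally correlated, is the delicate part and is where the sharper Fourier-domain concentration bounds advertised in the abstract are required. Combining $\ve{\wt F - F^*}_\sfF\lesssim \sqrt d\,\ve{E}_{\cal H_\iy} + \ve{F^*\one_{(L,\iy)}}_\sfF$ with $\ve{E}_{\cal H_\iy}\lesssim \si\sqrt{(d_y+d_u+\log(L/\de))\log L/T} + \ep_{\mathrm{trunc}}$ then yields the three claimed terms.
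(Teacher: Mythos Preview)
Your outline assembles the right building blocks (the rank-$2d$ Frobenius-to-operator trick, the Hankel-to-$\cal H_\infty$ inequality, the skew-diagonal averaging), but there is a genuine gap that costs exactly the factor of $\sqrt{L}$ the theorem is designed to remove.

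The claimed bound $\|E\|_{\cal H_\infty}\lesssim \sigma\sqrt{(d_y+d_u+\log(L/\delta))\log L/T}$ in your ingredient (ii) is false: it is missing a factor of $\sqrt{L}$. In the scalar case with $\Sigma_y=1$, $\Sigma_x=0$, once $U^\top U\approx TI$ the regression error satisfies $E\approx \tfrac1T U^\top\eta$, whose entries over $\{0,\ldots,2L-1\}$ are approximately iid $N(0,1/T)$. Hence $\hat E(\omega)=\sum_{t=0}^{2L-1}E(t)e^{-2\pi i\omega t}$ is a sum of $2L$ such Gaussians and has standard deviation $\asymp\sqrt{L/T}$, so $\|E\|_{\cal H_\infty}\asymp\sqrt{L\log L/T}$. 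Plugging this into your estimate gives $\sqrt d\,\|E\|_{\cal H_\infty}\asymp\sqrt{dL\log L/T}$, no better than the raw least-squares bound.

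The problem is the sentence ``crucially, this bound is independent of $\ell$''. The paper does the opposite: at scale $\ell$ it bounds $\|\Hankel_{\ell\times\ell}(E)\|$ not by the global $\|E\|_{\cal H_\infty}$ but by the \emph{local} quantity $\|E\one_{[1,2\ell-1]}\|_{\cal H_\infty}$, and the regression lemma (Lemma~\ref{l:linreg2}, applied with window length $L'=2\ell-1$) shows this is $\lesssim\sigma\sqrt{\ell(d_u+d_y+\log(L/\delta))/T}$, i.e.\ it \emph{grows like $\sqrt{\ell}$}. That $\sqrt{\ell}$ then cancels the $1/\sqrt{\ell}$ from skew-diagonal averaging, leaving a per-scale contribution $\lesssim\sigma\sqrt{d(d_u+d_y+\log(L/\delta))/T}$ independent of $\ell$; summing over the $\log_2 L$ scales is what produces the $\log L$ in the theorem. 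In your argument, by contrast, the geometric series $\sum_k 2^{-k}$ is dominated by the $k=0$ term (scale $\ell=1$), where the skew-diagonal has a single entry and averaging gives no gain at all---a sign that the mechanism has been misidentified. The fix is precisely to invoke the windowed $\cal H_\infty$ bound at each scale rather than a single global one.
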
 
In the absence of process noise (when $\Si_x=O$), when $L$ and $T$ are chosen large enough, the first term dominates, and ignoring log factors, the dependence is $O\pa{\sfc{d(d_y+d_u)}{T}}$. 
We expect this to be the optimal sample complexity up to logarithmic factors. 
However, in the presence of process noise, there is an undesirable factor of $\sqrt L \ve{M_{x\to y}}$, which (for well-conditioned matrices) is expected to be $O\pf{1}{1-\rh(A)}$ or $O(L)$. We leave it an open problem to improve the guarantees in this setting. 

%We make several remarks on the theorem statement.
%\begin{enumerate}
%\item
%\emph{Remark 1.}
\begin{rem}The $L$-factor dependence on the process noise is unavoidable with the current algorithm: when the process noise has covariance $\Si_x=I$ and decays after $L$ steps, it can cause perturbations of size $O(\sqrt{L})$ compared to the noiseless system. Even in the case $d=1$, when the impulse response function is $ae^{-kt/L}$ for a known $k$, the noise will cause the estimate of $a$ to be off by $O(\sqrt{L})$, and hence the $\cal H_2$ norm of the impulse response to be off by $O(L)$.
Our algorithm only regresses on previous inputs, but in the presence of process noise, a better approach is to regress on both the previous inputs $u(t)$ and \emph{outputs} $y(t)$ and then take a (weighted) SVD, as in N4SID~\citep{qin2006overview}.
%In the presence of process noise, a natural approach is to also regress the output on both the previous inputs $u(t)$ and \emph{outputs} $y(t)$, as in N4SID~\cite{qin2006overview}.
%\item
\end{rem}

%\emph{Remark 2.} 
\begin{rem}
Note that the burn-in time---the minimum trajectory length under which our error guarantees hold---is $\Om(L)$. 
A burn-in time of $\Omega(L)$ is information-theoretically required to get $\poly(d)$ rates. Attempting to extrapolate an impulse response function from time $o(L)$ to time $L$ can magnify errors by $\exp(d)$, because the finite impulse response of a system of order $d$ can approximate a polynomial of degree $d-1$ on $[0,L]$.
%\item Among all systems whose finite impulse response decay sufficiently in time $L$, the rate is uniform in parameter space, and does not depend on minimum eigenvalue or other possibly divergent quantities.
%\end{enumerate}
%The algorithm and proof are given in Section~\ref{s:svd}.
\end{rem}

%\emph{Remark 3.}
\begin{rem}
Commonly, one assumes that $\ve{CA^tB}\le M\rho^{t}$ for some $M$ and $\rho$ (greater than or equal to the spectral radius of $A$). Then in the noiseless case, $\ep_{\text{trunc}}\le M\frac{\rho^{2L-1}}{1-\rho}\sqrt{d_u}$. The algorithm and theorem is stated known $\ep_{\text{trunc}}$ for simplicity. Knowing $\ep_{\text{trunc}}$ allows choosing an appropriate memory length $L$. 

A standard technique to convert the algorithm to one that achieves the correct rates as $T\to \infty$ without knowing the memory length is to use the ``doubling trick'': increase the memory length by a constant every time the number of timesteps doubles, so that the memory length scales as $\log_2(T)$. See e.g.,~\cite{tsiamis2020online}. This works because the impulse response decays exponentially. 
%See Algorithm 1 in [TP20] for an example where this is carried out.
\end{rem}

We also show the following improved rates for learning the system matrices, by combining $\cal H_\iy$ bounds for the learned impulse response with stability results for the Ho-Kalman algorithm~\citep{oymak2018non}. Because the input-output behavior is unchanged under a similarity transformation $(A,B,C)\mapsfrom (W^{-1}AW,W^{-1}B, CW)$, we can only learn the parameters up to similarity transformation. We will make the standard control-theoretic assumptions that $\cal D$ is observable ($(C;CA;\ldots;CA^{d-1})$ is full-rank) and controllable ($(B,AB,\ldots,A^{d-1}B)$ is full-rank).
%Although all the ingredients for this theorem exist in the literature, to our knowledge it has not been written down before.
\begin{thm}\label{t:param}
%Suppose $L$ is a power of 2. 
Keep the assumptions and notation of Theorem~\ref{t:main}, suppose $\cal D$ is observable and controllable, and let 
%Consider the setting of Problem~\ref{p:main}, suppose $T\ge \Tmin$, and let $F^*$ be the impulse response. 
%Let   
%$\ep_{\mathrm{trunc}} = \ve{F^*\one_{[L+1,\iy)}}_{\cal H_\iy}$ and 
\begin{align*}
    \ep' &= \si \sfc{L\pa{d_y+d_u+\log \pf L\de}}{T} + \ep_{\mathrm{trunc}}.
\end{align*}
Let $H^-=\Hankel_{L\times (L-1)}(F^*)$. Suppose that $\ep' =O( \si_{\min}(H^-))$. 
Then with probability at least $1-\de$, the Ho-Kalman algorithm %(Algorithm 1 in~\cite{oymak2018non} with $T_1=L, T_2=L-1$) 
(Algorithm~\ref{a:hk}) with $T_1=L, T_2=L-1$ applied to the least squares solution $F$ of~\eqref{e:ls}
returns $\wh A, \wh B, \wh C$ such that there exists a unitary matrix $W$ satisfying
\begin{align*}
    \max\bc{
    \ve{C-\wh C W}_\sfF,
    \ve{B-W^{-1}\wh B}_\sfF}
    &= O(\sqrt d\cdot \ep')\\
    \ve{A-W^{-1}\wh A W}_\sfF &=
    O\pa{\rc{\si_{\min}(H^-)}\cdot \sqrt d \cdot \ep' \cdot \pa{\fc{\ve{\Phi_{\cal D}}_{\cal H_\iy}}{\si_{\min}(H^-)}+1}}.
\end{align*}
\end{thm}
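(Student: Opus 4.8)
The plan is to derive Theorem~\ref{t:param} by feeding a \emph{spectral-norm} control on the Hankel-matrix error into the robustness analysis of the Ho-Kalman algorithm from~\cite{oymak2018non}. Concretely, $\ep'$ is exactly the bound we will prove on $\ve{\Hankel_{L\times(L-1)}(F) - H^-}$, the operator-norm perturbation of the Hankel matrix built from the \emph{raw} (pre-SVD) regression estimate $F$ produced in Part~1 of Algorithm~\ref{a:svd}. Note that Ho-Kalman performs its own rank-$d$ truncation, so it is the noisy $F$, not the de-noised $\wt F$, that is handed to it. This already explains the two structural differences between $\ep'$ and the $\cal H_2$ bound of Theorem~\ref{t:main}: the $\sqrt d$ is replaced by $\sqrt L$ because we measure the Hankel error in operator norm (equivalently in the $\cal H_\iy$ norm of the error function) rather than in a de-noised Frobenius norm, and the single Hankel matrix (no multiscale loop) removes the extra $\log L$ factor.

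\textbf{Step 1 (operator-norm control of the Hankel error).} Split $\Hankel_{L\times(L-1)}(F) - H^- = \Hankel_{L\times(L-1)}(F-F^*\one_{[0,2L-1]}) + \Hankel_{L\times(L-1)}(F^*\one_{[2L,\iy)})$ and bound each piece in operator norm. The second, deterministic, piece is a truncation term: using that the finite Hankel matrix is a compression of the Hankel operator and hence $\ve{\Hankel(\cdot)} \le \ve{\cdot}_{\cal H_\iy}$, it is controlled by $\ve{F^*\one_{[2L,\iy)}}_{\cal H_\iy} \le \ep_{\mathrm{trunc}}$. For the random piece we need a spectral-norm (not Frobenius) concentration bound on the Hankel matrix of the regression residual: a Hankel block matrix whose $d_y\times d_u$ blocks have per-timestep operator-norm error $\si\sfc{d_y+d_u+\log(L/\de)}{T}$ has operator norm of order $\sqrt L$ times that, giving the statistical term $\si\sfc{L(d_y+d_u+\log(L/\de))}{T}$. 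This is precisely an $\cal H_\iy$-type quantity, controlled by passing to the Fourier domain and bounding $\sup_{z\in\T}\ve{\cdot}$ via the sharper $\cal H_\iy$ concentration tools advertised in the abstract.

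\textbf{Step 2 (robust Ho-Kalman).} Under the hypothesis $\ep' = O(\si_{\min}(H^-))$, Weyl's inequality separates the top-$d$ singular subspace of $\Hankel_{L\times(L-1)}(F)$ from the tail, and Wedin's $\sin\Theta$ theorem bounds the perturbation of the left/right singular subspaces in operator norm by $O(\ep'/\si_{\min}(H^-))$. Ho-Kalman reads the balanced observability and controllability factors off these subspaces; the first/last blocks give $\wh C$ and $\wh B$, the unitary $W$ being the orthogonal alignment between true and estimated subspaces, and converting the operator-norm subspace bound to a Frobenius bound over a rank-$d$ object costs a factor $\sqrt d$, yielding the $O(\sqrt d\cdot \ep')$ estimates. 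The matrix $\wh A$ is recovered by applying a pseudoinverse to a shifted sub-block of the realized Hankel matrix; the pseudoinverse amplifies the error by $1/\si_{\min}(H^-)$, and propagating the perturbation through the product introduces the additional $\pa{\ve{\Phi_{\cal D}}_{\cal H_\iy}/\si_{\min}(H^-)+1}$ factor. This is exactly the content of the Ho-Kalman perturbation lemma of~\cite{oymak2018non} with $T_1=L$, $T_2=L-1$, which we invoke as a black box after supplying the input $\ep'$.

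The main obstacle is Step~1: obtaining the spectral-norm concentration for the Hankel matrix of correlated regression residuals (and process-noise contributions) with the correct $\sqrt L$ and logarithmic dependence. A naive route through the Frobenius bound of Theorem~\ref{t:main} would over-count by as much as $\sqrt{Ld}$ and destroy the intended rate, so the Fourier-domain $\cal H_\iy$ estimate is essential here. Once the operator-norm input is established, Step~2 is a bookkeeping application of the stated Ho-Kalman robustness result, tracking the $\sqrt d$ (operator-to-Frobenius) and pseudoinverse factors.
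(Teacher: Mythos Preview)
Your proposal is correct and follows the same two-step strategy as the paper: bound the operator norm of the Hankel perturbation via the $\cal H_\iy$ concentration (Lemma~\ref{l:linreg2}) combined with Lemma~\ref{l:hankel-hiy}, then feed this into the Ho--Kalman robustness analysis. Two small clarifications are worth making. First, your split in Step~1 is redundant: the Hankel matrix $H^-$ only uses entries $F^*(1),\ldots,F^*(2L-2)$, so the piece $\Hankel_{L\times(L-1)}(F^*\one_{[2L,\infty)})$ is identically zero; the truncation error $\ep_{\mathrm{trunc}}$ is already absorbed into the regression bound of Lemma~\ref{l:linreg2}. Second, the Ho--Kalman perturbation lemma in~\cite{oymak2018non} is stated with a Frobenius-norm input $\ve{L-\wh L}_{\sfF}$, not an operator-norm input, so it cannot be invoked as a black box with $\ep'$. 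The paper instead re-derives the lemma (Lemmas~\ref{l:hk-perturb1}--\ref{l:hk-perturb2}) and inserts the inequality $\ve{L-\wh L}_{\sfF}^2\le 2d\,\ve{L-\wh L}^2$ (both $L$ and $\wh L$ have rank $d$) at the outset; this is exactly the rank-$d$ operator-to-Frobenius conversion you correctly identified, but it happens inside the perturbation lemma rather than after it.
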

\iftoggle{alt}{\begin{algorithm2e}[h!]
 \DontPrintSemicolon
\caption{Ho-Kalman algorithm (from~\cite{oymak2018non})}
\KwIn{Length $T$, Markov parameter matrix estimate $F$, system order $d$, Hankel shape $(T_1,T_2+1)$ with $T_2+T_1+1=T$.}
Form the Hankel matrix $\wh H = \Hankel_{T_1,T_2+1}(F)$.\;
Let $\wh H^-\in \R^{d_y T_1 \times d_u T_2}$ be the first $d_uT_2$ columns of $\wh H$.\;
Let $\wh L = U\Si V^\top \in \R^{d_y T_1\times d_u T_2}$ be the rank-$n$ SVD of $\wh H^-$.\;
Let $\wh O\in \R^{d_yT_1\times d} = U\Si^{1/2}$.\;
Let $\wh Q\in \R^{d\times d_uT_2} = \Si^{1/2} V^\top$.\;
Let $\wh C$ be the first $m$ rows of $\wh O$.\;
Let $\wh B$ be the first $p$ columns of $\wh Q$.\;
Let $\wh H^+\in \R^{d_y T_1\times d_uT_2}$ be the last $d_uT_2$ columns of $\wh H$.\;
Let $\wh A = \wh O^\dagger \wh H^+ \wh Q^\dagger$.\;
\KwOut{$\wh A\in \R^{d\times d}, \wh B\in \R^{d\times d_u}, \wh C\in \R^{d_y\times d}$.}
%\end{algorithmic}
\label{a:hk}
\end{algorithm2e}}{}
\iftoggle{arxiv}{\begin{algorithm}[h!]
\caption{Ho-Kalman algorithm (from~\cite{oymak2018non})}
\begin{algorithmic}[1]
\INPUT Length $T$, Markov parameter matrix estimate $F$, system order $d$, Hankel shape $(T_1,T_2+1)$ with $T_2+T_1+1=T$.
\State Form the Hankel matrix $\wh H = \Hankel_{T_1,T_2+1}(F)$.
\State Let $\wh H^-\in \R^{d_y T_1 \times d_u T_2}$ be the first $d_uT_2$ columns of $\wh H$.
\State Let $\wh L = U\Si V^\top \in \R^{d_y T_1\times d_u T_2}$ be the rank-$n$ SVD of $\wh H^-$.
\State Let $\wh O\in \R^{d_yT_1\times d} = U\Si^{1/2}$.
\State Let $\wh Q\in \R^{d\times d_uT_2} = \Si^{1/2} V^\top$.
\State Let $\wh C$ be the first $m$ rows of $\wh O$.
\State Let $\wh B$ be the first $p$ columns of $\wh Q$.
\State Let $\wh H^+\in \R^{d_y T_1\times d_uT_2}$ be the last $d_uT_2$ columns of $\wh H$.
\State Let $\wh A = \wh O^\dagger \wh H^+ \wh Q^\dagger$.
\OUTPUT $\wh A\in \R^{d\times d}, \wh B\in \R^{d\times d_u}, \wh C\in \R^{d_y\times d}$.
%\end{algorithmic}
\end{algorithmic}\label{a:hk}\end{algorithm}}{}
As $L$ can be chosen to make $\ep_{\mathrm{trunc}}$ negligible, this gives $\wt O\pa{\sfc{Ld(d_u+d_y)}{T}}$ rates, however, with factors depending on the minimum eigenvalue of $H$. This is an improvement over the $\wt O\pa{\sqrt d\sqrt[4]{\fc{L(d_u+d_y)}{T}}}$ rates in~\cite{oymak2018non}. Note that our rates still a square-root dependence on the memory $L$; and leave it an open question whether one can obtain logarithmic dependence similar to Theorem~\ref{t:main}.
%https://arxiv.org/pdf/2104.01120.pdf

We prove Theorem~\ref{t:main} in Section~\ref{s:main-proof} and Theorem~\ref{t:param} in Appendix~\ref{s:param}. We give a lower bound in Section~\ref{a:lb} that shows that in the absence of process noise, the rate in Theorem~\ref{t:main} is optimal up to logarithmic factors.

\section{Related work} %{Comparison to prior work} %{Literature review}
\label{s:prior-work}

%There are several classes of methods for learning partially observable LDS's; we are primarily interested in those with provable non-asymptotic guarantees.

%The prediction error-minimization (PEM) approach attempts to solve a nonconvex objective in $A,B,C,D$ directly. However, because the problem is highly nonconvex, few guarantees are known, and global optimization is expected to be difficult in high dimensions. One positive result is from \cite{hardt2018gradient}, who show that despite the non-convexity of the problem, gradient descent finds the global minimum in the restricted case of acquiescent SISO systems.

%Below, we survey two other 
We survey two classes of methods for learning partially observable LDS's, subspace identification and improper learning. With the exception of~\cite{rashidinejad2020slip}, 
%few exceptions, 
all guarantees have sample complexity depending on the memory length $L$, which we wish to avoid. 
%Note that all %the methods currently 
%guarantees have sample complexity depending on the %rollout length
%memory length $L$, which we wish to avoid. 
%In particular, even though the subspace identification techniques eventually output matrices of the same dimensions as $A\in \R^{d\times d},B\in \R^{d\times m},C\in \R^{n\times d}$, they still depend on $L$ because they go through estimation of a $L\times L$ matrix. I suspect this dependence is unavoidable with current methods. For the improper methods the dependence on $L$ is expected because are optimizing over a larger hypothesis class. 

\subsection{Subspace identification}

%The advantage of subspace identification is that it gives the system matrices $A,B,C$ up to a similarity transformation, which is often desirable in control theory. The idea is to learn a Hankel matrix, and then do a rank-$k$ decomposition. The disadvantage is that it requires the system to be robustly identifiable: the analysis goes through matrix perturbation bounds, so the methods depend on quantities like the smallest eigenvalue.

The basic idea of subspace identification~\citep{ljung1998system,qin2006overview,van2012subspace} is to learn a certain structured matrix (such as a Hankel matrix), take a best rank-$k$ approximation (using SVD or another linear dimensionality reduction method), and learn the system matrices $A,B,C,D$ up to similarity transformation.
%\cite{qin2006overview} gives an overview of classical methods in system identification based on spectral methods, including SVD and other linear dimensionality reduction methods. 
Usage of spectral methods circumvents the fact that the most natural optimization problem for $A,B,C,D$ is non-convex. However, classical guarantees for these methods are asymptotic. %\fixme{But useful info?}

Recently, various authors have given non-asymptotic guarantees for system identification algorithms.  \cite{oymak2018non} analyzed the Ho-Kalman algorithm~\citep{ho1966effective} in this setting. \cite{sarkar2019finite} consider the setting where system order is unknown and give 
%improves the analysis and gives 
an end-to-end result for prediction, while \cite{tsiamis2020sample} consider the problem of online filtering, that is, recovering $x(t)$'s up to some linear transformation.
%    I wonder if you can lump together eigenvalues less than a cutoff and ignore them. (I think yes, \cite{sarkar2019finitetime} do something like this.)
%online filtering problem $\wt J=\sqrt{\lim_{T\to \iy}\rc T\sumz kT \ve{\wt x_k-x_k}^2}$, obtain optimal rates. The rates in~\cite{tsiamis2019sample} are $\rc{\sqrt{N}} \sqrt{fmpn} \rc{1-\rh(A)} \rc{1-\rh(A-KC)}$. Can you remove dependence on $\rh(A)$, $\rh(A-KC)$, $f$, $p$? (Their $m,n$ is our $n,p$ above.)
\cite{simchowitz2019learning} give guarantees under more general conditions of noise and marginal stability; however their main bound is for the \emph{truncated} Markov parameters, and to capture all but $\epsilon$ of the impulse response, we would have to truncate at the memory length $L$, which would incur dependence on $L$. Moreover, they require low phase rank, a condition which we do not expect to hold generically for ``random'' linear dynamical systems with eigenvalues close to 1. An advantage of their approach is that they are able to achieve consistent recovery of system parameters without taking the truncation length to be as large as the memory length.
%First, we note that the main bound in [1] is for the *truncated* Markov parameters $G=[D, CB,\ldots, CA^{T-2}B]$ (Proposition 2.1/Theorem 3.3), and they in fact incur dependence on T in their rates (not to be confused with T in our notation). If we wanted to capture all but $\epsilon$ of the impulse response, then we would have to take T to be the memory length (L in our notation), and this would in fact incur the undesirable memory dependence. The advantage is that they are able to achieve consistent recovery of system parameters, for which it is not necessary to take T to be as large as the memory length.
%Secondly, we note that low phase rank is a restrictive assumption which we do not expect to hold generically for "random" linear dynamical systems with eigenvalues close to 1 - rather, it requires special structure on the eigenvalues.  As the result of Simchowitz et al. is exponential in the phase rank, their paper only gives an $\exp(cd)$ upper bound in this case.

An alternate, empirically successful approach is that of nuclear norm minimization or regularization~\citep{fazel2013hankel}. \cite{sun2020finite} (building on~\cite{cai2016robust}) give explicit rates of convergence, and show that the algorithm has a lower minimum sample complexity and is easier to tune. 
%Our guarantee based on Hankel SVD builds on the work in~\cite{oymak2018non,sun2020finite}, but gives better bounds.
%For fully observed LDS's, \cite{wagenmaker2020active} show that better bounds can be obtained in $\cal H_\iy$ norm through active learning: choosing inputs $u(t)$ not in iid fashion, but to maximally excite the system in directions that we wish to estimate.
%\fixme{Alternatively,} One can also try to solve the non-convex problem head-on: 
%We note that a different approach is to use gradient descent.

Our algorithm is based on the classical approach of taking a low-rank approximation of the Hankel matrix, but we repeat this process with Hankel matrices of sizes $L\times L $, $L/2\times L/2$, $L/4 \times L/4$, and so forth; this is key modification that allows us to obtain better statistical rates. Our analysis builds on the analyses given in~\cite{oymak2018non,sun2020finite}. As essential part of the analysis is analyzing linear regression for correlated inputs, where we extend the work of~\cite{djehiche2019finite} to MIMO (multiple input multiple output) systems, as explained below.

\subsubsection{Linear regression with correlated inputs}

An important step in obtaining non-asymptotic rates for system identification is analyzing linear regression for correlated inputs. The most challenging step is to lower-bound the sample covariance matrix of inputs to the linear regression. A lower bound, rather than a matrix concentration result, is sufficient~\citep{mendelson2014learning,simchowitz2018learning,matni2019tutorial}; however, a concentration result is obtainable in our setting.

\cite{tu2017non} give non-asymptotic bounds for learning the finite impulse response for a SISO (single input single output) system in $\ell^\iy$ Fourier norm; however, they require $L$ rollouts of size $O(L)$ and hence $\Om(L^2)$ timesteps. %Having pure frequencies and multiple rollouts allows a 
%The setting of multiple rollouts is easier because it allows a more straightforward use of concentration inequalities.
Addressing the more challenging single-rollout setting, 
%We leave open the possibility of designing deterministic inputs that are statistically efficient.
%In contrast to~\cite{tu2017non}, the paper~\cite{simchowitz2018learning} goes beyond concentration bounds, and relies on the more general technique of lower bounding the sample covariance matrix. Their approach using the martingale small-ball condition gives essentially optimal bounds for learning a \emph{fully observed} LDS, depending on the dimension rather than the mixing time. However, this technique is not suitable in our case, as naively casting a length-$L$ FIR as a $L$-dimensional LDS incurs a factor of $L$ in the minimal sample complexity, making it $\Om(L^2)$ in total.
\cite{oymak2018non} obtain bounds for a single rollout of $\wt \Om(L)$ timesteps, by using concentration bounds for random circulant matrices~\cite{krahmer2014suprema} to 
derive concentration inequalities for the covariance matrix. 
These concentration inequalities for the covariance matrix were improved (by logarithmic factors) by \cite{djehiche2019finite}. 
%gives bounds in the setting of a single rollout of $\wt \Om(L)$ timesteps by giving concentration bounds for the covariance matrix. 
Although \cite{djehiche2019finite} give an analysis in the SISO case, as we show in Theorem~\ref{t:conc}, the results can be extended to the MIMO case with an $\ep$-net argument.
%In the SISO case, our Lemma~\ref{l:simin2} is a direct corollary of their Theorem 3.4, which works more generally for subgaussian random variables. Note that their approach can also be extended to the MIMO setting. We give a simpler proof of Lemma~\ref{l:simin2}, as we only require a lower bound, rather than concentration.

\subsection{Improper learning using autoregressive methods}
%Autoregressive methods (improper)}

Instead of solving the statistical problem of identifying parameters, another line of work develops algorithms for regret minimization in online learning. 
%considers the framework of regret minimization in online learning. 
The goal is simply to do well in predicting future observations, with small loss (regret) compared to the best predictor in hindsight; the learned predictor is allowed to be improper, that is, take a different functional form. %In the stochastic case, this would be compared to the 
In the stochastic case, this allows prediction almost as well as if the actual system parameters were known; however, the framework also allows for adversarial noise.
%Another approach is to dispense with the low-rank structure of the system; instead of identifying the system matrices, $A,B,C$, the goal is simply to predict as well as if we knew the system. The statistical problem of identifying parameters is replaced by the framework of regret minimization in online learning, which simply requires 

One popular strategy for improperly learning the system is to 
%note that the LDS, whether in its original form or in the predictor (Kalman filter) form, can be ``unrolled": $y(t)$ can be written as a linear function of past inputs $u(0),\ldots, u(t-1)$ and observations $y(0),\ldots, y(t-1)$. We can truncate the dependence to $u(t-L),\ldots, u(t-1)$ and $y(t-L),\ldots, y(t-1)$ for a sufficient length $L$, and in this way 
learn a linear autoregressive filter over previous inputs and observations, or ARMA model. Naturally, because we are optimizing over a larger hypothesis class, the statistical rates depend on $L$ rather than the system order $d$.
%the dependence on $L$ is expected because are optimizing over a larger hypothesis class. 

%\fixme{Mention ARMAX methods.}

%A concrete problem is to learn to predict as well as the Kalman filter, with regret depending on the system order k rather than the ``rollout length" l (which depends on the stability of the filter - if it decays as $\rho^t$, then there is a $1/(1-\rho)$ dependence). This is phrased as an online learning problem, but it can also be phrased as an offline problem. 
\cite[Theorem 4.7]{Ghai20} consider the problem of online prediction for a fully or partially observed LDS, and give a regret bound that 
%case of Problem~\ref{p:main} that 
depends polynomially on the memory length $L$. Their approach works even for marginally stable systems, that is, systems with $\rh(A)\le 1$.  
See also \cite{AnavaHMS13,hazan17learning,hazan2018spectral,Kozdoba2019OnLineLO,tsiamis2020online,rashidinejad2020slip} for previous work using autoregressive methods. 

Of particular interest to us is~\cite{rashidinejad2020slip}, %one of the few works 
which gives rates independent of spectral radius. Building on~\cite{hazan17learning}, they observe that it suffices to regress on previous inputs and outputs projected to a lower-dimensional space. Their algorithm works in the setting of process noise and competes with the Kalman filter, but only when $A-KC$ has real eigenvalues, where $K$ is the Kalman gain. %The main idea is to project the inputs and previous outputs onto a low-dimensional space, and regress the outputs on the transformed features.

%\section{Approach using Low-rank Hankel SVD}
\section{Proof of main theorem}
\label{s:svd}\label{s:main-proof}

In this section, we prove Theorem~\ref{t:svd-main}. The proof hinges on the following lemma, which shows that if we observe a low-rank matrix plus noise, then taking a low-rank SVD can have a de-noising effect, producing a matrix that is closer to the true matrix.
\begin{lem}[De-noising effect of SVD] \label{l:svd}
There exists a constant $\const$ such that the following holds.
Suppose that $A\in \C^{m\times n}$ is a rank-$r$ matrix, $\wh A = A + E$, and $\wh A_r$ is the rank-$r$ SVD of $\wh A$. Then
\begin{align}\label{e:main}
    \ve{\wh A_r - A}_{\sfF} &\le \const\sqrt{r} \ve{E}.
\end{align}
\end{lem}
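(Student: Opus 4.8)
The plan is to exploit two facts: that $\wh A_r - A$ has low rank, and that for low-rank matrices the Frobenius norm is controlled by the operator norm. First I would observe that $\wh A_r$ has rank at most $r$ and $A$ has rank $r$, so their difference $\wh A_r - A$ has rank at most $2r$. For any matrix $M$ of rank at most $k$ one has $\ve{M}_\sfF \le \sqrt k\,\ve{M}$, since $\ve{M}_\sfF^2 = \sum_i \si_i(M)^2 \le k\,\si_1(M)^2 = k\ve{M}^2$. Applying this with $k = 2r$ gives $\ve{\wh A_r - A}_\sfF \le \sqrt{2r}\,\ve{\wh A_r - A}$, which reduces the claim to an operator-norm bound on $\wh A_r - A$.

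Next I would bound this operator norm by the triangle inequality, writing $\ve{\wh A_r - A} \le \ve{\wh A_r - \wh A} + \ve{\wh A - A}$. The second term is exactly $\ve{E}$. For the first term, recall that $\wh A_r$ is by definition a minimizer of $\ve{R - \wh A}$ over matrices $R$ of rank at most $r$; since $A$ is itself a rank-$r$ competitor, optimality yields $\ve{\wh A_r - \wh A} \le \ve{A - \wh A} = \ve{E}$. Combining, $\ve{\wh A_r - A} \le 2\ve{E}$, and therefore $\ve{\wh A_r - A}_\sfF \le 2\sqrt{2r}\,\ve{E}$, which is \eqref{e:main} with $C = 2\sqrt 2$.

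I expect no serious obstacle here; the argument is short and uses only the variational characterization of the SVD truncation together with the rank bound. The one point that deserves a word of care is the optimality of $\wh A_r$ \emph{in operator norm}. This is immediate from the stated definition of $\wh A_r$ as the operator-norm minimizer; should one instead define $\wh A_r$ through the Frobenius-norm best rank-$r$ approximation (as is common), the Eckart--Young--Mirsky theorem guarantees the truncated SVD is simultaneously optimal for every unitarily invariant norm, so the same inequality $\ve{\wh A_r - \wh A} \le \ve{E}$ holds and the bound is unchanged.
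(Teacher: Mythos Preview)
Your proof is correct and follows essentially the same route as the paper: bound the rank of $\wh A_r - A$ by $2r$, pass from Frobenius to operator norm, and split via the triangle inequality. The only cosmetic difference is in justifying $\ve{\wh A_r - \wh A}\le \ve{E}$: the paper writes $\ve{\wh A_r - \wh A} = \si_{r+1}(\wh A) \le \si_{r+1}(A) + \ve{E} = \ve{E}$ via Weyl's inequality, while you invoke the variational characterization with $A$ as a rank-$r$ competitor; both yield the same constant $C = 2\sqrt 2$.
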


Compare this with the original error $\ve{\wh A - A}_{\sfF}=\ve{E}_{\sfF}$, which can only be bounded by $\sqrt{\min\{m,n\}}\ve{E}$. \iftoggle{neurips}{}{It is an interesting question what the best constant $\const$ is.}  
When applied to the $d$-SVD of the Hankel matrix, this gives a factor of $\sqrt d$ rather than $\sqrt L$ for the error.

\begin{proof}
We have 
\begin{align}
\label{e:svd-simp1}
    \ve{\hat A_r - A}_{\sfF} &\le \sqrt{2r} \ve{\hat A_r - A}_2\\
    \label{e:svd-simp2}
    &\le \sqrt{2r} \pa{\ve{\hat A_r - \hat A}_2 + \ve{\hat A - A}_2}\\
    \label{e:svd-simp3}
    &\le 2\sqrt{2r}\ve{E}
\end{align}
where~\eqref{e:svd-simp1} follows from $\hat A_r - A$ having rank at most $2r$,~\eqref{e:svd-simp2} follows from the triangle inequality, and~\eqref{e:svd-simp3} follows from Weyl's Theorem:  $\ve{\hat A_r - \hat A}_2 \le \si_{r+1}(\hat A)\le \si_{r+1}(A) + \ve{E}=\ve{E}$.
\end{proof}
%It is an interesting question to find the 

To prove Theorem~\ref{t:svd-main}, we will need to obtain bounds for %$F\in \R^{(2L-1)\times d_y\times d_u}$ 
$F:\{0, 1,\ldots, 2L-1\}\to \R^{d_y\times d_u}$
learned from linear regression in $\cal H_\iy$ norm.  
The following is our main technical result. %, which we prove in Appendix~\ref{s:linreg} using concentration arguments in the Fourier domain.
 %, and just state here the result in a convenient form.
%We first state Lemma~\ref{l:linreg-mimo} in a convenient form.

%We first restate Lemma~\ref{l:linreg-mimo} in a convenient form.
%\fixme{Lemma~\ref{l:linreg} restated. (HL note: doesn't match right now - will fix later)}

\begin{restatable}{lem}{linreg}
\label{l:noise-f-liy-mimo}\label{l:linreg-mimo}\label{l:noise-f-liy2}\label{l:linreg2}
There are $\const_1,\const_2$ such that the following hold.
Suppose $y=F^**u + G^**\xi + \eta$ where $u(t)\sim N(0,I_{d_u})$, $\xi(t) \sim N(0, \Si_x)$, $\eta(t) \sim N(0,\Si_y)$ for $0\le t<T$, and $\Supp(F^*), \Supp(G^*)\subeq [0,\iy)$. 
Let $F=\amin_{F\in \{0,\ldots, L\} \to \R^{d_y\times d_u}} \sumz t{T-1} |y(t) - (F*u)(t)|^2$, 
$M_{G^*}= (G^*(0),\ldots, G^*(L))^\top \in \R^{(L+1)d\times d_y}$, and
$\ep_{\mathrm{trunc}} = \ve{F^*\one_{[L+1,\iy)}}_{\cal H_\iy} \sqrt{d_u} +\allowbreak \ve{G^*\one_{[L+1,\iy)}}_{\cal H_\iy}\ve{\Si_x^{1/2}}_{\sfF}$. %\sqrt{\ve{\Si_x}_F/d_u}$. 
For $0<\de\le \rc2$, $T\ge\Tmin$, $1\le L'\le L$, $-1\le a<L-L'$,
\begin{align*}
    %\Pj\Bigg(
    &\ve{(F-F^*)\one_{[a+1,a+L']}}_{\cal H_\iy} \\
    &
    \le %\left.
    \const_2 \ba{
    \sfc 1T \pa{\sqrt{\ve{\Si_y} L' \pa{d_u+d_y + \log \pf{L'}{\de}}} + 
    \sqrt{ \ve{\Si_x} L'L d_u \log\pf{Ld_u}{\de}}\ve{M_{G^*}}}
    %\sfc{(\ve{\Si_y} + \ve{\Si_x}\ve{G^*}_{\cal H_\iy}^2)L'(d_u+d_y)\log \pf {L'}\de}{T} 
    + \ep_{\mathrm{trunc}}
    %\sqrt d
    %\pa{\sqrt d + \fc{\sqrt{\log \prc{\de}}}{\sqrt T}}
    }
    %\right) \le \de.
\end{align*}
with probability at least $1-\de$.
%\fixme{also write mimo case}
\end{restatable}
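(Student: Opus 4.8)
The plan is to reduce the whole statement to a bound on the windowed Fourier transform of the estimation error, and then control the three contributions to that transform (observation noise, process noise, truncation) by conditioning on the inputs. Write the regression in matrix form: let $\Phi$ be the design matrix whose $t$-th column is the stacked input window $\phi(t)=(u(t),u(t-1),\ldots,u(t-L))\in\R^{(L+1)d_u}$ (with $u(s)=0$ for $s<0$), and let $W$ be the matrix whose $t$-th column is the full residual $w(t)=(F^*\one_{[L+1,\iy)}*u)(t)+(G^**\xi)(t)+\eta(t)$. Since $a\ge-1$ and $a+L'<L$, the window $[a+1,a+L']$ lies in $[0,L-1]$, so on that window $F^*$ and $F^*\one_{[0,L]}$ agree and it suffices to bound $\ve{(F-F^*\one_{[0,L]})\one_{[a+1,a+L']}}_{\cal H_\iy}$; the normal equations give $F-F^*\one_{[0,L]}=W\Phi^\top(\Phi\Phi^\top)^{-1}$. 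Introduce the block ``Fourier window'' $\psi(\om)\in\C^{(L+1)d_u\times d_u}$ whose $s$-th block is $e^{-2\pi i\om s}I_{d_u}$ for $s\in[a+1,a+L']$ and $0$ otherwise, so that $\psi(\om)^H\psi(\om)=L'I_{d_u}$. Then the windowed transform at frequency $\om$ is $\wh E(\om)=(F-F^*\one_{[0,L]})\psi(\om)=W\Phi^\top(\Phi\Phi^\top)^{-1}\psi(\om)$, and $\ve{(F-F^*\one_{[0,L]})\one_{[a+1,a+L']}}_{\cal H_\iy}=\sup_{\om}\ve{\wh E(\om)}$.

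The crucial uniform estimate comes from the sample covariance. Applying Theorem~\ref{t:conc} (valid once $T\ge\Tmin$), with probability $1-\tfrac\de3$ we have $\tfrac1T\Phi\Phi^\top\succeq cI$ for an absolute constant $c$; combined with $\psi(\om)^H\psi(\om)=L'I_{d_u}$ this gives $\ve{(\Phi\Phi^\top)^{-1/2}\psi(\om)}^2\le\frac{L'}{cT}$ uniformly in $\om$. This is exactly where the windowing pays off: the factor is $L'$ (the window length) rather than $L$.

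Now split $W=W_\eta+W_\xi+W_{\text{tr}}$ into observation noise, process noise, and input-truncation parts and bound each contribution to $\wh E(\om)$ separately. For the observation noise, condition on the inputs, so that $\Phi$ and $g=\Phi^\top(\Phi\Phi^\top)^{-1}\psi(\om)q$ are fixed with $\ve{g}^2\le L'/(cT)$; then for fixed unit vectors $p\in\C^{d_y},q\in\C^{d_u}$ the scalar $p^H\wh E_\eta(\om)q=p^HW_\eta g$ is Gaussian with variance at most $\ve{\Si_y}\ve{g}^2\le\ve{\Si_y}L'/(cT)$. A Gaussian tail bound together with an $\ep$-net over the two unit spheres and over the frequency (the integrand is a trigonometric polynomial of degree $O(L)$, so a net of size $\poly(L')$ suffices)---the MIMO extension of the $\ep$-net argument behind the SISO result of~\cite{djehiche2019finite}---yields the first term, the net cardinality contributing the sharp $d_u+d_y+\log\pf{L'}\de$ factor. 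For the process noise, decompose $G^*=G^*\one_{[0,L]}+G^*\one_{[L+1,\iy)}$, fold the tail into $W_{\text{tr}}$, and for the head bound $\ve{\wh E_\xi(\om)}\le\ve{W_\xi\Phi^\top(\Phi\Phi^\top)^{-1/2}}\cdot\ve{(\Phi\Phi^\top)^{-1/2}\psi(\om)}$. The first factor is the operator norm of the process noise projected onto the $(L+1)d_u$-dimensional regressor subspace $\range(\Phi^\top)$, whose energy is $O(\ve{\Si_x}\ve{M_{G^*}}^2Ld_u\log\pf{Ld_u}\de)$ with probability $1-\tfrac\de3$---the extra factor $L$ arising because convolution with $G^*$ spreads each noise sample over a window of length $L$---and, multiplied by the $\sqrt{L'/(cT)}$ bound, this produces the second term.

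The remaining and most delicate contribution is the truncation bias $W_{\text{tr}}$, coming from $F^*\one_{[L+1,\iy)}*u$ and $G^*\one_{[L+1,\iy)}*\xi$. Unlike the noise terms, the first of these is correlated with the design $\Phi$, so we cannot simply condition and invoke Gaussian concentration; instead we treat it in the Fourier domain as an additional frequency-dependent perturbation of the response whose gain at each frequency is bounded by the $\cal H_\iy$ norm of the corresponding tail, so that its effect on $\sup_\om\ve{\wh E(\om)}$ is $O(\ve{F^*\one_{[L+1,\iy)}}_{\cal H_\iy}\sqrt{d_u}+\ve{G^*\one_{[L+1,\iy)}}_{\cal H_\iy}\ve{\Si_x^{1/2}}_{\sfF})=O(\ep_{\mathrm{trunc}})$, the $\sqrt{d_u}$ and $\ve{\Si_x^{1/2}}_{\sfF}$ accounting for the input and process-noise dimensions. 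Finally, a union bound over the three sub-events of probability $\tfrac\de3$ each, and the triangle inequality $\ve{\wh E(\om)}\le\ve{\wh E_\eta(\om)}+\ve{\wh E_\xi(\om)}+\ve{\wh E_{\text{tr}}(\om)}$ taken as a supremum over $\om$, assembles the three pieces into the claimed bound. I expect the main obstacle to be this truncation term---handling the tail that is correlated with the design inputs---together with extracting the sharp $d_u+d_y$ (rather than $(L+1)d_u$) dimensional dependence in the first term, which is what forces the bilinear-form-plus-net treatment there rather than a crude operator-norm split.
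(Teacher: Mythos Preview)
Your decomposition and overall strategy match the paper's: write $M_F-M_{F^*}=(U^\top U)^{-1}U^\top M_\eta + (U^\top U)^{-1}U^\top W M_{G^*} + (U^\top U)^{-1}U^\top M_e$ (here $W$ is the Toeplitz matrix of the $\xi(t)$'s), bound the sample covariance once, and then handle the three pieces separately. Two points are worth flagging.

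First, for the observation-noise term the paper does \emph{not} use a bilinear $\ep$-net over $p\in\bS^{d_y-1}$, $q\in\bS^{d_u-1}$. Instead, for each fixed $\om$ it observes that the $d_y$ columns of $M_\om=(\phi_\om^H\otimes I_{d_u})(U^\top U)^{-1}U^\top M_\eta$ are i.i.d.\ (one per column of $M_\eta$) and applies the sample-covariance concentration bound (Theorem~\ref{t:covar-conc}) to $\sum_j m_j m_j^H$; this directly produces the $d_y+d_u+\log(L'/\de)$ factor. Your bilinear-net route also works and gives the same dependence, so this is a genuine but minor methodological difference.

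Second, you have the difficulty of the three terms inverted. The truncation term is the \emph{easiest}, not the most delicate: the paper simply bounds $\ve{(\phi_\om^H\otimes I)E_3}\le \sqrt{L'}\,\ve{(U^\top U)^{-1}U^\top}\,\ve{M_e}$, and then $\ve{M_e}\le \ve{e}_2\le \ve{F^*\one_{[L+1,\iy)}}_{\cal H_\iy}\ve{u}_2+\ve{G^*\one_{[L+1,\iy)}}_{\cal H_\iy}\ve{\xi}_2$ by Parseval; correlation with the design is irrelevant because only magnitudes enter. Conversely, your process-noise step hides the real work. After factoring $W_\xi=M_{G^*}^\top \Xi^\top$ (with $\Xi$ the Toeplitz matrix of $\xi$), the quantity you need is $\ve{U^\top \Xi}$, a cross term between two independent block-Toeplitz matrices. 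The paper proves this as a separate lemma (Lemma~\ref{l:UTW}) by the same ``pass to the multiplication polynomial and apply Hanson--Wright over a frequency net'' argument used for $U^\top U$; your sentence ``the extra factor $L$ arising because convolution spreads each noise sample'' does not yet supply this. Filling in that cross-Toeplitz bound is the missing ingredient in your sketch.
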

In the case $\Si_x=O$, this roughly says that the error in the learned impulse response, $F-F^*$, over any interval of length $L'$, has all Fourier coefficients bounded in spectral norm by $\wt O\pa{\sfc{L'(d_u+d_y)}T}$, what we expect if the error from linear regression is uniformly distributed over all frequencies.

A complete proof is in Appendix~\ref{s:linreg}; we give a brief sketch.  First, because the errors are Gaussian, the error from linear regression, $F-F^*$, follows a Gaussian distribution. To bound its covariance, we lower-bound the smallest singular value of the sample covariance of the inputs (Lemma~\ref{l:simin2}, Appendix~\ref{s:lb-covar}). %This is an extension of~\cite{djehiche2019finite} to the MIMO setting. 
Here, the difficulty is that the inputs are \emph{correlated}---the input at time $t$ is $u_{t:t-L}$. Fortunately, the translation structure means it is close to a submatrix of an infinite block Toeplitz matrix, which becomes block diagonal in the Fourier domain. This ``decoupling'' allows us to show concentration. Compared to the SISO setting in~\cite{djehiche2019finite}, we require an extra $\ep$-net argument. Once we have a bound on the covariance, we can bound any $\ve{(\wh{F - F^*})(\omega)}$ by matrix concentration (Appendix~\ref{s:max-mimo}); to bound the $\cal H_\iy$ norm it suffices to bound this over a grid of $\omega$'s (Lemma~\ref{l:interp}).
%Because the sample covariance is almost a circulant matrix, it is close to a 

%The significance of Lemma~\ref{l:noise-f-liy-mimo} is that it can bound the spectral norm of the Hankel matrix. 
Bounding the error in $\cal H_\iy$ norm of the impulse response allows us to bound the error in operator norm of the Hankel matrix, as the following lemma shows.
%The following lemma bounds the Hankel operator norm by the $\cal H_\iy$ norm of the impulse response.
\begin{lem}\label{l:hankel-hiy}
For any $F:\Z\to \C^{m\times n}$, we have $\ve{\Hankel_{a\times b}(F)} \le \ve{F}_{\cal H_\iy}$. 
\end{lem}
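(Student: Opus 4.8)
The plan is to dualize the operator norm against unit test vectors and pass to the Fourier domain, where the block-Hankel structure ``decouples'' and Parseval reduces the estimate to a single application of Cauchy--Schwarz. As a preliminary reduction, I would replace $F$ by $F\one_{[0,\iy)}$: this changes neither side, since the blocks $F(i+j-1)$ appearing in $\Hankel_{a\times b}(F)$ all carry index $i+j-1\ge 1$, while $\ve{F}_{\cal H_\iy}=\ve{\cal Z F}_{\cal H_\iy}$ only depends on the nonnegative-index part of $F$. After this reduction the symbol $\wh F(\om)=\sum_{t\ge 0}F(t)e^{-2\pi i\om t}$ is exactly $\cal Z[F]$ restricted to the unit circle, so that $\sup_\om\ve{\wh F(\om)}=\ve{F}_{\cal H_\iy}$.

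Next, fix unit vectors $w=(w_1,\ldots,w_a)\in\C^{am}$ and $v=(v_1,\ldots,v_b)\in\C^{bn}$ with blocks $w_i\in\C^m$ and $v_j\in\C^n$. Writing each block through Fourier inversion, $F(i+j-1)=\int_0^1 \wh F(\om)e^{2\pi i\om(i+j-1)}\,d\om$, and factoring the phase $e^{2\pi i\om(i+j-1)}=e^{2\pi i\om i}e^{2\pi i\om(j-1)}$, I would expand
\[
w^H\Hankel_{a\times b}(F)\,v=\sum_{i=1}^a\sum_{j=1}^b w_i^H F(i+j-1)\,v_j=\int_0^1 W(\om)^H\,\wh F(\om)\,V(\om)\,d\om,
\]
where $W(\om)=\sum_{i=1}^a e^{-2\pi i\om i}w_i\in\C^m$ and $V(\om)=\sum_{j=1}^b e^{2\pi i\om(j-1)}v_j\in\C^n$.

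Finally, I would bound the integrand by $\ve{W(\om)}\,\ve{\wh F(\om)}\,\ve{V(\om)}\le \ve{F}_{\cal H_\iy}\,\ve{W(\om)}\,\ve{V(\om)}$, apply Cauchy--Schwarz in $\om$, and invoke Parseval: orthonormality of $\{e^{2\pi i\om k}\}_k$ on $[0,1]$ gives $\int_0^1\ve{W(\om)}^2\,d\om=\sum_i\ve{w_i}^2=\ve{w}^2=1$ and likewise $\int_0^1\ve{V(\om)}^2\,d\om=1$. This yields $|w^H\Hankel_{a\times b}(F)\,v|\le \ve{F}_{\cal H_\iy}$, and taking the supremum over unit $w,v$ gives the claim. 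I do not expect a genuine obstacle: the content is the classical fact that a finite Hankel compression is dominated by the $L^\infty$ norm of its symbol, and the only care needed is the bookkeeping of phases and conjugates so that the two Parseval identities come out with unit normalization, together with the initial reduction that matches the two-sided transform $\wh F$ with the one-sided Z-transform defining $\ve{\cdot}_{\cal H_\iy}$.
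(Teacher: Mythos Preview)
Your proof is correct and uses essentially the same idea as the paper's: pass to the Fourier domain and invoke Parseval so that the Hankel operator is dominated by the $L^\infty$ norm of the symbol $\wh F$. The paper's version is a touch more compact: rather than dualizing on both sides and expanding via Fourier inversion, it observes that $\Hankel_{a\times b}(F)\,v_{b-1:0}$ is a subvector of the convolution $F*v$ (for $v$ supported on $[0,b-1]$) and then bounds $\ve{F*v}_2=\ve{\wh F\,\wh v}_2\le \ve{F}_{\cal H_\iy}\ve{v}_2$ directly; your explicit reduction to $F\one_{[0,\infty)}$ to match $\wh F$ with $\cal Z F$ on the unit circle is a point the paper leaves implicit.
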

\begin{proof}
Note that when $v:\Z\to \C^n$, $\Supp(v)\subeq [0,b-1]$, we have $(F*v)_{b:b+a-1} = \Hankel_{a\times b}(F) v_{b-1:0}$. Hence for any $v$ set to 0 outside of $[0,b-1]$, using Parseval's Theorem and the fact that the Fourier transform of a convolution is the product of the Fourier transforms, we have
\begin{align*}
    \ve{\Hankel_{a\times b}(F) v_{b-1:0}}_{2}
    &\le \ve{F*v}_2 = \ve{\wh F\wh v}_2 \le 
    \sup_{\om\in [0,1]}\ve{\wh F(\om)}_2
    \ve{\wh v}_2 
    = \ve{F}_{\cal H_\iy}\ve{v}_2 
\end{align*}
This shows that $ \ve{\Hankel_{a\times b}(F)}\le \ve{ F}_{\cal H_\iy}$.
\end{proof}
\iftoggle{neurips}{}{
\begin{rem}
In fact, something stronger is true: we can bound $\ve{\Hankel_{a\times b}(F)}$ by the discrete Fourier transform of $F$ over $\Z/(a+b-1)$~\cite[Theorem 3]{sun2020finite}. For consistency, we stick to using the Fourier transform over $\Z$; in light of Lemma~\ref{l:interp}, this only affects the result by constant factors.
\end{rem}}

Theorem~\ref{t:svd-main} will follow from the following bound after an application of the triangle inequality.
\begin{lem}\label{l:main}
There are $\const_1,\const_2$ such that the following holds for the setting of Problem~\ref{p:main}.
Suppose $L$ is a power of 2, and $T\ge \Tmin$. Let %$f_{\mathrm{trunc}}^*:= f^*\one_{[1,L]}$,  
$\ve{F^*\one_{[L+1,\iy)}}_{\cal H_\iy} \sqrt{d_u}+ \ve{G^*\one_{[L+1,\iy)}}_{\cal H_\iy}\ve{\Si_x^{1/2}}_{\sfF}$ and %$\si = \sqrt{\ve{\Si_y} + \ve{\Si_x}\ve{\Phi_{x\to y}}_{\cal H_\iy}^2}$.
$M_{x\to y} = (O, C, CA, \ldots, CA^{L-1})^\top\in \R^{(L+1)d\times d_y}$. 
Then with probability at least $ 1-\de$, the output $\wt F$ given by Algorithm~\ref{a:svd} satisfies 
{\small
\begin{align*}
    %\Pj \ba{
    \ve{(\wt F-F^*)\one_{[1,L]}}_{\mathsf F}
    &\le
    \const_2
    \Bigg(\sfc{\ve{\Si_y}d\pa{d_y+d_u+\log\pf L\de}\log L}{T} %\\
    %&\quad 
    + 
   \sfc{ \ve{\Si_x} L dd_u \log\pf{Ld_u}{\de}}{T}\ve{M_{G^*}}
   + \ep_{\mathrm{trunc}}\sqrt d
    \Bigg)
\end{align*}}
\end{lem}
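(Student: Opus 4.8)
My plan is to control the output error band-by-band across the dyadic scales $\ell=2^k$ ($0\le k\le\log_2 L$) used by Algorithm~\ref{a:svd}, exploiting the fact that at scale $\ell$ the averaging step acts on diagonals of length $\Te(\ell)$. The engine is the de-noising bound of Lemma~\ref{l:svd}: fixing $\ell$ I would write $\Hankel_{\ell\times\ell}(F)=\Hankel_{\ell\times\ell}(F^*)+E_\ell$ with $E_\ell:=\Hankel_{\ell\times\ell}(F-F^*)$. The true block Hankel matrix factors as $\Hankel_{\ell\times\ell}(F^*)=\cal O_\ell\cal C_\ell$ with $\cal O_\ell=(C;CA;\ldots;CA^{\ell-1})$ and $\cal C_\ell=(B,AB,\ldots,A^{\ell-1}B)$, hence has rank at most $d$. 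Lemma~\ref{l:svd} (with $r=d$) then gives $\ve{R_\ell-\Hankel_{\ell\times\ell}(F^*)}_{\sfF}\le C\sqrt d\,\ve{E_\ell}$, and Lemma~\ref{l:hankel-hiy} converts the operator-norm perturbation into an $\cal H_\iy$ bound, $\ve{E_\ell}\le\ve{(F-F^*)\one_{[1,2\ell-1]}}_{\cal H_\iy}$.

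Next I would pass from the matrix error to the read-off impulse response on the matched band $\ell/2<t\le\ell$. Here $\wt F(t)$ is the average of the $n_t=\Te(t)$ blocks of $R_\ell$ on the skew-diagonal of index $t$, and averaging the same diagonal of $\Hankel_{\ell\times\ell}(F^*)$ returns exactly $F^*(t)$; so $\wt F(t)-F^*(t)$ is the diagonal average of $\Delta_\ell:=R_\ell-\Hankel_{\ell\times\ell}(F^*)$. Cauchy--Schwarz over those $n_t$ blocks gives $\ve{\wt F(t)-F^*(t)}_{\sfF}^2\le n_t^{-1}\sum_{i+j=t}\ve{(\Delta_\ell)_{ij}}_{\sfF}^2$, and since distinct skew-diagonals are disjoint and $n_t\ge\ell/2$ on this band,
\[
\sum_{\ell/2<t\le\ell}\ve{\wt F(t)-F^*(t)}_{\sfF}^2\le\frac2\ell\ve{\Delta_\ell}_{\sfF}^2\le\frac{2C^2d}{\ell}\ve{(F-F^*)\one_{[1,2\ell-1]}}_{\cal H_\iy}^2 .
\]
The factor $1/\ell$ here is the crux of the whole argument.

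I would then insert Lemma~\ref{l:linreg-mimo}, applied with its memory length set to $2L$ (matching the regression support $[0,2L-1]$ in the algorithm), with $a=0$, $L'=2\ell-1$, and failure probability $\de/(\log_2L+1)$ so that a union bound over the $\log_2L+1$ scales succeeds with probability $1-\de$. Its leading term grows like $\sqrt{L'}=\Te(\sqrt\ell)$, so after squaring it contributes a factor $\ell$ that cancels the $1/\ell$ above, leaving a per-scale bound
\[
\frac{2C^2d}{\ell}\ve{(F-F^*)\one_{[1,2\ell-1]}}_{\cal H_\iy}^2\lesssim\frac dT\pa{\ve{\Si_y}\pa{d_u+d_y+\log\tfrac L\de}+\ve{\Si_x}Ld_u\log\tfrac{Ld_u}\de\,\ve{M_{G^*}}^2}+\frac d\ell\,\ep_{\mathrm{trunc}}^2 .
\]
The first group is scale-independent up to logs, so summing over the $\log_2L+1$ scales costs only a factor $\log L$, while the truncation terms $\tfrac d\ell\ep_{\mathrm{trunc}}^2$ form a geometric series summing to $O(d\,\ep_{\mathrm{trunc}}^2)$. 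Since the bands $(\ell/2,\ell]$ partition $[1,L]$, adding these estimates and taking square roots yields the claimed bound on $\ve{(\wt F-F^*)\one_{[1,L]}}_{\sfF}$.

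The main obstacle is the bookkeeping in the second step: one must resist bounding each diagonal by the \emph{full} Frobenius error $\ve{\Delta_\ell}_{\sfF}$ (which would reinstate a factor of $\ell$) and instead use disjointness of skew-diagonals so that the per-band total is genuinely $\tfrac1\ell\ve{\Delta_\ell}_{\sfF}^2$. It is exactly this $1/\ell$ averaging gain cancelling the $\sqrt\ell$-growth of the $\cal H_\iy$ regression error that converts a would-be polynomial-in-$L$ blow-up into a single $\log L$ factor; it also explains why each scale is exploited only on its matched band $(\ell/2,\ell]$, where the diagonals are longest and the SVD de-noising strongest. A secondary point to handle carefully is that Lemma~\ref{l:svd} controls the Frobenius (not merely operator) norm of $\Delta_\ell$, which is what makes the diagonal-sum argument close.
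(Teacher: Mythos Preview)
Your proposal is correct and follows essentially the same route as the paper: decompose at each dyadic scale, apply Lemma~\ref{l:svd} to the rank-$d$ true Hankel plus Lemma~\ref{l:hankel-hiy} and Lemma~\ref{l:linreg-mimo} for the perturbation, then use the diagonal-averaging argument (the paper phrases it as ``the mean minimizes the sum of squared errors,'' which is exactly your Cauchy--Schwarz step) to extract the crucial $1/\ell$ gain before summing squares over disjoint bands and union-bounding with $\de/(\log_2 L+1)$.
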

%This achieves the expected optimal rates up to log factors, as $d(d_y+d_u)$ is the parameter count for the problem.
\begin{proof}
We are in the situation of Lemma~\ref{l:linreg2} with $G^*(t) = CA^{t-1}\one_{t\ge 1}$.
Let 
$\cal H_{\ell}  = \Hankel_{\ell \times \ell}(F)$ and 
$\cal H_\ell^* = \Hankel_{\ell \times \ell}(F^*)$.
Suppose $\ell\le L$ is even. Note that 
\begin{align*}
    \cal H_\ell &= \ub{\Hankel_{\ell\times \ell} (F^*)}{\cal H_\ell^*} + \Hankel_{\ell\times \ell}(F-F^*)
\end{align*}
where $\cal H_\ell^*=\Hankel_{\ell\times \ell} (F^*)$ is a rank-$d$ matrix, with error term is bounded by
\begin{align}\nonumber
    &\ve{\Hankel_{\ell\times \ell}(F-F^*)}
    %\le \ve{\Hankel_{L\times L}(F-F^*)}
    \le \ve{(F-F^*_{\mathrm{trunc}})\one_{[1,2\ell-1]}}_{\cal H_\iy}
    & \text{by Lemma~\ref{l:hankel-hiy}}\\
    &<\const 
    %\fc{\sqrt \ell}{\sqrt T}\pa{\sqrt{\ve{\Si_y}(d_y+d_u)\log \pf{\ell}{\de}}
    %+ \ep_{\mathrm{trunc}}\pa{\sqrt T + \sqrt{\log\prc{\de}}}} 
    %\pa{\si \sfc{\ell(d_y+d_u)\log\pf L\de}{T} +  \ep_{\mathrm{trunc}}\pa{\sqrt d + \sfc{\log\prc{\de}}T}
    %}
    \Bigg[
    \sfc 1T \Bigg(\sqrt{\ve{\Si_y} \ell \pa{d_u+d_y + \log \pf{L'}{\de}}}\nonumber\\
    &\quad + 
    \sqrt{ \ve{\Si_x} \ell L d_u \log\pf{Ld_u}{\de}}\ve{M_{G^*}}
    +\ep_{\mathrm{trunc}}
    \Bigg]
    &\text{by Lemma~\ref{l:linreg2}}
    \label{e:siso1}
\end{align}
with probability at least $1-\de$.
Let $R_\ell$ be the rank-$d$ SVD of $ \cal H_\ell$. Then by Lemma~\ref{l:svd}, 
\begin{align}\label{e:siso2}
    \ve{R_\ell - \cal H_\ell^*}_{\sfF} %&\le \sqrt{5d}
    &=O\pa{\sqrt d\ve{\Hankel_{\ell\times \ell}(F-F^*)}}.
\end{align}
Now letting
\iftoggle{neurips}{$\wt F(t) = \rc t \sum_{i+j=t} (R_\ell)_{ij}$ when $\fc \ell 2<t\le \ell$}{
\begin{align*}
    \wt F(t) &= \rc t \sum_{i+j=t} (R_\ell)_{ij} &\text{when }\fc \ell 2<t\le \ell,
\end{align*}}
we have (using the fact that the mean minimizes the sum of squared errors)
\begin{align*}
    \ve{R_\ell - \cal H_\ell^*}_{\sfF}^2 &\ge 
    \sum_{t=\fc \ell 2+1}^\ell \sum_{i+j=t} \ve{(R_\ell)_{ij} - F^*(t)}_{\sfF}^2\\
    &\ge \sum_{t=\fc \ell 2+1}^\ell \pa{t \cdot \ve{\wt F(t) - F^*(t)}_{\sfF}^2} 
    \ge \pa{\fc \ell 2+1} \sum_{t=\fc \ell2+1}^\ell \pa{\ve{\wt F(t) - F^*(t)}_{\sfF}^2}.
\end{align*}
Note that we only get a lower bound with a factor of $\ell$ if we restrict to $t$ that is $\Te(\ell)$, i.e., restrict to diagonals that have many entries. This is the reason we will have to repeat this process for multiple sizes.
Hence
\begin{align*}
    \ve{(\wt F-F^*)\one_{[\fc \ell 2+1,\ell]}}_{\sfF}^2 &\le \rc{\ell/2} \ve{R_\ell - \cal H_\ell^*}_{\sfF}^2.
\end{align*}
Together with~\eqref{e:siso2} and~\eqref{e:siso1} this gives with probability $\ge 1-\de$ that
{\small
\begin{align*}
    \ve{(\wt F-F^*)\one_{[\fc \ell 2+1,\ell]}}_{\sfF} &\le 
    \const
    %\pa{\si \sfc{d(d_y+d_u)\log\pf L\de}{T} +  \ep_{\mathrm{trunc}}\pa{\sfc d\ell + \sfc{\log\prc{\de}}{\ell T}}
    %}
    \Bigg(\sfc{\ve{\Si_y}d\pa{d_y+d_u+\log\pf L\de}}{T}%\\
    %&\quad 
    + 
        \sfc{ \ve{\Si_x} L d_u \log\pf{Ldd_u}{\de}}{T}\ve{M_{G^*}}
    +
    \fc{\ep_{\mathrm{trunc}}\sqrt d}{\sqrt \ell}
    \Bigg)
\end{align*}}
Replacing $\de$ by $\fc{\de}{\log_2L}$, using a union bound over powers of 2, and summing gives
{\small\begin{align*}
    \ve{(\wt F-F^*)\one_{[1,L]}}_{\sfF}& = 
    O
    \Bigg(
    \sfc{\ve{\Si_y}d\pa{d_y+d_u+\log\pf L\de}\log L}{T}%\\
    %&\quad 
    + \sfc{ \ve{\Si_x} Ld d_u \log\pf{Ld_u}{\de}\log L }{T}\ve{M_{G^*}}+
    \ep_{\mathrm{trunc}}\sqrt d
    \Bigg).
    %O\pa{\si \sfc{d(d_y+d_u)\log L\log\pf L\de}{T} +  \ep_{\mathrm{trunc}}\pa{\sqrt d + \sfc{\log_+\pf{\log L}{\de}}T}}.
\end{align*}}
%Finally, note $\sfc{\log_+\pf{\log L}{\de}}T=O(\sqrt d)$.
\end{proof}

\begin{proof}[Proof of Theorem~\ref{t:svd-main}]
We have the bound in Lemma~\ref{l:main}, and also the same bound for $\ve{(\wt F-F^*)(0)}_{\sfF}$ after applying Lemma~\ref{l:linreg2} to $(F-F^*)\de_0$ and then applying Lemma~\ref{l:svd}.
%We also bound $\sqrt{\ve{\Si_x}_\sfF/d_u}\le \sqrt{\ve{\Si_x}}$ for simplicity.
Finally, note that $\ve{(\wt F - F^*)\one_{(L,\iy)}}_{\sfF} = \ve{F^*\one_{(L,\iy)}}_{\sfF}$ and use the triangle inequality.
\end{proof}

\section{Experiments}

%\iftoggle{neurips}{\footnote{Attached code will be publicly released with the final version of the paper.}}{}
We compared three algorithms for learning the impulse response function: least-squares, and low-rank Hankel SVD with and without the multi-scale repetition. %(repeating the SVD for geometrically decreasing sizes). 
We include details of the experimental setup in Appendix~\ref{s:experimental-details}. Note that to reduce the number of scales, we consider use a slight modification of our Algorithm~\ref{a:svd} which triples the size at each iteration instead.

The plots show the error $\ve{F^*\one_{[1,L]}-F}_2$, where $F$ is the estimated impulse response on $[1,L]$, averaged over 10 randomly generated LDS's, as a function of the time $T$ elapsed. We consider systems of order $d=1, 3, 5, 10$, and memory lengths $L=27, 81$. 

Using SVD significantly reduces the error, supporting our theory which shows that SVD has a ``de-noising'' effect. 
%However, it appears that multiscale SVD does help in practice over naive SVD, and instead suffers a slight overhead. It is an interesting question whether we can achieve the same provable rates without it.
%Using the multiscale SVD appears to be slightly more robust than naive SVD when $d$ and $L$ is large and data is limited, but the performance is similar in a data-rich setting. This suggests that the multiscale repetition may not be necessary; it is an interesting question whether we can achieve the same rates without it.
Additionally,  multiscale SVD has better performance than naive SVD when $d$ is moderate, $L$ is large, and data is limited, but the performance is similar in a data-rich setting.

\begin{figure}[h!]
\centering
\includegraphics[width=0.4\linewidth]{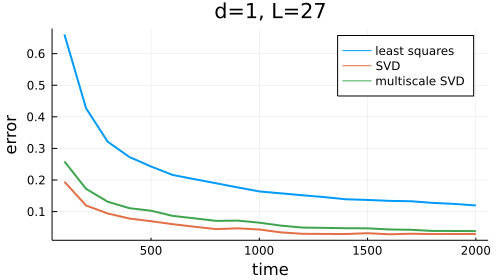} %\includegraphics[scale=0.25]{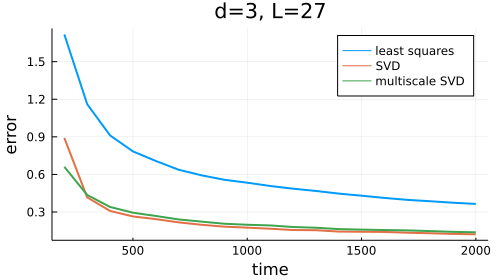} \\
\includegraphics[width=0.4\linewidth]{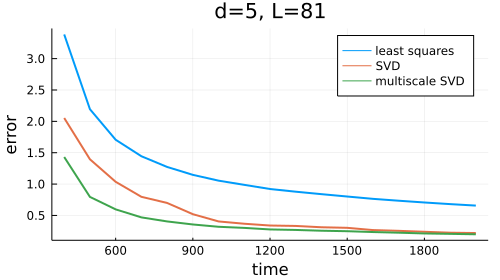} %\includegraphics[scale=0.25]{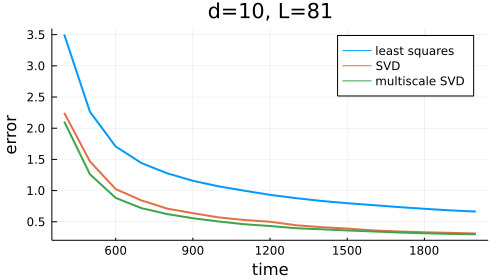}
\end{figure}

%\section{Conclusion and future work}
\section{Open problems}
\label{s:open}

%For a partially observed LDS (with gaussian observation noise and no process noise), we give the first algorithm for learning the impulse response that attains near-optimal statistical rates in Frobenius ($\cal H_2$) norm. The rate depends only on the order of the system, and not on the rollout length $L$ or the spectral radius of $A$. Our algorithm is based on taking low-rank SVD's of the Hankel matrix of geometrically varying sizes. 

%We give several open problems below.
We conclude with some open problems. It would be interesting to obtain analogous rates (depending on system order) for the nuclear norm regularized problem~\citep{sun2020finite}. Spectral methods also suggest the possibility of obtaining regret bounds for adaptive control of partially-observed systems with milder dependence on $\rc{1-\rh(A)}$. We give several other problems below.

\paragraph{Process noise.} 
A natural open question is to obtain better guarantees in the presence of process noise $\xi(t)$.
Using the heuristic of parameter counting, we do not believe $\poly(L)$ memory dependence is necessary in this setting, and that the maximum likelihood estimator, although computationally inefficient, will attain memory-independent rates. Thus, the crux of the problem is to give a computationally efficient algorithm with mild memory dependence in this setting.

We note that in Theorem~\ref{t:svd-main}, the factor multiplying $\sqrt{\ve{\Si_x}}$ is $\sqrt L\ve{M_{x\to y}}_{2}$, which we expect to be on the order of $L$ when $L$ is the minimal sufficient memory length. This term arises because process noise can accumulate over $L$ timesteps.
In the case where $\xi(t)\sim N(0,\Si_x)$ is iid gaussian, the Kalman filter shows that we can rewrite the system in the predictor form~\citep{qin2006overview}
    \begin{align}
    x(t)^- &= A_{\KF} x(t-1)^- + B_\KF \coltwo{u(t-1)}{y(t-1)}\\
    y(t) &= Cx(t)^- + Du(t) + e(t)
\end{align}
where
$x(t)^-$ is the maximum likelihood and least squares estimator for $x(t)$ given the values of $u(s)$ and $y(s)$ for $s<t$; $A_{\KF}$ and $B_{\KF}$ are matrices which can be calculated in terms of $A, B, C, \Sigma_x, \Sigma_y$,  $e(t)\sim N(0,\Si_{\KF})$ for some covariance matrix $\Si_{\KF}$ that can be calculated in terms of $A,B,C,\Si_x,\Si_y$. 
This is now a \emph{filtering} problem, where we have to regress the output on both previous inputs $u(t)$ and outputs $y(t)$. %Although one can still run the same algorithms by including the previous $y(t)$ as inputs, the analysis is much more challenging as the $y(t)$ are not independent, but highly correlated. 
This is more challenging, because unlike  previous $u(t)$, the previous $y(t)$ are highly correlated. 
One can perhaps treat this as a low-rank approximation in a different norm.

\paragraph{$\cal H_\iy$ error bounds.} How can we learn the system with $\cal H_\iy$ error bounds, that is, obtain error bounds under worst case input? This is particularly useful in control. We do not expect we can achieve $\sfc{d}{T}$ rates under iid inputs $u(t)$. However, it may be possible to take an active learning approach, by maximally exciting the system at frequencies we wish to learn, as in~\citep{wagenmaker2020active}.
%we can hope to choose the $u(t)$ to maximally excite the system at frequencies we wish to learn, as in~\cite{wagenmaker2020active}

\paragraph{Improved rates for learning system matrices.} Can we learning the system matrices with rates depending logarithmically on the memory $L$, perhaps by incorporating the multi-scale idea into system identification?

\paragraph{Input design.} In this work we choose iid random inputs, but can we estimate more efficiently with well-designed deterministic inputs? Can we design inputs to respect constraints such as constraints on frequencies? \cite{sarker2020parameter} suggests that efficient estimation is possible under general conditions on the inputs.

\paragraph{More general noise.} Do guarantees still hold if the noise satisfies weaker conditions such as sub-gaussianity? A key difficulty is bounding the maximum Fourier coefficient (as in Lemma~\ref{l:fft-iy-mimo}).

\section*{Acknowledgements}
The author thanks Xue Chen for useful discussions.

\printbibliography

\appendix
%\input{background}
%\section{Analysis of algorithm}

\newpage

\section{Linear regression for impulse response}
\label{s:linreg}

In this section we prove Theorem~\ref{l:linreg-mimo}: under iid gaussian inputs, we can obtain high-probability error bounds for the transfer function of the learned impulse response in %$f$ in Fourier $\ell^\iy$ norm. 
$\cal H_\iy$ norm. Moreover, these bounds kick in as soon as we have $\wt\Om(L)$ samples from a \emph{single} rollout. We note that analyzing the multiple-rollout setting as in~\cite{sun2020finite} is more straightforward, so we will not consider it here.

The main difficulty for analyzing linear regression is that the inputs are correlated. 
The most challenging step is to lower-bound the sample covariance matrix of inputs to the linear regression. 

In the SISO (single input single output) setting,~\cite{djehiche2019finite} give concentration bounds for the covariance matrix with $T=\wt \Om(L)$ timesteps. 
First, we extend this to the MIMO (multiple input multiple output) setting in Theorem~\ref{t:conc} (Note that~\cite{oymak2018non} consider the MIMO case but have extra log factors.)
Then, we use Gaussian suprema arguments as in~\cite{tu2017non} to obtain bounds for the transfer function in $\cal H_\iy$ norm (Lemma %~\ref{l:fft-iy},~
\ref{l:fft-iy-mimo}).

%We suppose the inputs $u_0,\ldots, u_{T+L-2}\sim N(0,I_m)$ are iid, observe $y_1,\ldots, y_{T+L-1}$, and perform linear regression on the filter. 
We suppose the inputs $u(0),\ldots, u(T-1)\sim N(0,I_{d_u})$ are iid, observe $y(0),\ldots, y(T-1)\in \R^{d_y}$, and perform linear regression on the finite impulse response $F:\{0,1,\ldots, L\} \to \R^{d_y\times d_u}$ (which we will also treat as an element of $\R^{(L+1)\times d_y\times d_u}$ without further comment). 
% First, given a sequence $(F(t))_{t=0}^{a-1}$ where each $F(t)\in \C^{m\times n}$, define the Toeplitz matrix as the block matrix 
% \begin{align*}
% %     \Toep_{a\times b}(v) &= \begin{bmatrix}v(0)^\top & 0 & \cdots & 0\\
% % v(1)^\top & v(0)^\top & \cdots & 0\\
% % \vdots & \vdots & \ddots & \vdots\\
% % v(a-1)^\top & v(a-2)^\top & \cdots & 
% % \end{bmatrix} \in \R^{a\times bm}.
%     \Toep_{a\times b}(F) &= \begin{bmatrix}F(0) & 0 & \cdots & 0\\
% F(1) & F(0) & \cdots & 0\\
% \vdots & \vdots & \ddots & \vdots\\
% F(a-1) & F(a-2) & \cdots & 
% \end{bmatrix} \in \R^{am\times bn}.
% \end{align*}

Recall that given a sequence $(F(t))_{t=0}^{a-1}$ where each $F(t)\in \C^{m\times n}$, the Toeplitz matrix is given by
\begin{align*}
    \Toep_{a\times b}(F) &= \begin{bmatrix}F(0) & 0 & \cdots & 0\\
F(1) & F(0) & \cdots & 0\\
\vdots & \vdots & \ddots & \vdots\\
F(a-1) & F(a-2) & \cdots & 
\end{bmatrix} \in \C^{am\times bn}.
\end{align*}

\paragraph{SISO setting.}
For simplicity, first consider the SISO setting: $d_u=d_y=1$ and $\eta(t)\sim N(0,1)$.
In this case, we learn a finite impulse response $f\in \R^{L+1}$ by minimizing the loss function 
%\hlnote{Actually $d=1$ here. For $d>1$ we need to write as $(y_1,\ldots,y_T) = (F_1,\ldots, F_L) \Toep((u_t)_{t\ge 0})$ with upper-triangular instead. Alternatively transpose into row vectors.}
\begin{align}\label{e:f-loss}
    \ve{y - u*f}_{[0,T-1]}^2  =
    \sumz t{T-1} \ve{y(t) - u_{t:t-L}^\top f}^2
    &=
    \ve{y_{0:T-1} - U f}^2
\end{align}
where we let 
$y_{0:T-1}$ denote the vertical concatenation of $y(0),\ldots ,y(T-1)$ and similarly for $u_{t:t-L}$, and let 
$U = \Toep_{T\times (L+1)}((u(t))_{t\ge 0})$. We set $u(t)=0$ for $t<0$. Solving the least-squares problem gives
\begin{align*}
    f &= (U^\top U)^{-1}U^\top y_{0:T-1}.
\end{align*}
%Rewrite
Suppose that the data is generated as $y = f^**u + \eta$ where $\eta(t)\sim N(0,1)$ are independent and $f^*$ is supported on $[0,L]$. Later, we will consider the effect of truncating an infinite response. We abuse notation by considering $f,f^*$ both as functions $\Z\to \R$ and as vectors in $\R^{L+1}$, as they are supported in $[0,L]$. Similarly, we consider $y,\eta$ as vectors in $\R^T$. Then as vectors in $\R^T$, $y = Uf^* + \eta$. Hence the error is
\begin{align*}
    f - f^* &= (U^\top U)^{-1}U^\top (Uf^* + \eta) - f^*
    = (U^\top U)^{-1}U^\top  \eta
\end{align*}
Because $\eta$ has iid Gaussian entries, 
\begin{align*}
    f - f^* &\sim N(0, (U^\top U)^{-1}).
\end{align*}
%% This is confusing: write without the truncation error
% Suppose that the data is generated as $y = f^**u + \eta$ where $\eta(t)\sim N(0,1)$ are independent. Then 
% \hlnote{Fix notation---going between functions and vectors is confusing}
% \begin{align*}
%     y&= f^* * u + \eta \\
%     y_{1:T} &= U f_{1:L}^* + [(f^*\one_{[L+1,\iy)}) * u]_{1:T} + \eta,
% \end{align*}
% so the error is
% %\hlnote{In general we also need to consider truncation error.} Then $y=Uf^* +\eta$, so the error is
% % \Toep_{T\times L}((\xi_t)_{t\ge 1})
% \begin{align*}
%     f-f^*_{1:L} &=
%     \ub{(U^\top U)^{-1} U^\top [(f \one _{[L+1,\iy)}^*)*u]_{1:T}}{\text{truncation error}} +
%     (U^\top U)^{-1} U^\top \eta.
% \end{align*}
% %Because $\eta$ has iid Gaussian entries, the second term satisfies
% Because $\eta$ has iid Gaussian components, the second term satisfies
% % \begin{align*}
% %     %f - f^* 
% %     (U^\top U)^{-1} U^\top \eta &\sim N(0,(U^\top U)^{-1}).
% % \end{align*}
% \begin{align*}
%     (U^\top U)^{-1} U^\top \eta &\sim N(0, (U^\top U)^{-1} U^\top (I_T\ot \Si_y) U (U^\top U)^{-1})\\
%     \text{where }
%     (U^\top U)^{-1} U^\top (I_T\ot \Si_y) U (U^\top U)^{-1} &\preceq \ve{\Si_y} (U^\top U)^{-1}
% \end{align*}
To bound this, we need to bound $\ve{(U^\top U)^{-1}}$, and hence bound the smallest singular value of $U^\top U$. 

\paragraph{Notation for MIMO setting.} For a vector or matrix-valued function $F:\{a,a+1,\ldots, b\}\to \C^{d_1\times d_2}$, define %let $M_F\in \C^{(b-a+1)d_2\times d_1}$ denote
\begin{align*}
M_{F,a:b} &= \colthree{F(a)^\top}{\vdots}{F(b)^{\top}}\in  \C^{(b-a+1)d_2\times d_1}
\end{align*}
with the indices omitted if they are clear from context.

\paragraph{MIMO setting.} In the general case, we would like to learn $F=(F(t)\in \R^{d_y\times d_u})_{t=0}^L\in \R^{(L+1)\times d_y\times d_u}$. 
Now suppose the data is generated as 
\begin{align*}
y &= F^* * u + G^* * \xi + \eta
\end{align*}
where $F^*, G^*$ are supported on $[0,\iy)$ and $\eta(t) \sim N(0, \Si_y)$, $\xi(t)\sim N(0,\Si_x)$, $t\ge 0$ are independent. Let $U=\Toep_{T\times (L+1)}((u(t)^\top)_{t=0}^{T-1})$ as before. Truncating $F^*$ and $G^*$, we have
\begin{align*}
y &= (F^*\one_{[0,L]} )* u + (G^*\one_{[0,L]}) * \xi + \eta + e\\
\text{where }
e(t) &= (F^* \one_{[L+1,\iy)}) * u + (G^*\one_{[L+1,\iy)}) * \xi.
\end{align*}
Thus, by taking the transpose and stacking vectors,
\begin{align*}
M_{y, 0:T-1} &= U M_{F^*, 0:L-1} + W M_{G^*, 0:L-1} + M_{\eta,0:T-1} + M_{e,0:T-1}\\
\text{where }
W&= \Toep_{T\times (L+1)}((\xi(t))^\top).
\end{align*}
The least squares solution $F$ minimizes $\ve{Y-UM_F}_\sfF^2$, so and the error is
\begin{align}\label{e:diffF}
M_F - M_{F^*} &= 
(U^\top U)^{-1} U^\top M_{\eta} + 
(U^\top U)^{-1} U^\top W M_{G^*} + (U^\top U)^{-1} M_e.
\end{align}

%The data is generated as $y=F^**u+\eta$ where $\eta(t)\sim N(0,I_{d_u})$ are independent. 
%Let $U=\Toep_{T\times (L+1)}((u(t)^\top)_{t=0}^{T-1})$ as before. We can check that
%\begin{align*}
%    \vect_{21}(y) &= (I_{d_y}\ot U) \vect_{213}(F) + \vect_{21}(\eta).
%\end{align*}
%Hence
%\begin{align*}
%    \vect_{213}(F-F^*) &= (I_{d_y}\ot (U^\top U)^{-1}U^\top)\vect_{21}(\eta) \\
%    &\sim N(0, (I_{d_y}\ot (U^\top U)^{-1}U^\top)(\Si_y\ot I_T)(I_{d_y}\ot U(U^\top U)^{-1}))\\
%    &= N(0, \Si_y\ot (U^\top U)^{-1}).
%\end{align*}
%\hlnote{I changed the vec notation up to here, need to propagate changes.}
% \fixme{
% For a sequence $v(t)\in \R^n$, define the circulant matrix 
% \begin{align*}
%         \Circ((v(t))_{t=0}^{L-1}) &= \begin{bmatrix} v(0) & v(L-1) & \cdots & v(1)\\
% v(1) & v(0) & \cdots & v(2)\\
% \vdots & \vdots & \ddots & \vdots\\
% v(L-1) & v(L-2) & \cdots & v(0)
% \end{bmatrix} \in \R^{nd\times n}.
% \end{align*}
% Let $\cal F=\rc{\sqrt n} (e^{\fc{2\pi \bi jk}{n}})_{0\le j,k<n}$ be the change-of-basis matrix from the normalized Fourier basis to the standard basis. 
% \begin{pr}
% Let $v(t)\in \R^n$. We have
% \begin{align*}
%     \vvt{\Circ((v(t))_{t=0}^{L-1})} &= \cal F^* \diag((\wh v(t)\wh v(t)^*)_{t=0}^{L-1})\cal F.
% \end{align*}
% \end{pr}
% }

%\fixme{
%We can obtain a better minimum sample complexity, $\wt O(Ld^2)$ instead of $\wt O(L^2d)$, as follows.
\subsection{Lower bounding sample covariance matrix}
\label{s:lb-covar}
In this subsection we lower bound the sample covariance matrix.

%CL\pa{d_u + \log\pf L\de}
\begin{lem}\label{l:simin2}
There is a constant $\const$ such that the following holds. 
Let $u(t)\sim N(0,I_{d_u})$ and $U=\Toep_{T\times (L+1)}((u(t)^\top)_{t\ge 0})$. Then for $0< \de \le \rc 2$, $T\ge \Tmin$, 
\begin{align*}
\Pj\pa{\si_{\min}(U^\top U) \ge \fc T2} &\ge 1-\de.
%    \Pj\pa{\si_{\min}(U^\top U) < \fc T2} &\le \de.
\end{align*}
\end{lem}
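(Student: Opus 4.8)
The plan is to write $\si_{\min}(U^\top U) = \min_{\ve{x}=1}\ve{Ux}^2$ and read off the minimization as one over filters. Writing $x = (x_0,\dots,x_L)$ with $x_k\in\R^{d_u}$ and $\sum_k \ve{x_k}^2 = 1$, the $t$-th coordinate of $Ux$ is $\sum_{k=0}^{L} x_k^\top u(t-k) = (g*u)(t)$, the convolution of $u$ with the scalar-output filter $g(k)=x_k^\top$ (using $u(s)=0$ for $s<0$). Thus $\ve{Ux}^2 = \sumz t{T-1}\ab{(g*u)(t)}^2$ is the energy of the filtered signal. Since the $u(t)$ are independent with identity covariance, $\E\ve{Ux}^2 = \sum_{k=0}^{L}(T-k)\ve{x_k}^2 \ge T-L$; more precisely $\E[U^\top U]$ is the block-diagonal matrix $\diag\big((T-k)I_{d_u}\big)_{k=0}^{L}$, whose smallest eigenvalue is $T-L$. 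Because $T\ge \Tmin$ we have $T-L \ge \tfrac34 T$, so it suffices to prove the uniform concentration $\ve{U^\top U - \E[U^\top U]}\le \tfrac14 T$ with probability $\ge 1-\de$.

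To control this fluctuation I would pass to the Fourier domain to decouple the correlations, following the SISO analysis of~\cite{djehiche2019finite} and extending it to the MIMO setting (this is exactly Theorem~\ref{t:conc}). The matrix $U^\top U$ is block-Toeplitz: its $(j,k)$ block is the sample autocovariance $\sum_t u(t-j)u(t-k)^\top$, which depends only on $j-k$. A block-Toeplitz matrix built from lags $-L,\dots,L$ is well approximated by the corresponding block-circulant matrix, which the block DFT simultaneously block-diagonalizes; the resulting diagonal blocks are $d_u\times d_u$ smoothed periodograms of $u$ at the Fourier frequencies $\om = 0, 1/N, \dots, (N-1)/N$. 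Each such block concentrates around $I_{d_u}$ by Gaussian/Wishart concentration, and a union bound over the $\Te(L)$ frequencies, together with an $\ep$-net over the unit sphere of $\R^{d_u}$ (the extra ingredient needed to upgrade the SISO argument to MIMO), yields $\ve{U^\top U - \E[U^\top U]}\le \tfrac14 T$ provided $T\ge C_1 L d_u \log\pf{Ld_u}{\de}$. Combining with the previous paragraph gives $\si_{\min}(U^\top U)\ge \tfrac34 T - \tfrac14 T \ge \tfrac T2$.

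The main obstacle is exactly the step hidden inside ``concentrates around $I_{d_u}$'': the raw periodogram $\wh u(\om)\wh u(\om)^H$ at a single frequency is rank one and does \emph{not} concentrate, so one must average over a band of $\Te(T/L)$ neighbouring frequencies before invoking matrix concentration, while simultaneously accounting for the boundary (truncation) discrepancy between the finite Toeplitz matrix $U^\top U$ and its circulant surrogate. This discrepancy is supported on the first and last $O(L)$ rows and columns and contributes an error of size $O(L)$ in operator norm, which is negligible against $T/4$ precisely when $T\gg L$. Keeping the dependence down to the single logarithmic factor $\log\pf{Ld_u}{\de}$ --- rather than the extra log factors in~\cite{oymak2018non} --- is what forces the net to be taken over the low-dimensional sphere of $\R^{d_u}$ at each frequency, instead of over the full $(L+1)d_u$-dimensional sphere of $\R^{(L+1)d_u}$.
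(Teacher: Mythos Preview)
Your reduction is exactly the paper's: Lemma~\ref{l:simin2} follows immediately from the concentration bound Theorem~\ref{t:conc}, since under $T\ge\Tmin$ the fluctuation $\ve{U^\top U-TI}$ (or equivalently around $\E[U^\top U]$, which differs by at most $L$) is at most $T/2$.

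Where your sketch diverges is inside the proof of Theorem~\ref{t:conc}. The paper does not pass through a circulant approximation or periodogram-band averaging. Instead it embeds $U_1^\top U_1-(T-L)I$ into an infinite block-Toeplitz matrix, bounds its operator norm by the $\cal H_\iy$-norm of the multiplication polynomial $P_u(\om)$, discretizes $\om$ via Lemma~\ref{l:interp}, takes an $\ep$-net over the unit sphere in $\R^{d_u}$, and applies Hanson--Wright directly to $v^\top P_u(\om)v=w^\top Mw-T'$, where $w_j=\an{v,u(j)}$ and $M_{jk}=e^{2\pi i(j-k)\om}\one_{|j-k|\le L}$. Since $\ve{M}_\sfF^2\le(2L+1)T$ and $\ve{M}\le 2L+1$, this yields the $\sqrt{TL\log(1/\de)}+L\log(1/\de)$ fluctuation directly, without ever confronting the rank-one periodogram obstacle you flag---Hanson--Wright is effectively doing the ``band averaging'' for you. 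One small correction: $U^\top U$ is not exactly block-Toeplitz (the zero-padding $u(s)=0$ for $s<0$ is precisely why $\E[U^\top U]=\diag((T-k)I_{d_u})$ rather than a scalar multiple of the identity); the paper handles this via the splitting $U=U_1+U_2$, and the boundary contributions $U_1^\top U_2$, $U_2^\top U_2$ are random of order $O(Ld_u\log(d_uL/\de))$ with high probability, not deterministically $O(L)$ as you wrote.
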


This is a corollary of the following concentration bound, which generalizes Theorem 3.4 of~\cite{djehiche2019finite} to the MIMO setting. The main additional ingredient is an $\ep$-net argument to reduce to the analysis of the SISO case. We also swap out the chaining argument with a use of Lemma~\ref{l:interp}, which allows a shorter proof.
\begin{thm}\label{t:conc}
There is $\const$ such that the following holds. 
Suppose $u(t)$, $0\le t<T$ are independent, zero-mean, and $\Ksg$-sub-gaussian (see Definition~\ref{d:subg}), and let $U=\Toep_{T\times (L+1)}((u(t)^\top)_{t\ge 0})$. %\fixme{changed Toep notation.} 
Then for $0<\de\le \rc 2$, $T\ge L$, 
\begin{align*}
    \ve{U^\top U - T I_{d_u}} 
    &\le \const \Ksg^2 
    \pa{
        Ld_u \log \pf{T}{\delta} + \sqrt{TLd_u \log \pf{T}{\delta}}
    }
\end{align*}
with probability $\ge 1-\de$.
\end{thm}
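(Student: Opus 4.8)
The plan is to show that $U^\top U$ concentrates around its mean as tightly as if the overlapping rows $w_t := (u(t)^\top, u(t-1)^\top,\ldots, u(t-L)^\top)^\top \in \R^{(L+1)d_u}$ were independent. First I would center: $\E[U^\top U]$ equals $T\,I_{(L+1)d_u}$ up to a diagonal correction of size $O(L)$ coming from the truncation $u(s)=0$ for $s<0$, which is lower order and absorbed into the final bound, so it suffices to bound $\ve{U^\top U - \E[U^\top U]} = \sup_{\ve{x}=1}\ab{\ve{Ux}^2 - \E\ve{Ux}^2}$. If the $w_t$ were genuinely independent, matrix Bernstein applied to $\sum_t (w_t w_t^\top - \E w_t w_t^\top)$ would give exactly the target: the variance proxy is $\sigma^2 = \ve{\sum_t \E(w_tw_t^\top)^2}= \Te(T L d_u)$ and the per-term bound is $\ve{w_tw_t^\top}\approx \ve{w_t}^2 = \Te(Ld_u)$, yielding $\sqrt{\sigma^2\log}+\ve{w_t}^2\log = \sqrt{TLd_u\log(T/\de)} + Ld_u\log(T/\de)$. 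The entire difficulty is that consecutive $w_t$ overlap in $\Te(Ld_u)$ coordinates, so they are very far from independent.

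The key device for removing this correlation is the Fourier decoupling alluded to in the proof sketch of Lemma~\ref{l:noise-f-liy2}. Because $U$ is block-Toeplitz, the centered Gram matrix is, up to boundary terms supported near the corners, itself block-Toeplitz in the lag index $k$, with $d_u\times d_u$ blocks given by the sample autocovariances $\wh{C}_\tau = \sum_t u(t)u(t+\tau)^\top$. Such a block-Toeplitz matrix is approximately block-diagonalized by the Fourier modes $e^{-2\pi i\om k}$, so its operator norm is controlled by $\sup_{\om}\ve{\wh{C}(\om) - T I_{d_u}}$, where $\wh{C}(\om)=\sum_{|\tau|\le L}\wh{C}_\tau e^{-2\pi i\om\tau}$ is Hermitian and $d_u\times d_u$. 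Making this precise—replacing $U$ by an exactly block-circulant surrogate and bounding the discarded boundary blocks—is where I expect the bulk of the bookkeeping to lie, and it is the step that must be executed carefully so as not to lose a factor of $\sqrt L$. Indeed, a naive $\ep$-net placed directly on the whole sphere in $\C^{(L+1)d_u}$, combined with Hanson--Wright, would incur exactly this loss, because an adversarial test vector can push all of a filter's Fourier mass onto a single frequency, inflating the relevant operator norm from $O(1)$ to $O(L)$; it is precisely the Fourier structure that rules out this worst case. This is the main obstacle.

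With the reduction in hand, I would control $\sup_\om\ve{\wh{C}(\om)-TI_{d_u}}$ by two nested discretizations. For the direction, write $\ve{\wh{C}(\om)-TI_{d_u}} = \sup_{\theta\in S^{d_u-1}_{\C}}\ab{\theta^*(\wh{C}(\om)-TI_{d_u})\theta}$ and replace the sphere by an $\ep$-net $\Theta$ with $\log\ab{\Theta}=O(d_u)$; this is the one extra ingredient beyond the SISO analysis of~\cite{djehiche2019finite} and is the source of the $d_u$ factor. For each fixed $\theta$, the scalar $\theta^*\wh{C}(\om)\theta$ is built entirely from the scalar $K$-sub-gaussian process $\tilde u_\theta(t)=\theta^* u(t)$, so its deviation from $T$ is a quadratic form in sub-gaussians obeying a Bernstein/Hanson--Wright tail $\exp\pa{-c\min(s^2/(K^4 T),\, s/(K^2))}$, which is what produces the two-regime $\sqrt{TLd_u\log}+Ld_u\log$ shape once the union bounds are taken. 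For the frequency, instead of the chaining used in~\cite{djehiche2019finite} I would apply the pointwise concentration on a grid of $O(T)$ frequencies and invoke Lemma~\ref{l:interp} to pass from the grid to all $\om\in[0,1)$; the grid contributes the $\log T$ appearing inside the logarithms. A final union bound over $\Theta$ and the frequency grid, together with the boundary-term estimate, then assembles the stated inequality with probability at least $1-\de$.
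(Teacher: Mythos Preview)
Your proposal is correct and matches the paper's proof: Fourier decoupling reduces $\ve{U^\top U - TI}$ to $\sup_\om$ of a $d_u\times d_u$ symbol $P_u(\om)$, then Lemma~\ref{l:interp} discretizes the frequency, an $\ep$-net over $S^{d_u-1}$ handles the direction, and Hanson--Wright controls each fixed $(\om,\theta)$, with boundary effects treated separately (the paper does this via the split $U=U_1+U_2$ rather than a circulant surrogate). The only cosmetic differences are that the paper uses an $O(L)$ frequency grid rather than your $O(T)$, and it takes a very fine $\ep$-net together with a crude Markov bound on $\ve{P_u(\om)}$ rather than your standard coarse net with $\log|\Theta|=O(d_u)$; both variants yield the stated bound.
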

We first note the fact that infinite Toeplitz matrices become diagonal in the Fourier basis.
\begin{lem}\label{l:mp}
Consider the infinite block Toeplitz matrix $(Z(j-k))_{j,k\in \Z}\in \C^{(\Z\times d_1)\times (\Z\times d_2)}$, where $Z$ is a function $\Z\to \C^{d_1\times d_2}$. In the Fourier basis, it is given by the kernel $\wh Z(\om_1)\one_{\om_1=\om_2}$. That is, if $v:\Z \to \R^{d_2}$,  $\ve{Z}_1, \ve{v}_1<\iy$, then letting
\begin{align*}
    w(j) = \sum_k Z(j-k)v(k),
\end{align*}
we have
\begin{align*}
    \wh w(\om) = \wh Z(\om) \wh v(\om). 
\end{align*}
\end{lem}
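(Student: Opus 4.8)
The plan is to recognize Lemma~\ref{l:mp} as nothing more than the blockwise convolution theorem for the discrete-time Fourier transform: the infinite Toeplitz kernel $(Z(j-k))_{j,k}$ acts on $v$ precisely by convolution, $w = Z * v$, and the Fourier transform converts convolution into pointwise (matrix) multiplication. So I would simply expand the definition of $\wh w$ and reindex, with the only care needed being the justification of an interchange of summation.

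Concretely, I would start from the definition of the Fourier transform and substitute the formula for $w$:
\begin{align*}
\wh w(\om) = \sum_{j\in\Z} w(j)\, e^{-2\pi i \om j}
= \sum_{j\in\Z}\sum_{k\in\Z} Z(j-k)\, v(k)\, e^{-2\pi i\om j}.
\end{align*}
After swapping the order of summation and substituting $m = j-k$ (so that $e^{-2\pi i \om j} = e^{-2\pi i \om m}\,e^{-2\pi i \om k}$), the sum factors as
\begin{align*}
\wh w(\om) = \pa{\sum_{m\in\Z} Z(m)\, e^{-2\pi i \om m}}\pa{\sum_{k\in\Z} v(k)\, e^{-2\pi i \om k}}
= \wh Z(\om)\,\wh v(\om),
\end{align*}
which is exactly the claim. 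The factorization is legitimate because $\wh Z(\om) = \sum_m Z(m) e^{-2\pi i\om m}$ is independent of $k$ and so can be pulled out to the \emph{left} of the $k$-sum; keeping it on the left preserves the matrix–vector multiplication order inherited from $Z(j-k)v(k)$, which matches the shapes $\wh Z(\om)\in\C^{d_1\times d_2}$ and $\wh v(\om)\in \C^{d_2}$.

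The only genuine point to verify is that the interchange and reindexing of the double sum are valid, and this is where the hypotheses $\ve Z_1,\ve v_1 < \iy$ enter; I expect this to be the sole obstacle, everything else being bookkeeping. I would bound the double sum of norms by submultiplicativity,
\begin{align*}
\sum_{j,k} \ve{Z(j-k)\, v(k)} \le \sum_{j,k}\ve{Z(j-k)}\,\ve{v(k)} = \ve Z_1\,\ve v_1 < \iy,
\end{align*}
so the series converges absolutely and Fubini's theorem for sums permits both the reordering and the change of variables. Finally, reading the identity $\wh w = \wh Z\cdot\wh v$ as the statement that conjugating the Toeplitz operator by the Fourier transform produces multiplication by $\wh Z(\om)$ yields the advertised interpretation: in the Fourier basis the operator has kernel $\wh Z(\om_1)\one_{\om_1=\om_2}$, i.e.\ it is (block-)diagonal.
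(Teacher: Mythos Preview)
Your proof is correct and is exactly the paper's approach: the paper's proof is the single line ``Simply note that $w = Z*v$ and so $\wh w = \wh Z \wh v$,'' and you have just unpacked that convolution theorem explicitly, with the added (and welcome) care of using the $\ell^1$ hypotheses to justify the Fubini step.
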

Here, $\wh Z(\om)$ is called the \vocab{multiplication polynomial} of the matrix. 
\begin{proof}
Simply note that $w = Z*v$ and so $\wh w = \wh Z \wh v$.
\end{proof}

We will use the following lemma in order to bound the maximum of the Fourier transform by the maximum at a finite number of points.
\begin{lem}[\cite{bhaskar2013atomic}]
\label{l:interp}
Let $Q(z):= \sumz k{r-1} a_k z^{k}$, where $a_k\in \C$. 
For any $N\ge 4\pi r$, $\ve{Q}_{\cal H_\iy} %:= \sup_{|z|=1} |Q(z)|
\le \pa{1+\fc{4\pi r}{N}} \max_{j=0,\ldots, N-1} |Q(e^{\fc{2\pi i j}{N}})|$. 
\end{lem}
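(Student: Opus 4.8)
The plan is to regard $Q$ as a trigonometric polynomial on the circle, bound its derivative via Bernstein's inequality, and then transfer control from the finite grid to the whole circle by a mean-value estimate. Write $g(\om) := Q(e^{2\pi i \om}) = \sumz k{r-1} a_k e^{2\pi i k\om}$, so that $\ve{Q}_{\cal H_\iy} = \sup_{\om\in[0,1)}\ab{g(\om)}$; set $M := \ve{Q}_{\cal H_\iy}$ and $D := \max_{0\le j< N}\ab{g(j/N)}$, the quantity on the right-hand side.

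First I would invoke Bernstein's inequality for trigonometric polynomials. Since $g$ is a trigonometric polynomial of degree $r-1$ in the angle $2\pi\om$, its derivative obeys $\sup_\om \ab{g'(\om)} \le 2\pi(r-1) M < 2\pi r M$. This is the one genuinely nontrivial ingredient, and I expect it to be the main obstacle; the remaining steps are elementary.

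Next comes the grid argument. Choose $\om^*$ attaining $\ab{g(\om^*)} = M$ (possible by compactness of $\T$), and let $j/N$ be the nearest grid point, so that $\ab{\om^* - j/N}\le \rc{2N}$, since the $N$ equispaced points partition the circle into arcs of length $\rc N$. Applying the fundamental theorem of calculus to the complex-valued $g$ gives $\ab{g(\om^*) - g(j/N)} \le \ab{\om^*-j/N}\,\sup_\om\ab{g'(\om)} \le \rc{2N}\cdot 2\pi r M = \fc{\pi r}N M$. Hence $M \le D + \fc{\pi r}N M$, that is, $\pa{1-\fc{\pi r}N} M \le D$.

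Finally I would use the hypothesis $N\ge 4\pi r$, which gives $\fc{\pi r}N\le \rc4$, so the factor $1-\fc{\pi r}N$ is positive and we may divide. Since $\rc{1-x}\le 1+4x$ for $0\le x\le \rc4$, we obtain $M \le \fc{1}{1-\pi r/N}\,D \le \pa{1+\fc{4\pi r}N}D$, which is exactly the claimed bound.
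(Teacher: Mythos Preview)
Your argument is correct. The paper does not actually prove this lemma; it is quoted from \cite{bhaskar2013atomic}, and your Bernstein-plus-grid approach is exactly the standard proof given there (their Proposition C.2). One small remark: the Bernstein inequality you invoke is the complex version, valid for any $g(\theta)=\sum_{|k|\le n} c_k e^{ik\theta}$ with $c_k\in\C$, which indeed covers the one-sided polynomial $\sum_{k=0}^{r-1}a_k e^{ik\theta}$ with degree $n=r-1$; it may be worth saying this explicitly so a reader does not worry about the real-valued formulation.
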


\begin{proof}[Proof of Theorem~\ref{t:conc}]
By rescaling we may suppose $\Ksg=1$.
Decompose 
\begin{align}
\label{e:U}
    U &= U_1 + U_2 \text{ where}\\
    \nonumber
    U_1 &= \begin{pmatrix}u(0)^{\top} &  & \boldsymbol{0}\\
\vdots & \ddots & u(0)^{\top}\\
u(T-L-1)^{\top} & \ddots & \vdots\\
\boldsymbol{0} &  & u(T-L-1)^{\top}
\end{pmatrix}\\
\nonumber
    U_2 &= \begin{pmatrix}\mathbf{0} & \cdots & \cdots & \mathbf 0\\
u(T-L)^{\top}&&& \vdots \\
\vdots & \ddots & & \vdots \\
u(T-1)^{\top} & \cdots & u(T-L)^{\top} & \mathbf 0
\end{pmatrix}.
\end{align}
Then 
\begin{align}\label{e:conc-decomp}
    U^\top U &= (T-L)I_{Ld_u} + (U_1^\top U_1 - (T-L) I_{Ld_u}) + U_1^\top U_2 + U_2^\top U_1 + U_2^\top U_2. 
\end{align}
Let $\mathsf T$ be the shift operator on functions: $\mathsf Tf(t) = f(t-1)$. 
Let $T' = T-L$ and let $u^{(1)} = u\one_{[0,T'-1]}$. %, and $\ol{u^{(1)}}$ be this function considered as an infinite vector. 
Then the $(j,k)$th block of $U_1^\top U_1$ is
\begin{align*}
    (U_1^\top U_1)_{jk} &= 
    \sum_{t\in \Z}
    (\mathsf T^j u^{(1)})(t) (\mathsf T^k u^{(1)})(t)^\top
    %(\mathsf T^j u^{(1)}) (\mathsf T^k u^{(1)})^\top .
\end{align*}
Define the infinite block Toeplitz matrix in 
$\R^{(\Z\times d_u) \times (\Z\times d_u)}$ by $$Z_{jk} = \sumo tT (\sT^j u^{(1)})(t) (\sT^k u^{(1)})(t)^\top \one_{|j-k|\le L} - T' I_{\Z\times d_u}.$$ 
By Lemma~\ref{l:mp}, the multiplication polynomial of this matrix is
\begin{align}
\nonumber
    P_u(\om) &= \sum_{\ell=-L}^{L}\sum_{t\in \Z} (\sT^\ell u^{(1)})(t) u^{(1)}(t)^\top e^{-2\pi i \ell\om} - T'I_{d_u}\\
    \nonumber
    &=
    \sumr{j,k\in \Z}{|j-k|\le L} u(j) u(k)^\top e^{2\pi i (j-k)\om} - T'I_{d_u}\\
\label{e:conc1}
    &= \begin{pmatrix}u(0) & \cdots & u(T'-1)\end{pmatrix} M \colthree{u(0)^\top}{\cdots}{u(T'-1)^\top} - T'I_{d_u}
\end{align}
where $M\in \C^{\Z\times \Z}$ is the matrix with $M_{jk} = e^{2\pi i (j-k)\om}\one_{|j-k|\le L}$.
In  order to work with a scalar-valued function, we consider for $\ve{v}=1$
\begin{align*}
    v^\top P_u(\om) v &= \sum_{j,k\in \{0,\ldots, T'-1\}}
    \an{v,u(j)} \an{v,u(k)}
    e^{2\pi i \om(j-k)}\one_{|j-k|\le L} - T'\ve{v}^2.
\end{align*}
By Lemma~\ref{l:mp}, %~\ref{l:hankel-hiy} (which holds also for an infinite Hankel matrix), 
\begin{align*}
    \ve{U_1^\top U_1 - T' I_{Td_u}} 
    &\le \ve{Z - T' I_\Z} \le \ve{P_u(\om)}_{\cal H_\iy}.
\end{align*}
Taking $N=\ce{ 8\pi L}$ and noting $e^{2\pi i \om L}P(\om)$ is a polynomial of degree at most $2L$ in $e^{2\pi i \om}$, we have
\begin{align}
\nonumber
    \ve{P_u(\om)}_{\cal H_\iy}
    = \sup_{\om\in [0,1]}\ve{P_u(\om)}
    &= \sup_{\ve{v}=1}\sup_{\om\in [0,1]} |v^\top P_u(\om) v|\\
    \nonumber
    &= \sup_{\ve{v}=1} 2 \max_{\om\in \{0,\rc N,\cdots\}} |v^\top P_u(\om)v|
    &\text{by Lemma~\ref{l:interp}}\\
    \label{e:supp}
    &=2\max_{\om\in \{0,\rc N,\cdots\}} \pa{\sup_{v\in \cal N_\ep} |v^\top P_u(\om)v| + 3\ep \ve{P_u(\om)}
    }
\end{align}
where $\cal N_\ep$ is an $\ep$-net of the unit sphere in $\R^d$. (For arbitrary $v'$ with $\ve{v'}=1$, write $v=v+\De v$ where $v\in \cal N_\ep$ and $\ve{\De v}\le \ep$.) We first bound $v^\top P(\om)v$. Letting $w\in \R^{T'}$ be the vector with entries $w(j) = \an{v,u(j)}$, we have 
\begin{align*}
    v^\top P_u(\om) v &= w^\top M w - T'\ve{v}^2.
\end{align*}
Fix $v$. Because each $u(t)$ is independent $1$-subgaussian, each entry of $w$ is 1-subgaussian. By the Hanson-Wright inequality (Theorem~\ref{t:hw}), for some constant $c>0$, 
\begin{align*}
    \Pj(|v^\top P_u(\om) v |> s)
    &\le 2\exp\ba{-c\cdot \min\bc{\fc{s^2}{\ve{M}_{\sfF}^2}, \fc{s}{\ve{M}}}}.
\end{align*}
We calculate that $\ve{M}_{\sfF}^2 \le (2L+1)T$ and the Fourier transform of the function $e^{2\pi i \om t}\one_{|j-k|\le L}$ satisfies $\ve{\wh f}_\iy \le \ve{f}_1 \le 2L+1$, so by Lemma~\ref{l:mp}, $\ve{M}\le 2L+1$. Then for appropriate $\const$,
\begin{align*}
    \Pj\pa{|v^\top P_u(\om) v | > \const
    \pa{
    \sqrt{TL\log\prc{\de_1}}
    +L\log\prc{\de_1}
    }
    }
    \le \de_1.
\end{align*}
Next we bound $\ve{P_u(\om)}$ and choose $\ep$ appropriately. A crude bound with Markov's inequality suffices to bound $\ve{P_u(\om)}$. We have (because the second moment is at most the sub-gaussian constant)
\begin{align*}
    \E\ve{\begin{pmatrix}u(0) & \cdots & u(T'-1)\end{pmatrix}}_{\sfF}^2 
    &\le \E \sumo j{d_u} \sumz t{T'-1} \an{e_j,u(t)}^2 \le d_u T'
\end{align*}
so with probability $\ge 1-\de_2$, 
$\ve{\begin{pmatrix}u(0) & \cdots & u(T'-1)\end{pmatrix}}_{\sfF}^2 \le \fc{d_uT'}{\de_2}$. 
Hence, for every $\om\in [0,1]$, by~\eqref{e:conc1},
\begin{align*}
    \ve{P_u(\om)+T'I_{d_u}} &\le \ve{\begin{pmatrix}u(0) & \cdots & u(T'-1)\end{pmatrix}}_{\sfF}^2 \ve{M} 
    \le \fc{d_uT'}{\de_2} 2L.
\end{align*}
Choose $\ep = \fc{\de_2}{2d_uLT}$. Then with probability $\ge 1-\de_2$, we have 
\begin{align*}
\sup_{\om\in [0,1]} 3\ep\ve{P_u(\om)} \le 
3\cdot \fc{\de_2}{2d_uLT} \cdot \pa{\fc{d_uT'}{\de_2} 2L+T'}\le 
4.5.
\end{align*}
Now take $\de_1=\fc{\de}2$. By Cor. 4.2.13 of~\cite{vershynin2018high}, there is an $\ep$-net of size $|\cal N_\ep|\le \pa{1+\fc 2\ep}^{d_u} = \exp\pa{d_u\log\pa{1+\fc 2\ep}}=\exp\pa{d_u\log \pa{1+ \fc{8d_uLT}{\de}}}$. Letting $\de_1=\fc{\de}{2|\cal N_\ep|}$ and taking a union bound, with probability $1-\de$ we get 
\begin{align*}
    \eqref{e:supp}&\le \const\pa{
    \sqrt{TLd\log\pf T{\de}}
    +Ld\log\pf T{\de}
    }
    %\pa{T' + cdL \log\pf{T}{\de}}.
\end{align*}

Next consider the term $U_1^\top U_2$.
Let $u^{(1)} = u\one_{[0,T'-1]}$, $u^{(2)} = u\one_{[T',T-1]}$. %Let $\ol{u^{(1)}}$, $\ol{u^{(2)}}$ be the functions considered as infinite vectors. 
This is part of the infinite Toeplitz matrix  with $Z_{jk} =\sum_{t\in \Z} (\sT^j u^{(1)})(t) (T^k u^{(2)})(t)^\top \one_{|j-k|\le L-1}$. In the Fourier basis, 
\begin{align*}
    P_{u,12}(\om) &= 
    \sumr{j,k\in \Z}{|j-k|\le L} \sum_{t\in \Z}e^{-2\pi i j \om} (\sT^j u^{(1)})(t) (\sT^k u^{(2)})(t)^\top e^{2\pi i k\om}\\
    &\sumr{j,k\in \Z}{|j-k|\le L}
    \one_{[0, T'-1]}(j)\one_{[T-L,T-1]}(k)
    e^{2\pi i (j-k) \om}u(j)u(k)\\
    &=\begin{pmatrix}u(0) & \cdots & u(T'-1)\end{pmatrix} M \colthree{u(0)^\top}{\cdots}{u(T'-1)^\top}
\end{align*}
where $M_{jk} = \one_{[0, T'-1]}(j)\one_{[T-L,T-1]}(k) \one_{|j-k|\le L}e^{2\pi i (j-k) \om}$. 
As before, we have
\begin{align*}
    \ve{U_1^\top U_2} &\le 2 \max_{\om\in \{0,\rc N,\cdots\}} \sup_{v\in \cal N_\ep} |v^\top P_{u,12}(\om)v|+ 3 \ep \ve{P_{u,12}(\om)}.
\end{align*}
We calculate $\ve{M}_{\sfF}^2 \le(T-L)(2L+1)$ and each block in $M$ is part of a Toeplitz matrix, so similarly to before $\ve{M}\le 2L+1$. Hence, with probability at least $1-\de$,
\begin{align*}
    \ve{U_1^\top U_2} & \le 
    \const    \pa{
    \sqrt{TL\log\prc{\de}}
    +L\log\prc{\de}
    }
    %c\sqrt{LTd \log_+\pf{dL}{\de}}
\end{align*}
Note $\ve{U_1^\top U_2} = \ve{U_2^\top U_1}$. Finally, we bound $U_2^\top U_2$.
Note $U_2$ is part of an infinite Hankel matrix with entries $u(T'+1)^\top,\ldots, u(T'+L)^\top$. The multiplication polynomial is 
\begin{align*}
    P_{u,2}(\om) &= 
    e^{-2\pi i (T-L)\om} \sumz t{L-1} u(T-L+t)^\top e^{-2\pi i t\om}.
\end{align*}
The real part is $\const\pa{\sum_{t=T-L}^{T-1} \cos^2 (-2\pi t\om)}^{1/2}$-sub-gaussian and the imaginary part is\\ $\const\pa{\sum_{t=T-L}^{T-1} \sin^2 (-2\pi t\om)}^{1/2}$-sub-gaussian for some constant $\const$. Hence 
\begin{align*}
    \Pj\pa{\an{e_j, P_{u,2}(\om)}^2 \le \const L \log\prc\de} &\ge 1-\de.
\end{align*}
Using this for $j=1,\ldots, d_u$, replacing $\de\mapsfrom \fc{\de}{d_u}$, and using a union bound gives
\begin{align*}
    \Pj\pa{\ve{P_{u,2}(\om)}^2 \le \const Ld_u \log\pf d\de} &\ge 1-\de.
\end{align*}
% By Bernstein's inequality for sub-exponential random variables,
% \begin{align*}
%     \Pj\pa{}
% \end{align*}
Now for $N\ge 4\pi L$, using another union bound gives
\begin{align*}
    \ve{U_2^\top U_2} 
    &\le \pa{\sup_{\om\in [0,1]}\ve{P_{u, 2}(\om)}}^2 
     \le \pa{2\max_{\om\in \{0, \rc N,\ldots\}}\ve{P_{u, 2}(\om)}}^2
    \le \const Ld_u \log \pf {d_uL}\de .
\end{align*}
with probability $\ge 1-\de$. Putting all the bounds together with~\eqref{e:conc-decomp} gives the theorem.
\end{proof}

\begin{proof}[Proof of Lemma \ref{l:simin2}]
For large enough $\const_2$, for $T\ge \const_2 L d_u \log\pf{T}{\de}$, we have that by Theorem~\ref{t:conc} that $\ve{U^\top U - TI_{d_u}}\le \fc T2$, so $\si_{\min}(U^\top U)\ge \fc T2$.

Finally, note that for large enough $\const_1$, $T\ge \const_1 L d_u \log \pf {Ld_u}{\de}$ implies $T\ge \const_2 L d_u \log\pf{T}{\de}$.
\end{proof}

% \begin{lem}\label{l:noise-f-liy}
% In our given setup, for $T\ge CL\pa{n+\log\pf L\ep}$, we have $\ve{\wh{f-f^*}}_{\iy} = O\pa{\sfc LT\pa{\sqrt{\log (T)}+\sqrt{\log\pf{2}{\ep}}}}$ with probability $\ge 1-\ep$.
% \hlnote{write this for MIMO}
% \end{lem}

We show here a bound similar to Theorem~\ref{t:conc} that will be useful to us later.
\begin{lem}\label{l:UTW}
There is a constant $\const$ such that the following holds. 
Suppose $u(t)$, $0\le t<T$ are independent, zero-mean, and $K_u$-sub-gaussian, and similarly for $w(t)$ with constant $K_w$. 
%Let $u(t),w(t)\sim N(0,I_{d_u})$ and 
Let $U=\Toep_{T\times (L+1)}((u(t)^\top)_{t\ge 0})$, $W = \Toep_{T\times (L+1)}((w(t)^\top)_{t\ge 0})$. 
Then for $0< \de \le \rc 2$, $T\ge \Tmin$, 
\begin{align*}
    \ve{U^\top W} 
    &\le \const K_uK_w
    \pa{
        Ld_u \log \pf{T}{\delta} + \sqrt{TLd_u \log \pf{T}{\delta}}
    }
\end{align*}
with probability at least $1-\de$. 
\end{lem}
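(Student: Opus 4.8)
The plan is to mirror, almost verbatim, the bound on the cross-term $U_1^\top U_2$ from the proof of Theorem~\ref{t:conc}. Indeed $U^\top W$ is structurally the same object: it is the cross-product of two Toeplitz matrices built from \emph{independent} sub-gaussian sequences, exactly as $U_1^\top U_2$ was built from the disjoint (hence independent) early and late windows $u^{(1)},u^{(2)}$ of a single sequence. The one simplification is that there is no diagonal $TI$ term to subtract here, so we directly bound an operator norm rather than a deviation.

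First I would realize $U^\top W$ as a finite submatrix of an infinite block Toeplitz matrix whose $(j,k)$ block is $\sum_t (\sT^j u)(t)(\sT^k w)(t)^\top$, supported on $|j-k|\le L$. By Lemma~\ref{l:mp}, passing to the Fourier basis replaces this by its multiplication polynomial
\[
P_{u,w}(\om) = \begin{pmatrix}u(0) & \cdots & u(T-1)\end{pmatrix} M \colthree{w(0)^\top}{\vdots}{w(T-1)^\top}, \qquad M_{jk} = e^{2\pi i(j-k)\om}\one_{|j-k|\le L},
\]
and gives $\ve{U^\top W}\le \ve{P_{u,w}}_{\cal H_\iy}$. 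Since $e^{2\pi i\om L}P_{u,w}(\om)$ is a polynomial of degree $\le 2L$ in $e^{2\pi i\om}$, Lemma~\ref{l:interp} lets me replace the supremum over $\om$ by a maximum over the grid $\{0,\rc N,\ldots\}$ with $N=\ce{8\pi L}$, at the cost of a constant factor.

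For a fixed grid point $\om$, I would bound $\ve{P_{u,w}(\om)}$ by an $\ep$-net argument over unit vectors $v$ and $v'$ in the two input spaces: the bilinear form is $v^\top P_{u,w}(\om)v' = a^\top M b$, where $a_j=\an{v,u(j)}$ and $b_k=\an{v',w(k)}$ are \emph{independent} $K_u$- and $K_w$-sub-gaussian vectors. Conditioning on $b$, the quantity $a^\top(Mb)$ is sub-gaussian with proxy $K_u^2\ve{Mb}^2$, and $\ve{Mb}^2=b^\top M^\top M b$ is controlled by Hanson--Wright (Theorem~\ref{t:hw}); using the same norm estimates as before, namely $\ve{M}_{\sfF}^2\le (2L+1)T$ and $\ve{M}\le 2L+1$, this yields a tail of the form $2\exp\ba{-c\min\bc{\fc{s^2}{K_u^2K_w^2 TL}, \fc{s}{K_uK_w L}}}$ for $|a^\top M b|$. (Equivalently one invokes a decoupled Hanson--Wright inequality directly, which is available precisely because $a$ and $b$ are independent.) A crude Markov bound on $\ve{P_{u,w}(\om)}$ handles the net-discretization error, as in the $3\ep\ve{P_u(\om)}$ step, after choosing $\ep$ polynomially small. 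Finally I would union-bound over the product of the two $\ep$-nets and over the $O(L)$ grid points, and collect terms; the hypothesis $T\ge \Tmin$ absorbs the logarithmic factors into the stated rate, the dimension factor entering through the nets on the two spheres.

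The only genuinely new ingredient relative to Theorem~\ref{t:conc} is the bilinear (rather than quadratic) tail bound, and even that is standard once one notes that $a$ and $b$ are independent, so the decoupling is free. I therefore expect no real obstacle beyond bookkeeping of the two sub-gaussian constants $K_u,K_w$ and tracking which input dimension appears in the net size; the conceptual point is simply that the independence of $u$ and $w$ plays the same role that the disjoint time windows played for the $U_1^\top U_2$ term.
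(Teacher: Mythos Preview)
Your approach is essentially the paper's, and the key ingredients---passing to the multiplication polynomial, Lemma~\ref{l:interp} to discretize in $\om$, an $\ep$-net over unit vectors, and a decoupled Hanson--Wright bound---are exactly what the paper uses. The paper in fact invokes Corollary~\ref{c:hw} directly for the bilinear tail $a^\top M b$, which is cleaner than the conditioning route you outline first (and which you also mention as the alternative).

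There is, however, one genuine technical oversight. You assert that $U^\top W$ is a finite submatrix of the infinite block Toeplitz matrix with $(j,k)$ block $\sum_{t\in\Z}(\sT^ju)(t)(\sT^kw)(t)^\top\one_{|j-k|\le L}$. This is not quite true: the finite product $(U^\top W)_{jk}=\sum_{i=0}^{T-1}u(i-j)w(i-k)^\top$ omits the summands with $T\le i\le T-1+\min(j,k)$ that appear in the infinite sum (these are nonzero because $u,w$ are supported on $[0,T-1]$, not $[0,T-L-1]$). Thus $U^\top W$ differs from the Toeplitz submatrix by a boundary correction involving only the last $L$ entries of $u$ and $w$. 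The paper handles this by first decomposing $U=U_1+U_2$ and $W=W_1+W_2$ exactly as in~\eqref{e:U} and then bounding the four cross terms $U_a^\top W_b$ separately; each of these genuinely \emph{is} a submatrix of an infinite block Toeplitz matrix, so your argument applies verbatim to each piece. The patch is routine---alternatively you could bound the boundary correction directly, as it is of the same order as a $U_2^\top W_2$ term---but it is a step that cannot be skipped.
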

\begin{proof}
By scaling we may assume $K_u=K_w=1$. 
Decompose $U=U_1+U_2$ and $W=W_1+W_2$ as in~\eqref{e:U}. Let $S_a=\{0,\ldots, T-L-1\}$ and $S_b=\{T-L,\ldots, T-1\}$. We have
\begin{align*}
U^\top W &= \sum_{a,b\in \{0,1\}} U_a^\top W_b.
\end{align*}
%Let $\mathsf T$ be the shift operator on functions: $\mathsf Tf(t) = f(t-1)$. 
Let $u^{(a)}=u\one_{S_a}$ and similarly define $w^{(b)}=w\one_{S_b}$. 
Then the $(j,k)$th block of $U_a^\top W_b$ is 
\begin{align*}
(U_a^\top W_b)_{jk} &= \sum_{t\in \Z} (\sT^j u^{(a)})(t) (\sT^k w^{(b)})(t)^\top .
\end{align*}
This is part of the infinite block Toeplitz matrix in $\R^{(\Z\times d_u)\times (\Z\times d_u)}$ defined by
\begin{align*}
Z_{jk} &= \sum_{t\in \Z} (\sT^j u^{(a)})(t) (\sT^k w^{(b)})(t)^\top \one_{|j-k|\le L} ,
\end{align*}
with multiplication polynomial
\begin{align*}
P_{u,ab}(\om) &= \sum_{|j-k|\le L} u^{(a)}(j) w^{(b)}(k) e^{2\pi i (j-k) \om}\\
&=  \begin{pmatrix}u(0) & \cdots & u(T-1)\end{pmatrix} M \colthree{w(0)^\top}{\cdots}{w(T-1)^\top}\\
\text{where } M_{jk} &= e^{2\pi i (j-k)} \one_{j\in S_a}\one_{k\in S_b} \one_{|j-k|\le L}.
\end{align*}
We calculate that $\ve{M}_F^2 \le T(2L+1)$ and $\ve{M}\le 2L+1$ so the same argument as in Theorem~\ref{t:conc} (but using the version of Hanson-Wright given by Corollary~\ref{c:hw}) gives that
\begin{align*}
\ve{U^\top W} &\le  \const K_uK_w
    \pa{
        Ld_u \log \pf{T}{\delta} + \sqrt{TLd_u \log \pf{T}{\delta}}
    }.
\end{align*}
\end{proof}

%\subsection{Upper bounding maximum of Fourier transform (MIMO setting)}
\subsection{Upper bound in $\cal H_\iy$ norm}
\label{s:max-mimo}

The following Lemma~\ref{l:fft-iy-mimo} generalizes the results of~\cite{tu2017non} to the MIMO setting.
%We now generalize to the MIMO setting. This generalizes the results of~\cite{tu2017non} to the MIMO setting. 
To get the right dimension dependence, we will use the concentration bound for covariance given by Theorem~\ref{t:covar-conc}.

%First we give the generalization of Lemma~\ref{l:fft-iy}.

\begin{lem}\label{l:fft-iy-mimo}
There is a constant $\const$ such that the following holds.
Suppose that  $\eta(0),\ldots, \eta(T-1)\sim N(0,\Si)$ are iid, $\Phi\in \R^{(L+1)d_u\times T}$, and $E(0),\ldots, E(L)\in \R^{d_y\times d_u}$ are such that
\begin{align*}
\colthree{E(0)^\top}{\vdots}{E(L)^\top} = M_E = \Phi M_\eta\in \R^{(L+1)d_u\times d_y}
\end{align*}
For any $0<\de\le \rc 2$ and $-1\le a< L-L'$, 
\begin{align*}
    \Pj\pa{\ve{E \one_{[a+1,a+L']}}_{\cal H_\iy} \le {\const\sqrt{L'}\ve{\Si}^{\rc 2}\ve{\Phi}} \sqrt{d_y+d_u+\log\pf{L'}{\de}}} &\ge 1-\de.
    %{4\pi L'+1}
\end{align*}
\end{lem}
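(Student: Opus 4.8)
The plan is to fix a single frequency, express the windowed Fourier transform of $E$ as a small Gaussian matrix, and then pass from a single frequency to the $\cal H_\iy$ norm through the polynomial interpolation bound of Lemma~\ref{l:interp}.

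First I would set up the reduction at a fixed $\om\in[0,1]$. Writing $a_\om(t) = e^{-2\pi i\om t}\one_{[a+1,a+L']}(t)$ and $A_\om = (a_\om(0)I_{d_u},\ldots,a_\om(L)I_{d_u})\in\C^{d_u\times(L+1)d_u}$, the definition of $M_E$ gives $A_\om M_E = \sum_{t} a_\om(t) E(t)^\top = \wh E(\om)^\top$, where I abbreviate $\wh E(\om):=\cal Z[E\one_{[a+1,a+L']}](e^{2\pi i\om})$. Since $M_E=\Phi M_\eta$, this reads $\wh E(\om)^\top = \Psi_\om M_\eta$ with $\Psi_\om := A_\om\Phi\in\C^{d_u\times T}$. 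The key quantitative input is the exact computation $A_\om A_\om^H = \big(\sum_t |a_\om(t)|^2\big)I_{d_u} = L'\,I_{d_u}$, so $\ve{A_\om}=\sqrt{L'}$ and hence $\ve{\Psi_\om}\le\sqrt{L'}\,\ve{\Phi}$.

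Next I would bound $\ve{\wh E(\om)}$ for fixed $\om$. Since $\wh E(\om)^\top = \Psi_\om M_\eta$ is a linear image of the Gaussian matrix $M_\eta$ (whose rows are iid $N(0,\Si)$), for unit vectors $u\in\C^{d_y}$, $v\in\C^{d_u}$ the quadratic form $u^H\wh E(\om)v = \sum_{s=0}^{T-1}(\Psi_\om^\top v)_s\,(u^H\eta(s))$ is a centered complex Gaussian with variance at most $\ve{\Psi_\om^\top v}^2\ve{\Si}\le L'\ve{\Phi}^2\ve{\Si}$, the $\eta(s)$ being independent. Taking $\ep$-nets of the unit spheres of $\C^{d_u}$ and $\C^{d_y}$ (equivalently, applying Theorem~\ref{t:covar-conc} to $\Psi_\om G$, where $M_\eta=G\Si^{1/2}$ and $\Psi_\om G$ has independent columns with covariance $\Psi_\om\Psi_\om^H$, then peeling off $\ve{\Si}^{1/2}$) yields
\[
\ve{\wh E(\om)} \le C\sqrt{L'}\,\ve{\Si}^{1/2}\ve{\Phi}\big(\sqrt{d_u}+\sqrt{d_y}+\sqrt{\log(1/\de')}\big)
\]
with probability at least $1-\de'$; here the rank bound $\rank(\Psi_\om)\le d_u$ is exactly what produces $\sqrt{d_u}+\sqrt{d_y}$ instead of a spurious $\sqrt T$.

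Finally I would upgrade this to the supremum over the circle. For fixed $u,v$ the scalar function $Q_{u,v}(\om)=u^H\wh E(\om)v$ equals $e^{-2\pi i\om(a+L')}$ times a polynomial of degree $L'-1$ in $e^{2\pi i\om}$, so Lemma~\ref{l:interp} with $r=L'$ and $N=\ce{8\pi L'}$ gives $\ve{Q_{u,v}}_{\cal H_\iy}\le\tfrac32\max_{0\le j<N}|Q_{u,v}(e^{2\pi i j/N})|$, the interpolation constant being independent of $u,v$. Taking the supremum over unit $u,v$ commutes with both sides and turns $|Q_{u,v}|$ into the operator norm, so $\ve{E\one_{[a+1,a+L']}}_{\cal H_\iy}=\sup_{u,v}\ve{Q_{u,v}}_{\cal H_\iy}\le\tfrac32\max_{j}\ve{\wh E(j/N)}$. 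Applying the fixed-frequency bound with $\de'=\de/N$, a union bound over the $N=O(L')$ grid points, and the estimates $\log(N/\de)=O(\log(L'/\de))$ and $\sqrt{d_u}+\sqrt{d_y}+\sqrt{\log(L'/\de)}\le\sqrt3\,\sqrt{d_u+d_y+\log(L'/\de)}$ gives the claim. The main obstacle is the second step: extracting the correct $\sqrt{d_u}+\sqrt{d_y}$ dimension dependence from the $T$-dimensional Gaussian $M_\eta$, which hinges on the identity $\ve{A_\om}=\sqrt{L'}$ together with $\rank(\Psi_\om)\le d_u$; the grid reduction is then routine given Lemma~\ref{l:interp}.
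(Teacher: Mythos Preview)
Your proposal is correct and follows essentially the same route as the paper: write the windowed Fourier transform at a fixed frequency as $(\phi_\om^H\otimes I_{d_u})\Phi M_\eta$, use $\ve{\phi_\om}=\sqrt{L'}$ together with the covariance concentration of Theorem~\ref{t:covar-conc} to get the $\sqrt{d_u+d_y+\log(1/\de')}$ bound at each frequency, and then pass to the full $\cal H_\iy$ norm via Lemma~\ref{l:interp} on an $O(L')$-point grid with a union bound. The only cosmetic differences are that the paper first reduces to $\Si=I$ (you instead peel off $\ve{\Si}^{1/2}$ at the end) and that you freeze both unit vectors $u,v$ before invoking Lemma~\ref{l:interp}, which is in fact the cleaner way to apply that scalar-polynomial bound.
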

\begin{proof}
First, by considering
\begin{align*}
M_E \Si^{-1/2} &= \Phi(M_\eta \Si^{-1/2}),
\end{align*}
we may reduce to the case where $\eta(t)\sim N(0,I_{d_u})$ are iid, i.e., all entries of $M_\eta$ are iid standard gaussian. 

Let $M_\om = (E\one_{[a+1,a+L']})^{\wedge}(\om)\in \C^{d_y\times d_u}$. 
Note that
\begin{align*}
M_\om &= (\phi_\om^H \ot I_{d_u})M_E\\
\text{where }\phi_\om &= (e^{2\pi i k \om}\one_{a+1\le k\le a+L'})_{0\le k\le L}\in \R^{L+1}
\end{align*}
as a column vector.  Because the columns of $M_\eta$ are independent and distributed as $N(0,I_T)$, the columns $m_j$ of $M_\om$ are independent. 
To bound $M_\om$, it suffices to bound $M_\om M_\om^H = \sumo j{d_y} m_j m_j^H$. 
Note that
\begin{align*}
    \ve{\E m_j m_j^H} &= \ve{(\phi_\om^H\ot I_{d_u})
    \Phi\Phi^\top (\phi_\om \ot I_{d_u})}\le L'\ve{\Phi}^2. 
\end{align*}
Let $\Phi'=(\phi_\om^H \ot I_{d_u})
    \Phi\Phi^\top (\phi_\om \ot I_{d_u})$. 
By Theorem~\ref{t:covar-conc}\footnote{The theorem is stated for real matrices, but we can view the matrix as acting on a real vector space of twice the dimension.}, 
\begin{align*}
    \Pj\pa{\ve{\rc{d_y} \sumo j{d_y} m_jm_j^H - \Phi'} \le \const L'\ve{\Phi'}^2\pa{\sfc{d_u+s}{d_y} + \fc{d_u+s}{d_y}}}&\ge 1-2e^{-s}\\
    \implies 
    \Pj\pa{\ve{\rc{d_y} \sumo j{d_y} m_jm_j^H} \le \const L'\ve{\Phi'}^2\pa{1 + \fc{d_u+\log\pf 2\de}{d_y}}}&\ge 1-\de
\end{align*}
by taking $u = \log\pf 2\de$. Multiplying by $d_y$ gives 
%Taking $s=\log \pf{2}{\de}$ and multiplying by $d_y$ gives
\begin{align*}
    \Pj\pa{
         \ve{M_\om M_\om^H} \le \const L'\ve{\Phi'}^2
        \pa{d_y+d_u + \log \pf 2\de}
    }&\ge 1-\de.
\end{align*}
Replacing $\de$ by $\fc{\de}{N}$, taking the square root, and taking a union bound gives
\begin{align}\label{e:fft-iy-mimo-1}
    \Pj\pa{\max_{\om \in \{0,\rc N,\ldots, \fc{N-1}N\}} \ve{M_\om} \le \const \sqrt{L'}\ve{\Phi'}^{\rc 2}
        \sqrt{d_y+d_u + \log \pf 2\de}} &\ge 1-\de.
\end{align}
Finally, we note that by Lemma~\ref{l:interp}, for $N=\ce{4\pi L'}$, 
\begin{align*}
    \ve{E\one_{[a+1,a+L']}}_{\cal H_\iy}
    &= \sup_{\om\in [0,1]} \ve{\wh{E\one_{[a+1,a+L']}}(\om)}
    = \sup_{\ve{v}_2=1} \sup_{\om \in [0,1]} 
    \ve{\wh{E\one_{[a+1,a+L']}}(\om)v}\\
    &\le \sup_{\ve{v}_2\le 1}
    \max_{\om \in \{0,\rc N,\ldots, \fc{N-1}N\}} 
    2\ve{\wh{E\one_{[a+1,a+L']}}(\om)v} \\
    &\le 
    2\max_{\om\in \{0,\rc N, \ldots, \fc{N-1}N\}} \ve{\wh{E\one_{[a+1,a+L']}}(\om)}.
\end{align*}
%Then apply the argument to ``off-grid" $r\in \{0,\rc N, \ldots,\fc{N-1}{N}\}$ to obtain the second result.
Combining this with~\eqref{e:fft-iy-mimo-1} gives the result.
\end{proof}

Finally we can put everything together to obtain a  $\cal H_\iy$ error bound for linear regression.

%This gives the following result, by following the same argument as Lemma~\ref{l:noise-f-liy}.

% \begin{lem}
% \label{l:noise-f-liy-mimo}\label{l:linreg-mimo}
% There are $\const_1,\const_2$ such that the following hold.
% Suppose $y=F^**u + G^**\xi + \eta$ where $u(t)\sim N(0,I_{d_u})$, $\xi(t) \sim N(0, \Si_x)$, $\eta(t) \sim N(0,\Si_y)$ for $0\le t<T$, and $\Supp(F^*), \Supp(G^*)\subeq [0,\iy)$. 
% Let $F=\amin_{F\in \{0,\ldots, L\} \to \R^{d_y\times d_u}} \sumz t{T-1} |y(t) - (F*u)(t)|^2$, 
% %(the minimizer of~\eqref{e:f-loss}). 
% %\R^{L\times d_y\times d_u}
% %Let $F_{\mathrm{trunc}}^*:= F^*\one_{[1,L]}$ and 
% $\Ga_t = (G^*(t),\ldots, G^*(0))\in \R^{d_y\times (t+1)d}$, and
% $\ep_{\mathrm{trunc}} = \ve{F^*\one_{[L+1,\iy)}}_{\cal H_\iy}$. 
% For $0<\de\le \rc2$, $T\ge \const_1L\pa{d_u+\log \pf{L}{\de}}$, $-1\le a<L-L'$,
% \begin{align*}
%     %\Pj\Bigg(
%     &\ve{(F-F^*)\one_{[a+1,a+L']}}_{\cal H_\iy} \\
%     &
%     \le %\left.
%     \const_2 \ba{
%     \sfc 1T \pa{\sqrt{\ve{\Si_y} L' \pa{d_u+d_y + \log \pf{L'}{\de}}} + 
%     L \pa{\ve{\Ga_L}_{\sfF} + \ve{\Si_x}\ve{\Ga_L}\sqrt{\log \pf 2\de}}}
%     %\sfc{(\ve{\Si_y} + \ve{\Si_x}\ve{G^*}_{\cal H_\iy}^2)L'(d_u+d_y)\log \pf {L'}\de}{T} 
%     + \ep_{\mathrm{trunc}}
%     \sqrt d
%     %\pa{\sqrt d + \fc{\sqrt{\log \prc{\de}}}{\sqrt T}}
%     }
%     %\right) \le \de.
% \end{align*}
% with probability at least $1-\de$.
% %\fixme{also write mimo case}
% \end{lem}

\linreg*

\begin{proof}
By~\eqref{e:diffF}, using the notation defined there, 
\begin{align*}
M_F - M_{F^*} &= 
\ub{(U^\top U)^{-1} U^\top M_{\eta}}{=:E_1} + 
\ub{(U^\top U)^{-1} U^\top W M_{G^*}}{=:E_2} + \ub{(U^\top U)^{-1} M_e}{=:E_3}.
\end{align*}
We wish to bound $\ve{(F-F^*)\one_{[a+1,a+L']}}_{\cal H_\iy} = \sup_{\om\in [0,1]}[(\phi_\om^H \ot I_{d_u})(M_F - M_{F^*})]$. 

We bound the contributions from $E_1,E_2,E_3$. First note that $\ve{(U^\top U)^{-1} U^\top}_2 = \ve{(U^\top U)^{-1}}_2^{1/2}$ and by Lemma~\ref{l:simin2}, for $T\ge \Tmin$, with probability $1-\de$, $\ve{(U^\top U)^{-1}}\le \fc 2T$. Call this event $\cal A$.
\begin{enumerate}
    \item 
    Under the event $\cal A$, by Lemma~\ref{l:fft-iy-mimo}, 
    \begin{align*}
        \Pj\pa{\sup_{\om\in [0,1]}[(\phi_\om^H \ot I_{d_u})E_1] \le  \const\sqrt{L'\ve{\Si_y}} \sfc 1T \sqrt{d_y+d_u+\log \pf {L'}\de}}\ge 1-\de.
    \end{align*}
    \item 
    By Lemma~\ref{l:UTW} and the condition on $T$,
    \begin{align*}
\ve{U^\top W} &\le  \ve{U^\top (W \Si_x^{-1/2})}\ve{\Si_x}^{1/2}
\le \sqrt{T Ld_u \log\pf{Ld_u}{\de}\ve{\Si_x}} 
\end{align*}
    Under $\cal A$, 
%  using Lemma~\ref{l:UTW} and the condition on $T$,  
we bound the spectral norm (for all $\om$)
    \begin{align*}
\ve{(\phi_\om^H \ot I_{d_u})(U^\top U)^{-1}U^\top W M_{G^*}}
&\le 
\ve{\phi_\om^H \ot I_{d_u}}\ve{(U^\top U)^{-1}}\ve{U^\top W}\ve{ M_{G^*}}\\
&\le \sqrt{L'} \fc{\const}{\sqrt T} \sqrt{T Ld_u \log\pf{Ld_u}{\de}\ve{\Si_x}} \ve{M_{G^*}}\\
&\le \const\sqrt{\ve{\Si_x}L'Ld_u \log\pf{Ld_u}{\de}}\ve{M_{G^*}}.
\end{align*}
    \item 
    Let $\ep_{\mathrm{trunc},F} = \ve{F^*\one_{[L+1,\iy)}}_{\cal H_\iy}$ and similarly define $\ep_{\mathrm{trunc},G}$. 
    We bound the last term by noting
\begin{align*}
    \ve{(F^*\one_{[L+1,\iy)})*u}_2 
    &\le \ve{(F^*\one_{[L+1,\iy)})^{\wedge}\cdot \wh u}_2 
    \le \ep_{\mathrm{trunc},F}\ve{u}_2
\end{align*}
and similarly $\ve{(G^*\one_{[L+1,\iy)})*\xi}_2 \le  \ep_{\mathrm{trunc},G}\ve{\xi}_2$. 
%For $v\sim N(0,I_{Td_u})$, $\Pj(\ve{v} \ge \sqrt{Td_u}+ C\sqrt{\log \prc\de})\le \de$ %by~\cite{laurent2000adaptive},
%and 
%$\Pj(\ve{\xi} \ge \sqrt{T}\ve{\Si_y^{1/2}}_F+ C\sqrt{\ve{\Si_x}\log \prc\de})\le \de$
We have $\Pj\pa{\ve{\eta_{0:T-1}} > \sqrt{Td_u}+ \const\sqrt{\log \prc\de}}\le \de$ %by~\cite{laurent2000adaptive},
and 
$\Pj\pa{\ve{\xi_{0:T-1}} > \sqrt{T}\ve{\Si_x^{1/2}}_{\sfF}+ \const\sqrt{\ve{\Si_x}\log \prc\de}}\le \de$
by Theorem~\ref{t:subg-conc}, so conditioned on event $\cal A$, 
\begin{align*}
\sup_{\om\in [0,1]}[(\phi_\om^H \ot I_{d_u})E_3]
%\ve{(G_3\one_{a+1,a+L'})^{\wedge}}_2
%&\ge \ve{[I_{d_y}\ot (UU^\top)^{-1} U^\top] \vect_{312}[((F^*\one_{[L+1,\iy)})*u)_{1:L}]}_2 \\
&\le \const\sfc{1}T
\Bigg(\ep_{\mathrm{trunc},F} \pa{\sqrt{Td_u} + \sqrt{\log \prc\de}}
\\
&\quad +
\ep_{\mathrm{trunc},G} \pa{\sqrt{T}\ve{\Si_x^{1/2}}_\sfF + \sqrt{\ve{\Si_x}\log \prc\de}}
\Bigg)
    % &\Pj\pa{
    %     \ve{(G_3\one_{a+1,a+L'})^{\wedge}}_2\ge  \sfc{8}T \ep_{\mathrm{trunc}} \pa{\sqrt{Td} + \sqrt{2\log \prc\de}}
    % }\\
    % &\le    \Pj\pa{
    %     \ve{(UU^\top)^{-1} U^\top ((Ff^*\one_{[L+1,\iy)})*u)_{1:L}}_2 \ge \sfc{8}T \ep_{\mathrm{trunc}} \pa{\sqrt{Td} + \sqrt{2\log \prc\de}}
    % }
    % \le \de.
\end{align*}
with probability at least $1-\de$. By the condition on $T$, the first terms are dominant.
\end{enumerate}
Finish by replacing $\de$ by $\fc \de4$ and using the triangle inequality and a union bound.
% By the triangle inequality and a union bound, and replacing $\de$ by $\fc \de 3$, we get with probability $\ge 1-\de$ that
% \begin{align*}
%     %\Pj\pa{
%         \ve{[\one_{[a+1,a+L']}(f-f_{\mathrm{trunc}}^*)]^{\wedge}}_\iy
%         \ge \fc{\sqrt 8(\si_y + \si_x \ve{g^*}_{\cal H_\iy})}{\sqrt T} \sqrt{\log \pf{3L'}{\de}} + {\sqrt 8\ve{H}_\iy\ep_{\mathrm{trunc}}} \pa{1 + \sfc{2\log\pf 3\de}T}.
%     %} &\le \de.
% \end{align*}
% Simplifying gives the first result. 
\end{proof}

\section{Improved rates for learning system matrices}
\label{s:sys-id}\label{s:param}
In this section, we combine Lemma~\ref{l:noise-f-liy2} and Lemma~\ref{l:hankel-hiy} with bounds in~\cite{oymak2018non} to give improved bounds for learning the system matrices.\footnote{References to~\cite{oymak2018non} are for the arXiv version \url{https://arxiv.org/abs/1806.05722}.}
%Although the ingredients have appeared in the literature, we have not seen it clearly written up...

%\begin{thm}\label{t:param}
%%Suppose $L$ is a power of 2. 
%Consider the setting of Problem~\ref{p:main}, suppose $T\ge \Tmin$, and let $F^*$ be the impulse response. 
%Let   
%$\ep_{\mathrm{trunc}} = \ve{F^*\one_{[L+1,\iy)}}_{\cal H_\iy}$ and 
%\begin{align*}
%    \ep_G &= \sfc{L}{T} \pa{\sqrt{\ve{\Si_y}\pa{d_y+d_u+\log \pf L\de}} + \ep_{\mathrm{trunc}}\pa{\sqrt T + \sqrt{\log \prc{\de}}}}
%\end{align*}
%Let $H=\Hankel_{L\times L}(F^*)$. 
%Then with probability at least $1-\de$, the Ho-Kalman algorithm (Algorithm 1 in~\cite{oymak2018non} with $T_1=L, T_2=L-1$) returns $\wh A, \wh B, \wh C$ such that there exists a unitary matrix $W$ satisfying
%\begin{align*}
%    \max\bc{
%    \ve{C-\wh C W}_F,
%    \ve{B-W^H\wh B}_F}
%    &= O(\sqrt d\cdot \ep_G)\\
%    \ve{A-W^H\wh A W}_F &=
%    O\pa{\rc{\si_{\min}(H)}\sqrt d \cdot \ep_G \cdot \pa{\fc{\ve{F^*}_{\cal H_\iy}}{\si_{\min}(H)}+1}}.
%\end{align*}
%\end{thm}
As $L$ can be chosen to make $\ep_{\mathrm{trunc}}$ negligible, this gives $\sfc{Ld}{T}$ rates, however, with factors depending on the minimum eigenvalue of $H$.

We first re-do some of the bounds in~\cite{oymak2018non} more carefully, using their notation.
\begin{lem}[{\cite[Lemma B.1]{oymak2018non}}]
\label{l:hk-perturb1}
Suppose $\si_{\min}(L) \ge 2\ve{L-\wh L}$ where $\si_{\min}(L)$ is the smallest nonzero singular value of $L$. Let rank-$d$ matrices $L, \wh L$ have singular value decompositions $U\Si V^*$ and $\wh U \wh \Si \wh V^*$. There exists a $n\times n$ unitary matrix $W$ so that 
\begin{align*}
    \ve{U\Si^{1/2} - \wh U \wh \Si^{1/2} W}_F^2 + \ve{V\Si^{1/2} - \wh V \wh \Si^{1/2} W}_F^2  &\le %\fc{4d}{\sqrt 2-1}
     \fc{4(\sqrt 2+1)d\ve{L-\wh L}^2}{\si_{\min}(L)}.
\end{align*}
\end{lem}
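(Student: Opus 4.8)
The plan is to package the two factors $U\Si^{1/2}$ and $V\Si^{1/2}$ (and the hatted ones) into a single stacked matrix, thereby reducing the claim to a perturbation bound for a \emph{balanced} symmetric factorization. Concretely, set
$$P = \coltwo{U\Si^{1/2}}{V\Si^{1/2}}\in\C^{(m+n)\times d},\qquad \wh P = \coltwo{\wh U\wh\Si^{1/2}}{\wh V\wh\Si^{1/2}},$$
so that the quantity to be bounded is exactly $\min_W\ve{P-\wh P W}_\sfF^2$ over $d\times d$ unitaries $W$ (the shared rotation $W$ in the statement is precisely what makes the stacking the right move; the ``$n\times n$'' should be read as $d\times d$, as forced by the dimensions). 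Two elementary observations drive everything: first, $P^HP=\Si^{1/2}U^HU\Si^{1/2}+\Si^{1/2}V^HV\Si^{1/2}=2\Si$, so $P$ has rank $d$ and $\si_{\min}(P)^2=2\si_{\min}(L)$; second, by Weyl's inequality the hypothesis $\si_{\min}(L)\ge 2\ve{L-\wh L}$ forces $\si_d(\wh L)\ge \si_{\min}(L)-\ve{L-\wh L}>0$, so $\wh L$ genuinely has rank $d$ and $\wh P$ is well defined.

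First I would invoke the standard balanced-factorization (Procrustes) perturbation inequality: for rank-$d$ matrices $P,\wh P$,
$$\min_W\ve{P-\wh P W}_\sfF^2\ \le\ \frac{1}{2(\sqrt2-1)\,\si_{\min}(P)^2}\,\ve{PP^H-\wh P\wh P^H}_\sfF^2 .$$
This controls the distance between two factorizations by the distance between their Gram matrices, and it is exactly where the constant $\frac{1}{2(\sqrt2-1)}=\frac{\sqrt2+1}{2}$ enters; it is unconditional once both factors have rank $d$. Substituting $\si_{\min}(P)^2=2\si_{\min}(L)$ turns the right-hand side into $\frac{\sqrt2+1}{4\,\si_{\min}(L)}\ve{PP^H-\wh P\wh P^H}_\sfF^2$.

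It then remains to bound $\ve{PP^H-\wh P\wh P^H}_\sfF$. Here one uses the block form
$$PP^H=\matt{(LL^H)^{1/2}}{L}{L^H}{(L^HL)^{1/2}},$$
valid because $U\Si U^H=(LL^H)^{1/2}$ and $V\Si V^H=(L^HL)^{1/2}$ (identically for $\wh P\wh P^H$, which is legitimate since $\wh L$ has rank $d$). Thus the off-diagonal blocks of $PP^H-\wh P\wh P^H$ are $L-\wh L$, while the diagonal blocks are $(LL^H)^{1/2}-(\wh L\wh L^H)^{1/2}$ and $(L^HL)^{1/2}-(\wh L^H\wh L)^{1/2}$. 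The diagonal blocks are handled by the Araki--Yamagami/Kittaneh inequality for the Frobenius norm, $\ve{(AA^H)^{1/2}-(BB^H)^{1/2}}_\sfF\le\sqrt2\,\ve{A-B}_\sfF$, giving $\ve{PP^H-\wh P\wh P^H}_\sfF^2\le 6\ve{L-\wh L}_\sfF^2$. Finally, since $L-\wh L$ has rank at most $2d$, $\ve{L-\wh L}_\sfF^2\le 2d\ve{L-\wh L}^2$, so the Gram perturbation is at most $12d\ve{L-\wh L}^2$; combining with the previous display yields $\min_W\ve{P-\wh P W}_\sfF^2\le \frac{3(\sqrt2+1)d\ve{L-\wh L}^2}{\si_{\min}(L)}$, comfortably within the claimed $\frac{4(\sqrt2+1)d\ve{L-\wh L}^2}{\si_{\min}(L)}$.

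I expect the main obstacle to be the balanced-factorization inequality in the second step: extracting the sharp $\frac{1}{2(\sqrt2-1)}$ constant requires choosing $W$ as the unitary polar factor of $\wh P^HP$ and carefully comparing $\min_W\ve{P-\wh P W}_\sfF^2=\ve{P}_\sfF^2+\ve{\wh P}_\sfF^2-2\ve{\wh P^HP}_*$ against $\ve{PP^H-\wh P\wh P^H}_\sfF^2$; the rank-$d$ hypothesis (secured above via Weyl) is essential so that $\wh P^HP$ has exactly $d$ singular values. The absolute-value perturbation bound is the only other nonroutine ingredient, and replacing its constant $\sqrt2$ by any fixed constant merely inflates the numerical factor while preserving the $d\ve{L-\wh L}^2/\si_{\min}(L)$ form.
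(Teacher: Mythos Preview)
Your proposal is correct and follows essentially the same route as the paper. The paper's proof is in fact just a two-line pointer: it cites the intermediate Frobenius-norm inequality $\ve{U\Si^{1/2}-\wh U\wh\Si^{1/2}W}_\sfF^2+\ve{V\Si^{1/2}-\wh V\wh\Si^{1/2}W}_\sfF^2\le \frac{2}{\sqrt2-1}\cdot\frac{\ve{L-\wh L}_\sfF^2}{\si_{\min}(L)}$ from the proof of \cite[Lemma B.1]{oymak2018non}, and then applies the rank bound $\ve{L-\wh L}_\sfF^2\le 2d\ve{L-\wh L}^2$. What you have written is precisely a self-contained derivation of that cited inequality (the dilation $P=\smatt{U\Si^{1/2}}{}{V\Si^{1/2}}{}$, the Procrustes/balanced-factorization bound, and the block decomposition of $PP^H$ with the absolute-value perturbation estimate), followed by the same rank step; your constant $3(\sqrt2+1)$ is even slightly sharper than the stated $4(\sqrt2+1)$.
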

\begin{proof}
This inequality is given as an intermediate inequality in the proof of Lemma B.1 in~\cite{oymak2018non}. The first line gives that the LHS is $\le \fc{2}{\sqrt2-1} \fc{\ve{L-\wh L}_F^2}{\si_{\min}(L)}$. Then use the fact that $\rank(L-\wh L)\le 2d$, so $\ve{L-\wh L}_F^2\le 2d \ve{L-\wh L}^2$.
\end{proof}

Using this instead of Lemma B.1 gives the following for Theorem 4.3 of~\cite{oymak2018non}.
\begin{lem}\label{l:hk-perturb2}
    Let $\wh A, \wh B, \wh C$ be the state-space realization corresponding to the output of Ho-Kalman with input $\wh G$. Suppose the system is observable and controllable. 
    Let $L=\Hankel_{L\times (L-1)}(F^*)$.
    Suppose $\si_{\min}(L)>0$ and  the low-rank approximation from Ho-Kalman  satisfies $\ve{L-\wh L}\le \si_{\min}(L)/2$. Then there exists a unitary matrix $W\in \R^{d\times d}$ such that 
    \begin{align*}
        \ve{B - W^{-1}\wh B}_F , \ve{C-\wh C W}_F &\le  \fc{2\sqrt{(\sqrt2+1)d}\ve{L-\wh L}}{\sqrt{\si_{\min}(L)}}\\
        \ve{A-W^{-1}\wh A W}_F &\le 
        \frac{2\sqrt d}{\si_{\min}(L)}
        \pa{
            \fc{2\sqrt{\sqrt2+1}\ve{L-\wh L}}{\si_{\min}(L)}\pa{2\ve{H^+} + \ve{H^+-\wh H^+}} + \ve{H^+-\wh H^+}
        }.
    \end{align*}
\end{lem}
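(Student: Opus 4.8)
The plan is to follow the Ho--Kalman stability argument of~\cite{oymak2018non} (their Theorem~4.3), substituting Lemma~\ref{l:hk-perturb1} for their cruder factor bound; this substitution is exactly what upgrades the dimension dependence to $\sqrt d$. Recall the Ho--Kalman construction: writing the rank-$d$ SVD of $L$ as $U\Si V^*$, set the observability and controllability factors $O = U\Si^{1/2}$ and $Q = \Si^{1/2}V^*$, so that $C$ is the first block row of $O$, $B$ is the first block column of $Q$, $L = OQ$, and the once-shifted Hankel matrix factors as $H^+ = OAQ$. Since $O$ has full column rank and $Q$ full row rank this gives $A = O^\dagger H^+ Q^\dagger$, and likewise $\wh A = \wh O^\dagger \wh H^+ \wh Q^\dagger$ for the estimated quantities. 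The hypothesis $\ve{L-\wh L}\le \si_{\min}(L)/2$ ensures both that Lemma~\ref{l:hk-perturb1} applies and, via Weyl's inequality, that $\si_d(\wh L)\ge \si_{\min}(L)/2$, so that $\ve{O^\dagger}_2 = \ve{Q^\dagger}_2 = \si_{\min}(L)^{-1/2}$ while $\ve{\wh O^\dagger}_2, \ve{\wh Q^\dagger}_2 \le \sqrt2\,\si_{\min}(L)^{-1/2}$. I fix the unitary $W$ produced by Lemma~\ref{l:hk-perturb1}, abbreviate $O_W = \wh O W$, $Q_W = W^{-1}\wh Q$, $\Delta_O = \ve{O - O_W}_\sfF$, $\Delta_Q = \ve{Q - Q_W}_\sfF$, and note that taking square roots in Lemma~\ref{l:hk-perturb1} gives $\Delta_O, \Delta_Q \le 2\sqrt{(\sqrt2+1)d}\,\ve{L-\wh L}/\sqrt{\si_{\min}(L)}$.

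The $B$ and $C$ bounds are immediate: $C$ and $\wh C W$ are the first block rows of $O$ and $O_W = \wh O W$, and passing to a block row cannot increase the Frobenius norm, so $\ve{C - \wh C W}_\sfF \le \Delta_O$; symmetrically $\ve{B - W^{-1}\wh B}_\sfF \le \Delta_Q$ (transposing the $V$-factor bound and using $W^{-1}=W^*$). Both are then at most the claimed quantity.

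For $A$, I first record the conjugation identity $W^{-1}\wh A W = O_W^\dagger \wh H^+ Q_W^\dagger$, which follows from $(\wh O W)^\dagger = W^{-1}\wh O^\dagger$ and $(W^{-1}\wh Q)^\dagger = \wh Q^\dagger W$ for unitary $W$ and full-rank $\wh O,\wh Q$, and then telescope
\begin{align*}
A - W^{-1}\wh A W &= \underbrace{(O^\dagger - O_W^\dagger)H^+ Q^\dagger}_{T_1} + \underbrace{O_W^\dagger(H^+ - \wh H^+)Q^\dagger}_{T_2} + \underbrace{O_W^\dagger \wh H^+(Q^\dagger - Q_W^\dagger)}_{T_3}.
\end{align*}
For $T_1,T_3$ I keep the Frobenius norm on the pseudoinverse difference, using the full-rank Wedin-type estimate $\ve{M^\dagger - N^\dagger}_\sfF \lesssim \ve{M^\dagger}_2\ve{N^\dagger}_2\ve{M-N}_\sfF$ from~\cite{oymak2018non}, so that $\ve{O^\dagger - O_W^\dagger}_\sfF \lesssim \si_{\min}(L)^{-1}\Delta_O$ and likewise for $Q$; combined with $\ve{O^\dagger}_2,\ve{Q^\dagger}_2,\ve{O_W^\dagger}_2\lesssim \si_{\min}(L)^{-1/2}$ and $\ve{\wh H^+}\le \ve{H^+}+\ve{H^+ - \wh H^+}$ this gives $\ve{T_1}_\sfF \lesssim \si_{\min}(L)^{-3/2}\Delta_O\ve{H^+}$ and $\ve{T_3}_\sfF \lesssim \si_{\min}(L)^{-3/2}\Delta_Q(\ve{H^+}+\ve{H^+-\wh H^+})$. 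For $T_2$, already a $d\times d$ matrix, I bound $\ve{T_2}_\sfF\le\sqrt d\,\ve{T_2}_2 \le \sqrt d\,\ve{O_W^\dagger}_2\ve{H^+-\wh H^+}\ve{Q^\dagger}_2 \lesssim \sqrt d\,\si_{\min}(L)^{-1}\ve{H^+-\wh H^+}$. Substituting $\Delta_O,\Delta_Q$ and collecting the $\ve{H^+}$-terms from $T_1,T_3$ into $2\ve{H^+} + \ve{H^+-\wh H^+}$ reproduces the stated bound up to the explicit constants.

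The main obstacle is the $A$ estimate, and specifically the bookkeeping of dimension factors: bounding $T_1,T_3$ naively through $\ve{\cdot}_\sfF\le\sqrt d\,\ve{\cdot}_2$ and then inserting the Frobenius bound $\Delta_O$ on $\ve{O-O_W}_\sfF$ would produce a spurious extra factor $\sqrt d$ and only recover the weaker Oymak--Ozay rate. The improvement hinges on routing the single available $\sqrt d$ (from Lemma~\ref{l:hk-perturb1}) through the Frobenius norm of the pseudoinverse difference in $T_1,T_3$, and through $\ve{T_2}_\sfF\le\sqrt d\,\ve{T_2}_2$ in $T_2$, never both at once. The remaining care is to verify the conjugation identity and to check that the perturbation hypotheses (equal ranks, acute perturbation) hold under $\ve{L-\wh L}\le\si_{\min}(L)/2$, which is what legitimizes the Wedin-type pseudoinverse bound.
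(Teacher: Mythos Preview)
Your proposal is correct and follows essentially the same route as the paper. Both arguments retrace the Ho--Kalman stability proof of~\cite{oymak2018non} (their Theorem~4.3), inserting Lemma~\ref{l:hk-perturb1} into the Wedin-type Frobenius pseudoinverse bound $\ve{O^\dagger - O_W^\dagger}_\sfF \le \ve{O-O_W}_\sfF \max\{\ve{O^\dagger}^2,\ve{O_W^\dagger}^2\}$ so that the single $\sqrt d$ enters through $\Delta_O,\Delta_Q$ in $T_1,T_3$ and through $\ve{T_2}_\sfF\le \sqrt d\,\ve{T_2}_2$ in the middle term; the paper simply points to equations (B.2)--(B.7) of~\cite{oymak2018non} rather than writing out the telescoping decomposition you give, but the content is the same.
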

\begin{proof}
We refer the reader to \cite{oymak2018non} for the details and just note the differences.
As in \cite{oymak2018non}, the first inequality follows from taking the square root in Lemma~\ref{l:hk-perturb1}.

For the second inequality, using Lemma~\ref{l:hk-perturb1}, the inequality for $\ve{O^\dagger - X^\dagger}_{\sfF}$ becomes instead 
\begin{align*}
    \ve{O^\dagger - X^\dagger}_{\sfF} &\le\ve{O-X}_{\sfF} \max\bc{\ve{X^\dagger}^2,\ve{O^\dagger}^2}\\
    &\le \fc{2\sqrt{(\sqrt 2+1)d}\ve{L-\wh L}_{\sfF}}{\si_{\min}(L)^{1/2}} \cdot \fc{2}{\si_{\min}(L)}\le \fc{4\sqrt{(\sqrt 2+1)d}\ve{L-\wh L}_F}{\si_{\min}(L)^{3/2}}
\end{align*}
so that (B.3)--(B.7) become
\begin{align*}
\ve{(O^\dagger - X^\dagger) H^+Q^\dagger}_{\sfF}
&\le \fc{4\sqrt{(2+\sqrt 2)d}\ve{L-\wh L}_{\sfF}}{\si_{\min}(L)^2}\ve{H^+}\\
\ve{X^{\dagger}\wh H^+ (Q^\dagger - Y^\dagger)}_{\sfF}
&\le \fc{4\sqrt{(2+\sqrt 2)d}\ve{L-\wh L}_{\sfF}}{\si_{\min}(L)^2}\pa{\ve{H^+}+\ve{H^+-\wh H^+}}.
\end{align*}
Substituting in (B.2) then gives the theorem.
\end{proof}
\begin{proof}[Proof of Theorem~\ref{t:param}]
Lemma~\ref{l:linreg2} gives a bound on $\ve{H-\wh H}$. 
By \cite[Appendix B.4]{oymak2018non},
\begin{align*}
    \ve{H^+-\wh H^+} &\le \ve{H-\wh H}, &
    \ve{H^+}&\le \ve{H}, &
    \ve{L-\wh L} &\le 2\ve{H-\wh H}.
\end{align*}
By Lemma~\ref{l:hankel-hiy}, $\ve{H}\le \ve{F^*}_{\cal H_\iy}= \ve{\Phi_{\cal D}}_{\cal H_\iy}$.
Plugging this into Lemma~\ref{l:hk-perturb2} gives the theorem.
%and combining Lemma~\ref{l:hk-perturb2} with Lemma~\ref{l:linreg2} and~\ref{l:hankel-hiy}.
\end{proof}
\section{Lower bound}

\label{a:lb}

We prove a minimax lower bound which shows that the rates in Theorem~\ref{t:main} are optimal up to logarithmic factors. In fact, the lower bound already holds in the simple case where there is no time dependency, in which case the problem reduces to a low-rank linear regression problem.
\begin{thm}\label{t:lb}
Given the setting in Problem~\ref{p:main} with $\Si_\xi = \si^2 I_{d_y}$ and $\Si_\eta=O$, if $\max\{d_u,d_y\}\ge d$, then with 
probability $\ge 1-\de$ over the randomness of the $u(t)$, the minimax error for estimating $F^*$ (even when $A=O$, $B=O$, and $C=O$) in squared Frobenius norm is at least $\fc{\max\{d_u,d_y\}d\si^2}{T+ 2\pa{\sqrt{T\log\pf d\de} + \log\pf d\de }}$, i.e., any estimator $\wh F(y)$ satisfies
\begin{align*}
\max_{\cal D: A=O,B=O,C=O}
\ve{\wh F(y) - F^*}_{\sfF}^2 \ge 
\fc{\max\{d_u,d_y\}d\si^2}{T+ 2\pa{\sqrt{T\log\pf d\de} + \log\pf d\de }}.
%\si\sfc{\max\{d_u,d_y\}d}{T+ 2\pa{\sqrt{T\log\pf r\de} + \log\pf r\de }}.
\end{align*}
\end{thm}
Note alternatively we can restrict to $D=O$ and $A=O$; the only difference is that the impulse response is now nonzero only at $t=1$ rather than $t=0$.

We will use the following theorem. 
\begin{thm}[{\cite[Theorem 2.5]{candes2011tight}}]\label{t:rank-r-lb}
Suppose that $\cal A:\R^{n_1\times n_2}\to \R^m$ is a linear transformation such that 
\[\ve{\cal A(X)}_2^2 \le K \ve{X}_{\sfF}^2\]
for all matrices $X$ of rank at most $r$. 
Suppose that $n=\max\{n_1,n_2\}\ge r$. 
Given the observation $y=\cal A(X)+z$ where $z\sim N(0,\si^2I_m)$, the minimax error in squared Frobenius norm over $\set{X\in \R^{n_1\times n_2}}{\rank(X)\le r}$ is at least $\fc{nr\si^2}{K}$, i.e., any estimator $\wh M(y)$ satisfies
\[
\sup_{M:\rank(M)\le r} \E\ve{\wh M(y) - M}_{\sfF}^2 \ge \fc{nr\si^2}K.
\] 
\end{thm}
Note that \cite[Theorem 2.5]{candes2011tight} assumed a stronger condition---a matrix RIP (restricted isometry property)---but the lower bound coming from RIP is not used in the proof.

\begin{proof}[Proof of Theorem~\ref{t:lb}]
We are in the setting of Theorem~\ref{t:rank-r-lb} where $\cal A(D) = (Du(0);\ldots; Du(T-1))$. Note that by change of basis, we may assume $D$ is diagonal. If $\si_1,\ldots, \si_d$ are the singular values of $D$, then $\ve{\cal A(D)}_2^2= \sumz t{T-1}\ve{Du(t)}^2$ is distributed as $\sumo id \si_i^2 X_i$ where each $X_i\sim \chi_T^2$ is a $\chi^2$-random variable with $T$ degrees of freedom. By the tail bound in~\cite{laurent2000adaptive},
$\Pj(X_i\ge T+2(\sqrt{Tu}+u))\le e^{-u}$. Letting $u=\log \pf d\de$ and $Q=T+2(\sqrt{Tu}+u)$, we get that 
\begin{align*}
\Pj\pa{\max_{1\le i\le d} X_i \ge Q}\le d \Pj(X_1\ge Q)\le \de.
\end{align*}
Then with probability $\ge 1-\de$, the event $\max_{1\le i\le d} X_i < Q$ holds, so 
\[
\ve{\cal A(D)}_2^2 =\sumo id \si_i^2 X_i \le \pa{\sumo id \si_i^2}\max X_i \le \ve{D}_{\sfF}^2Q.
\]
Then by Theorem~\ref{t:rank-r-lb}, for any estimate $\wh D(y)$, $\sup_{D:\rank(D)\le d} \E\ve{\wh D(y) - D}_{\sfF}^2 \ge \fc{\max\{d_u,d_y\}d\si^2}Q$, as desired.
\end{proof}
\section{Concentration bounds}

In this section, we collect some useful concentration results.

\begin{df}\label{d:subg}
A $\R$-valued random variable $X$ is sub-gaussian with constant $K_X$ if $$\ve{X}_{\psi_2} := \inf\set{s>0}{\E[\exp((x/s)^2)-1]\le 1}\le K_X,$$ and a $\R^n$-valued random variable $X$ is sub-gaussian with constant $K_X$ if
$$
\ve{X}_{\psi_2}:= \sup_{v\in \bS^{n-1}}\ve{\an{X,x}}_{\psi_2}\le K_X.
$$
\end{df}

\begin{thm}[{\cite[Ex. 4.7.3]{vershynin2018high}}]
%[{\cite[4.7.3]{vershynin2018high}}]
\label{t:covar-conc}
There is a constant $\const$ such that the following holds.
Let $X_1,\ldots, X_m$ be iid copies of a random vector $X$ in $\R^n$ satisfying the sub-gaussian bound for any $x$,
\begin{align*}
    \ve{\an{X,x}}_{\psi_2} &\le K_X \E[\an{X,x}^2].
\end{align*}
Let $\Si_m = \rc m \sumo im X_iX_i^\top$. Then for any $s\ge 0$,
\begin{align*}
    \Pj\pa{\ve{\Si_m - \Si} \le \const K_X^2\pa{\sfc{n+s}m + \fc{n+s}m}\ve{\Si}} &\ge 1-2e^{-s}.
\end{align*}
\end{thm}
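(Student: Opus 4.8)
The plan is to prove this as the standard sub-gaussian covariance estimation bound, via whitening, a net argument, and Bernstein's inequality. First I would reduce to the isotropic case. Assuming $\Si$ is invertible---the degenerate case follows verbatim after restricting to the range of $\Si$, where $n$ is replaced by $\rank \Si \le n$---set $Y_i = \Si^{-1/2}X_i$, so that each $Y_i$ is isotropic, $\E Y_iY_i^\top = I_n$, and still obeys the sub-gaussian hypothesis with the same constant $K$ since $\an{Y_i,x} = \an{X_i, \Si^{-1/2}x}$. Writing $\Si_m = \Si^{1/2}\pa{\rc m \sumo im Y_iY_i^\top}\Si^{1/2}$ and $\Si = \Si^{1/2}I_n\Si^{1/2}$ gives
\[
\ve{\Si_m - \Si} \le \ve{\Si}\,\ve{\rc m \sumo im Y_iY_i^\top - I_n},
\]
so it suffices to bound the isotropic deviation $\ve{\rc m\sumo im Y_iY_i^\top - I_n}$ by $CK^2\pa{\sfc{n+s}m + \fc{n+s}m}$.

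Second I would discretize the operator norm. Fixing a $\rc4$-net $\cal N$ of the sphere $\bS^{n-1}$ with $|\cal N|\le 9^n$ (Cor. 4.2.13 of~\cite{vershynin2018high}), the standard symmetric-matrix net bound gives
\[
\ve{\rc m\sumo im Y_iY_i^\top - I_n} \le 2\max_{x\in \cal N}\ab{\rc m\sumo im\pa{\an{Y_i,x}^2-1}}.
\]
For each fixed $x\in \bS^{n-1}$ the scalars $\an{Y_i,x}$ are independent, unit-variance by isotropy, and sub-gaussian with $\psi_2$-norm $\lesssim K$; hence the summands $\an{Y_i,x}^2-1$ are independent, centered, and sub-exponential with $\psi_1$-norm $\lesssim K^2$, using $\ve{\an{Y_i,x}^2}_{\psi_1} = \ve{\an{Y_i,x}}_{\psi_2}^2$.

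Finally I would apply Bernstein's inequality for sums of independent centered sub-exponential variables to each fixed direction:
\[
\Pj\pa{\ab{\rc m\sumo im\pa{\an{Y_i,x}^2-1}}\ge t}\le 2\exp\ba{-cm\min\pa{\fc{t^2}{K^4},\fc t{K^2}}}.
\]
Taking a union bound over the $9^n$ net points and choosing $t = C'K^2\pa{\sfc{n+s}m + \fc{n+s}m}$ makes the exponent dominate the net entropy $n\log 9$ with $s$ to spare, so the failure probability is at most $2e^{-s}$; combining with the previous two displays and absorbing constants yields the claim. I expect the main obstacle to be the second-moment/tail step: confirming that $\an{Y_i,x}^2$ is genuinely sub-exponential with the correct $\psi_1$ constant, and balancing the two regimes of Bernstein's inequality so that the sub-gaussian tail produces the $\sfc{n+s}m$ term and the sub-exponential tail the $\fc{n+s}m$ term, each with the right power of $K$.
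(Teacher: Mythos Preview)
The paper does not supply its own proof of this statement; it is quoted verbatim as a known result, with a citation to~\cite[Ex.~4.7.3]{vershynin2018high}. Your proposal is the standard argument that exercise is asking for---whitening to the isotropic case, a $\tfrac14$-net on the sphere, and Bernstein's inequality for the sub-exponential variables $\an{Y_i,x}^2-1$---so it is correct and there is nothing in the paper to compare it against.
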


\begin{thm}[{Hanson-Wright inequality, \cite[Theorem 1.1]{rudelson2013hanson}}]
\label{t:hw}
There is a constant $c>0$ such that the following holds. 
Let $A\in \C^{n\times n}$ be a matrix, and let $v\in \R^n$ be a random vector with independent, mean-0, $K_v$-sub-gaussian entries. Then for every $s\ge 0$, 
\begin{align*}
    \Pj(|v^\top A v - \E v^\top A v|> s)
    &\le 2\exp\ba{-c\cdot \min\bc{\fc{s^2}{K_v^4\ve{A}_{\sfF}^2}, \fc{s}{K_v^2\ve{A}}}}.
\end{align*}
\end{thm}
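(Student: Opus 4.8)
The plan is to follow the standard exponential-moment route, reducing the quadratic form to a decoupled Gaussian chaos whose moment generating function can be evaluated in closed form. By homogeneity I may rescale and assume $K=1$. Writing $v^\top A v - \E v^\top A v = \sum_i A_{ii}(v_i^2 - \E v_i^2) + \sum_{i\ne j} A_{ij} v_i v_j$, I would bound the diagonal and off-diagonal contributions separately and then combine them by a union bound (absorbing the factor of $2$ into the constant $c$).

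For the diagonal part, the summands $A_{ii}(v_i^2 - \E v_i^2)$ are independent, mean zero, and sub-exponential, since $v_i$ is sub-gaussian forces $v_i^2$ to be sub-exponential. Bernstein's inequality for sums of independent sub-exponentials then yields a tail of the form $\exp\ba{-c\min\bc{\fc{s^2}{\sum_i A_{ii}^2}, \fc{s}{\max_i |A_{ii}|}}}$, and since $\sum_i A_{ii}^2 \le \ve{A}_\sfF^2$ and $\max_i|A_{ii}| \le \ve{A}$, this already has exactly the desired shape.

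For the off-diagonal part $X = \sum_{i\ne j} A_{ij} v_i v_j$ I would control $\E \exp(\lambda X)$ and then apply the Chernoff bound. The three key steps are: (i) a decoupling inequality (de la Pe\~na--Montgomery-Smith) that replaces $v_i v_j$ by $v_i v_j'$ for an independent copy $v'$, giving $\E\exp(\lambda X) \le \E\exp(4\lambda \sum_{i,j} A_{ij} v_i v_j')$; (ii) a comparison step dominating the sub-gaussian moment generating function coordinatewise by the corresponding Gaussian one, so that $v,v'$ may be replaced by standard Gaussian vectors $g,g'$ at the cost of constants; and (iii) an explicit computation of the Gaussian chaos: integrating out $g'$ first gives $\E_g \exp\pa{C\lambda^2 \ve{A^\top g}_2^2}$, and diagonalizing $A A^\top$ with singular values $\si_k$ turns this into $\prod_k (1 - C\lambda^2 \si_k^2)^{-1/2}$, which is $\le \exp\pa{C'\lambda^2 \ve{A}_\sfF^2}$ in the range $|\lambda| \le c/\ve{A}$ (using $-\log(1-x)\le 2x$ for $x\le \rc2$ together with $\sum_k \si_k^2 = \ve{A}_\sfF^2$ and $\max_k \si_k = \ve{A}$).

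Finally I would optimize the resulting Chernoff bound $\Pj(|X|>s) \le \exp\pa{-\lambda s + C'\lambda^2 \ve{A}_\sfF^2}$ over $\lambda \in [0, c/\ve{A}]$: taking $\lambda = s/(2C'\ve{A}_\sfF^2)$ in the sub-gaussian regime and $\lambda = c/\ve{A}$ in the sub-exponential regime recovers the two-regime bound $\exp\ba{-c\min\bc{\fc{s^2}{\ve{A}_\sfF^2}, \fc{s}{\ve{A}}}}$. The main obstacle is steps (ii)--(iii): controlling the moment generating function of a genuinely quadratic, hence non-linear, function of sub-gaussian variables. This is precisely where decoupling and the reduction to Gaussians are indispensable, since without passing to the Gaussian chaos the MGF cannot be computed explicitly, and the correct dependence on the \emph{full} spectrum of $A$ (encoded by $\ve{A}_\sfF$ and $\ve{A}$ simultaneously, rather than either norm alone) would be lost.
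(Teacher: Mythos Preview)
The paper does not prove this statement at all: it is stated in the appendix as a quoted result from~\cite{rudelson2013hanson} and used as a black box in the proof of Theorem~\ref{t:conc}. So there is no ``paper's proof'' to compare against.

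That said, your outline is correct and is essentially the Rudelson--Vershynin argument itself: split off the diagonal and handle it by Bernstein for sub-exponentials; for the off-diagonal chaos, decouple, compare to Gaussians, and evaluate the Gaussian MGF via diagonalization of $AA^\top$. One caveat on step~(ii): the passage from sub-gaussian to Gaussian is not literally a ``coordinatewise MGF domination'' (the sub-gaussian MGF need not be pointwise below the Gaussian one for small arguments); rather, Rudelson--Vershynin condition on $v$, observe that $\sum_{i,j}A_{ij}v_iv_j'$ is sub-gaussian in $v'$ with variance proxy $C\ve{A^\top v}_2^2$, replace $v'$ by a Gaussian $g'$ with that variance, then repeat the argument in the other coordinate after integrating $g'$ out (which produces the quadratic $\ve{A^\top v}_2^2$ in $v$, so a second comparison-to-Gaussian and the explicit product formula finish the job). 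With that refinement your plan goes through and yields exactly the cited theorem.
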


\begin{cor}\label{c:hw}
There is a constant $c>0$ such that the following holds. 
Let $A\in \C^{m\times n}$ be a matrix, and let $v\in \R^m,w\in \R^n$ be random vectors with independent, mean-0, $K_v$ and $K_w$ sub-gaussian entries, respectively. 
Then for every $s\ge 0$, 
\begin{align*}
    \Pj(|v^\top A w|> s)
    &\le 2\exp\ba{-c\cdot \min\bc{\fc{s^2}{K_v^2K_w^2\ve{A}_{\sfF}^2}, \fc{s}{K_vK_w\ve{A}}}}.
\end{align*}
\end{cor}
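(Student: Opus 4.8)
The plan is to reduce this bilinear tail bound to the quadratic Hanson--Wright inequality (Theorem~\ref{t:hw}) via the standard dilation trick. First I would rescale to eliminate the two distinct sub-gaussian constants: replacing $v$ by $v/K_v$ and $w$ by $w/K_w$ makes all entries $1$-sub-gaussian, and since $v^\top A w = K_vK_w (v/K_v)^\top A (w/K_w)$, it suffices to prove the bound in the case $K_v=K_w=1$ and then substitute $s\mapsto s/(K_vK_w)$. So assume henceforth that all entries are $1$-sub-gaussian.

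Next I would package the bilinear form as a quadratic form. Set $z = \coltwo{v}{w}\in \R^{m+n}$ and let $B = \matt{0}{A}{0}{0}\in \C^{(m+n)\times(m+n)}$. A direct computation gives $z^\top B z = v^\top A w$, since $Bz = \coltwo{Aw}{0}$. Because the entries of $v$ and $w$ are jointly independent and mean-$0$, and because the diagonal blocks of $B$ vanish, we have $\E[z^\top B z] = \sum_k B_{kk}\E[z_k^2] = 0$, so the centering in Theorem~\ref{t:hw} is vacuous. It then remains only to identify the two relevant matrix norms: $\ve{B}_{\sfF} = \ve{A}_{\sfF}$ is immediate from the block structure, and $\ve{B} = \ve{A}$ follows because $\ve{Bz} = \ve{Aw} \le \ve{A}\ve{z}$ with equality attained by placing the top right-singular vector of $A$ in the second block of $z$.

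Finally I would apply Theorem~\ref{t:hw} to $z$ and $B$ with $K=1$, which yields $\Pj(|z^\top B z| > t) \le 2\exp\ba{-c\min\bc{t^2/\ve{A}_{\sfF}^2,\, t/\ve{A}}}$; taking $t = s/(K_vK_w)$ and undoing the rescaling produces exactly the stated bound. I do not anticipate a serious obstacle here, as the argument is a routine dilation; the only points needing a little care are the handling of the two different sub-gaussian constants (resolved by the initial rescaling) and the verification that the mean vanishes (guaranteed by the off-diagonal block form of $B$). Note in particular that Theorem~\ref{t:hw} is already stated for a complex matrix acting on a real random vector, so no extra work is required to accommodate the complex-valued $A$.
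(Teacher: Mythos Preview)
Your proposal is correct and matches the paper's approach exactly: the paper's proof is the one-line instruction to apply Theorem~\ref{t:hw} with $v\mapsfrom \coltwo{v}{w}$ and $A\mapsfrom \matt{O}{A}{O}{O}$. Your explicit rescaling to reduce to $K_v=K_w=1$ is a useful detail the paper leaves implicit, since stacking without rescaling would only yield the weaker constant $\max(K_v,K_w)^2$ rather than $K_vK_w$.
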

\begin{proof}
Apply Theorem~\ref{t:hw} for $v\mapsfrom \coltwo vw$ and $A\mapsfrom \matt OAOO$.
\end{proof}

\begin{thm}[{Sub-gaussian concentration, \cite[Theorem 2.1]{rudelson2013hanson}}]
\label{t:subg-conc}
There is a constant $c>0$ such that the following holds. 
Let $A\in \C^{m\times n}$ be a matrix, and let $v\in \R^n$ be a random vector with independent, mean-0, $K_v$-sub-gaussian entries. Then for every $s\ge 0$,
\begin{align*}
\Pj\ba{\ab{\ve{Av}_2 - \ve{A}_{\sfF}}>s} &\le 2\exp\pa{-\fc{cs^2}{K_v^4\ve{A}^2}}. 
\end{align*}
\end{thm}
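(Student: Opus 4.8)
The plan is to derive this from the Hanson--Wright inequality (Theorem~\ref{t:hw}) applied to the quadratic form $\ve{Av}_2^2 = v^\top (A^H A) v$, together with an elementary transfer from concentration of $\ve{Av}_2^2$ to concentration of $\ve{Av}_2$ itself. As in the cited statement, I take the entries of $v$ to have unit variance, so that $\E\ve{Av}_2^2 = \Tr(A^H A) = \ve{A}_{\sfF}^2$ (the off-diagonal contributions vanish in expectation since the entries of $v$ are independent and mean zero).

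First I would set $Z = \ve{Av}_2 \ge 0$ and $\mu = \ve{A}_{\sfF}$ and apply Theorem~\ref{t:hw} to the Hermitian matrix $B = A^H A \in \C^{n\times n}$ and the real sub-gaussian vector $v$. Using $\ve{B} = \ve{A}^2$ and the submultiplicative bound $\ve{B}_{\sfF} = \ve{A^H A}_{\sfF} \le \ve{A}\,\ve{A}_{\sfF}$, this gives, for all $t\ge 0$,
\begin{align*}
\Pj\pa{\ab{Z^2 - \mu^2} > t} \le 2\exp\ba{-c\min\bc{\fc{t^2}{K^4\ve{A}^2\ve{A}_{\sfF}^2},\ \fc{t}{K^2\ve{A}^2}}}.
\end{align*}

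The transfer step rests on the elementary fact that for $Z,\mu\ge 0$ we have $\ab{Z^2-\mu^2} = \ab{Z-\mu}\,(Z+\mu) \ge \ab{Z-\mu}\max\bc{\ab{Z-\mu},\mu}$, since $Z+\mu\ge\max\bc{\ab{Z-\mu},\mu}$. Consequently $\bc{\ab{Z-\mu}>s}\subseteq\bc{\ab{Z^2-\mu^2}>\max\bc{s^2,\,s\mu}}$, and I would plug $t_0 = \max\bc{s^2,\,s\ve{A}_{\sfF}}$ into the display above. Estimating the two terms of the minimum separately---using $t_0\ge s\ve{A}_{\sfF}$ on the Frobenius (quadratic) term and $t_0\ge s^2$ on the operator (linear) term---each is bounded below by a constant multiple of $s^2/(K^4\ve{A}^2)$; for the linear term one also uses that a mean-zero, unit-variance variable forces $K\gtrsim 1$, so $K^2\le C K^4$. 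This yields $\Pj\ba{\ab{\ve{Av}_2 - \ve{A}_{\sfF}}>s}\le 2\exp\pa{-cs^2/(K^4\ve{A}^2)}$, as claimed.

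I expect the only real subtlety to lie in this last step. Hanson--Wright gives only a \emph{sub-exponential} tail for the large deviations of $\ve{Av}_2^2$, and one must verify that these nonetheless translate into a genuinely \emph{sub-gaussian} tail for $\ve{Av}_2$. The mechanism is that a deviation of size $s$ in $\ve{Av}_2$ forces a deviation of size at least $s^2$ in $\ve{Av}_2^2$, which exactly compensates for the weaker (linear-in-$t$) exponent in the sub-exponential regime and reinstates an $s^2$ in the final exponent.
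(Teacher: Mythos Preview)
The paper does not supply its own proof of this theorem; it is stated in the appendix as a quoted result from \cite[Theorem~2.1]{rudelson2013hanson}. Your argument is correct and in fact follows the same route as the proof in that reference: apply Hanson--Wright (Theorem~\ref{t:hw}) to $v^\top(A^HA)v$, bound $\ve{A^HA}_{\sfF}\le\ve{A}\,\ve{A}_{\sfF}$ and $\ve{A^HA}=\ve{A}^2$, and then transfer from concentration of $\ve{Av}_2^2$ to concentration of $\ve{Av}_2$ via $|Z^2-\mu^2|\ge|Z-\mu|\max\{|Z-\mu|,\mu\}$. You are also right to supply the unit-variance hypothesis on the entries of $v$: the paper's restatement omits it, but it is present in the cited theorem and is needed both for $\E\ve{Av}_2^2=\ve{A}_{\sfF}^2$ and for the lower bound $K\gtrsim 1$ that lets you absorb $K^2$ into $K^4$ in the sub-exponential branch.
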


\section{Experimental details}

\label{s:experimental-details}

We generate random LDS's as follows. For $B$ and $C$, the rows or columns are chosen to be a random set of orthonormal vectors (depending on whether they have more rows or columns). %Given a parameter $\la_{\max}$, $A$ is set to be $\la_{\max} S^{-1}A'S$ for $A'$ a random orthogonal matrix and $S$ a matrix with gaussian entries.
For $A$, the entries are first chosen to be iid standard gaussians, and then $A$ is re-normalized so that its maximum eigenvalue has absolute value $\la_{\max}$. 
%The diagonal matrix in the SVD's of $B$ and $C$ are dropped, so that either their rows or columns are a subset of random orthogonal vectors. 
For simplicity, we take $D=O$.

We make a slight modification of Algorithm~\ref{a:svd} which triples the size at each iteration instead.
For $L=3^a$, we estimate a finite impulse response of length $4\cdot 3^{a-1}-1$. Then, for the multiscale SVD algorithm, at the $k$th scale ($k\ge 1$), we consider the rank-$d$ SVD of $\Hankel_{\ell\times \ell}(F)$, where $\ell = 2\cdot 3^{k-1}$, and use this SVD to estimate the $F(t)$ for $3^{k-1} < t \le 2^k$. For the single-scale SVD, we estimate all $F(t)$ from the rank-$d$ SVD of $\Hankel_{\ell\times \ell}(F)$, where $\ell = 2\cdot 3^{a-1}$.

The plots show the error $\ve{F^*\one_{[1,L]}-F}_2$, where $F$ is the estimated impulse response on $[1,L]$, averaged over 10 randomly generated LDS's, as a function of the time $T$ elapsed, for the following settings of parameters:
\begin{enumerate}
\item
$d=d_u=d_y=1$, $L=27$, $\la_{\max}=0.9$.
\item
$d=d_u=d_y=3$, $L=27$, $\la_{\max}=0.9$.
\item
$d=d_u=d_y=3$, $L=81$, $\la_{\max}=0.95$.
\item
$d=5$, $d_u=d_y=3$, $L=81$, $\la_{\max}=0.95$.
\item
$d=10$, $d_u=d_y=3$, $L=81$, $\la_{\max}=0.95$.
\end{enumerate}

The code was written in Julia, and is available at \url{https://github.com/holdenlee/hankel-svd}.

\begin{figure}
\includegraphics[width=0.5\linewidth]{lds-1-27.png} \includegraphics[width=0.5\linewidth]{lds-3-27.png} \\
\includegraphics[width=0.5\linewidth]{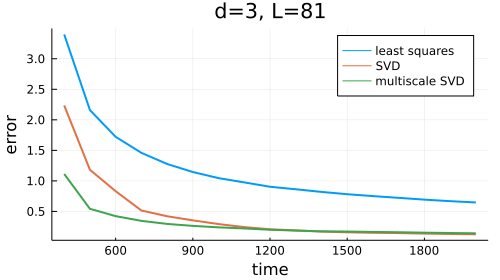}
\includegraphics[width=0.5\linewidth]{lds-5-81.png} \\
\includegraphics[width=0.5\linewidth]{lds-10-81.png}
\end{figure}

\end{document}